\newtheorem{theorem}{Theorem}
\newtheorem{assumption}{Assumption}
\newtheorem{proposition}{Proposition}
\newtheorem{remark}{Remark}
\newcommand{\iid}{ \stackrel{i.i.d.}{\sim}}
\def\Holder{{H\"{o}lder}}
\def\Cramer{Cram\'{e}r}
\DeclareMathOperator*{\argmin}{arg\,min}
\def\Holder{{H\"{o}lder}}
\def\Cramer{Cram\'{e}r}
\newcommand{\op}{\mathrm{o}_{p}}
\newcommand{\pa}{\mathrm{\pa}}
\newcommand{\epol}{\pi^\mathrm{e}}
\begin{document}

\title{The Adaptive Doubly Robust Estimator\\
for Policy Evaluation in Adaptive Experiments\\
and a Paradox Concerning Logging Policy}

\author[1]{Masahiro Kato\thanks{Corresponding author: masahiro\_kato@cyberagent.co.jp}\ \ }
\author[1]{Shota Yasui}
\author[2]{Kenichiro McAlinn}

\affil[1]{CyberAgent Inc.}
\affil[2]{Temple University}

\maketitle

\begin{abstract}
The \emph{doubly robust} (DR) estimator, which consists of two nuisance parameters, the conditional mean outcome and the logging policy (the probability of choosing an action), is crucial in causal inference. This paper proposes a DR estimator for \emph{dependent samples} obtained from adaptive experiments. To obtain an asymptotically normal semiparametric estimator from dependent samples with non-Donsker nuisance estimators, we propose \emph{adaptive-fitting} as a variant of sample-splitting. We also report an empirical \emph{paradox} that our proposed DR estimator tends to show better performances compared to other estimators utilizing the true logging policy. While a similar phenomenon is known for estimators with i.i.d. samples, traditional explanations based on asymptotic efficiency cannot elucidate our case with dependent samples. We confirm this hypothesis through simulation studies\footnote{A code of the conducted experiments is available at
\url{https://github.com/MasaKat0/adr}.}.
\end{abstract}


\section{Introduction}
\emph{Adaptive experiments}, including efficient treatment effect estimation \citep{Laan2008TheCA,Hahn2011}, treatment regimes \citep{Zhang2012,zhao2012est_ind}, and multi-armed bandit (MAB) problems \citep{Gittins89,lattimore2020bandit}, are widely accepted and used in real-world applications, for example, in social experiments \citep{Hahn2011}, online advertisement \citep{zhang2012jointoptimization}, clinical trials \citep{ChowChang201112,Villar2018}, website optimization \citep{white2012bandit}, and recommendation systems \citep{Lihong2010contextual}. In those various applications, there is a significant interest in off-line counterfactual inference using samples obtained from past trials generated by a logging policy  \citep[the probability of choosing an action:][]{ Lihong2010contextual,Li2011}. Among several off-line evaluation criteria, this paper focuses on \emph{off-policy value estimation}  \citep[OPVE:][]{Li2015}. The goal of OPVE is to estimate the expected value of the weighted outcome, which includes average treatment effect (ATE) estimation as a special case \citep{hirano2003,bang2005drestimation}. 

In OPVE, when the true logging policy is known, there are mainly three types of estimators: inverse probability weighting \citep[IPW:][]{Horvitz1952}, direct method (DM), and augmented IPW \citep[AIPW][]{bang2005drestimation} estimators. IPW-type estimators refer to a sample average of the outcomes weighted by the true logging policy. DM-type estimators refer to a sample average of an estimated conditional outcome. AIPW-type estimators consist of two components: the IPW part and the DM part. More details are given in Section~\ref{sec:prelim}. In addition, when the true logging policy is unknown, we call IPW-type and AIPW-type estimators that use the estimated logging policy EIPW-type and DR-type estimators, respectively. In particular, and the focus of this paper, DR-type estimators are crucial in OPVE, since in most applications of interest the true logging policy is either unknown, difficult, or costly to obtain, making IPW and AIPW-type estimators often infeasible. 

While existing OPVE methods typically assume that the samples are \emph{independent and identically distributed}  \citep[i.i.d.:][]{hirano2003,dudik2011doubly}, adaptive experiments usually allow logging policies to be updated based on past observations, under which the generated samples are non-i.i.d. In this case, theoretical results shown under i.i.d. assumptions, such as consistency and asymptotic normality, are not guaranteed. In particular, the asymptotic normality is critical, as we can obtain $\sqrt{T}$-rate confidence intervals, where $T$ is the sample size. For this problem, \citet{Laan2008TheCA} proposed an asymptotically normal AIPW estimator under dependent samples, with several follow-up studies, including \citet{hadad2019}, \citet{Kato2020adaptive}, and \citet{kelly2020batched}.
However, a DR-type estimator for dependent samples has never been formally proposed, despite its importance. 



This paper proposes an asymptotically normal DR estimator for dependent samples. We call this DR estimator the \emph{adaptive DR} (ADR) estimator. There are mainly two difficulties concerning this estimator. First, we cannot use the CLT for i.i.d. or martingale difference sequences, as done in \citet{Laan2008TheCA}, \citet{hadad2019}, and \citet{Kato2020adaptive}. Second, when nuisance parameters are estimated with complicated models, the Donsker condition does not hold in general, which is required to show the asymptotic normality of semiparametric estimators, such as our proposed ADR estimator. In this paper, to solve these problems, we propose adaptive-fitting, inspired by the double/debiased machine learning for i.i.d. samples \citep{ChernozhukovVictor2018Dmlf}. We also find that by using the ADR estimator, not only can OPVE be done without knowing the true logging policy, the ADR estimator paradoxically often outperforms the performance of AIPW estimators that use the true logging policy. We list our contributions as follows. 

{\bf (I) Asymptotic normality of the ADR estimator.}
We show the asymptotic normality of our proposed ADR estimator. To the best of our knowledge, a DR-type estimator for dependent samples obtained in adaptive experiments has not been formally proposed.

{\bf (II) Adaptive fitting.}
While the asymptotic normalities of IPW and AIPW estimators are shown by the martingale CLT, we cannot directly apply this technique to our proposed ADR estimator. This is because the martingale condition does not hold when the true logging policy is unknown. To solve this problem, we utilize a novel sample-splitting method called adaptive-fitting to show the asymptotic normality of the ADR estimator. We generalize this technique as a variant of sample-splitting \citep{ChernozhukovVictor2018Dmlf} for semiparametric inference with dependent samples without the Donsker condition.  Unlike existing sample-splitting methods, which assume that samples are i.i.d., the proposed adaptive-fitting is applicable to non-i.i.d. samples. Note that this adaptive-fitting is different from martingale based estimators, such as \citet{Laan2008TheCA} and \citet{hadad2019}, which requires the true logging policy.

{\bf  (III) Empirical paradox on using estimated logging policy.}
Through experimental studies, we investigate the empirical performance of our proposed ADR estimator. We find that the ADR estimator often exhibits improved performances over other estimators using the true logging policy, such as the AIPW estimator. Estimators requiring the true logging policy tend to empirically be unstable due to the instability of the nuisance parameter; the instability being caused by the logging policy being near zero before convergence. Our finding implies that the ADR estimator mitigates this instability. We call this phenomenon a \emph{paradox concerning logging policy} because using more information (the true logging policy) does not improve empirical performance. A similar paradox is known for IPW-type estimators for i.i.d. samples \citep{hahn1998role,Henmi2004paradox}. However, we cannot explain our paradox from a conventional semiparametric efficiency perspective \citep{hahn1998role,hirano2003}, as the asymptotic variances are the same between the ADR estimator (with an estimated logging policy) and the AIPW estimator (with the true logging policy), unlike IPW-type estimators. 


{\bf Organization of this paper.} In Section~\ref{sec:prb}, we formulate our problem. In Section~\ref{sec:prelim}, we introduce existing OPVE estimators for dependent samples. In Section~\ref{sec:main}, we propose the ADR estimator, which is our main contribution. We call the method used for the ADR estimator adaptive-fitting, and generalize it in Section~\ref{sec:relation_ddm}. In Section~\ref{sec:paradox}, we report a paradox through simulation studies and explain the cause of this phenomenon. In Section~\ref{sec:exp}, we conduct experiments using benchmark datasets.

\vspace{-0.1cm}
\section{Problem setting}
\label{sec:prb}
Suppose that there is a time series $1,2,\dots, T$, and denote the set as $[T] = \{1,\dots,T\}$. For $t\in[T]$, let $A_t$ be an \emph{action} in $\mathcal{A}=\{1,2,\dots,K\}$, $X_t$ be a \emph{covariate} observed by a decision maker when choosing an action, and $\mathcal{X}$ be its space. Following the Neyman–Rubin causal model \citep{rubin1974}, let a reward at period $t$ be $Y_t=\sum^K_{a=1}\mathbbm{1}[A_t = a]Y_t(a)$, where $Y_t(a):\mathcal{A}\to\mathbb{R}$ is a potential (random) outcome. We have a dataset $\big\{(X_t, A_t, Y_t)\big\}^{T}_{t=1}$ with the following data-generating process (DGP):
\begin{align}
\label{eq:DGP}
(X_t, A_t, Y_t(A_t))\sim p(x)p_t(a|x)p(y_{a}|x),
\end{align}
where $p(x)$ denotes the density of the covariate $X_t$, $p_t(a| x)$ denotes the probability of choosing an action $a$ conditional on a covariate $x$ at period $t$, and $p(y_a| x)$ denotes the density of a reward $Y_t(a)$ conditional on a covariate $x$. We assume that $p(x)$ and $p(y_a| x)$ are invariant across periods; that is, $\{(X_t, Y_t(1),\dots, Y_t(K))\}^T_{t=1}$ is i.i.d., but $p_t(a| x)$ can take different values across periods based on past observations. In this case, the samples $\big\{(X_t, A_t, Y_t)\big\}^{T}_{t=1}$ are correlated over time, that is, the samples are not i.i.d. Let $\Omega_{t-1}=\{X_{t-1}, A_{t-1}, Y_{t-1}, \dots, X_{1}, A_1, Y_{1}\}$ be the history and $\mathcal{M}_{t-1}$ be a set of possible histories until the $t$-th period. The probability $p_t(a| x)$ is determined by a \emph{logging policy} $\pi_t:\mathcal{A}\times\mathcal{X}\times\mathcal{M}_{t-1}\to(0,1)$, such that $\sum^K_{a=1}\pi_t(a| x, \Omega_{t-1}) = 1$, which is a function of a covariate $X_t$, an action $A_t$, and history $\Omega_{t-1}$. We also assume that $\pi_t$ is conditionally independent of $Y_t(a)$ to satisfy unconfoundedness (Remark~\ref{rem:unconfoundedness}).

\begin{remark}[Stable unit treatment value assumption (SUTVA)]
\label{rem:sutva}
The DGP implies SUTVA; that is, $p(y_a| x)$ is invariant for any $p(a| x) $\citep{Rubi:86}.
\end{remark}

\begin{remark}[Unconfoundedness]
\label{rem:unconfoundedness}
In this paper, unconfoundedness refers to independence between $(Y_t(1),\dots, Y_t(K))$ and $A_t$, conditional on $X_t$ and $\Omega_{t-1}$, which is required for identification. 
\end{remark}

\subsection{OPVE}
\label{sec:OPVE}
The goal of OPVE is to estimate the expected value of the sum of the outcomes $Y_t(a)$ weighted by an evaluation function $\epol:\mathcal{A}\times \mathcal{X} \to\mathbb{R}$; that is, 
\begin{align}
R(\epol) &:=\mathbb{E}_{(X_t, Y_t(1),\dots,Y_t(K))}\left[\sum^K_{a=1}\epol(a | X_t)Y_t(a)\right],
\end{align}
where the expectation $\mathbb{E}_{(X_t, Y_t(1),\dots, Y_t(K))}$ is taken over $(X_t, Y_t(1), \dots, Y_t(K))$. We denote it as $\mathbb{E}$, when there is no ambiguity. As with \citet{dudik2011doubly}, the evaluation function is usually referred to as an evaluation policy, where $\epol(a| x)\in [0,1]$ and the sum is $1$. However, to not restrict it so we can include other forms, such as the ATE, we refer to it differently. The ATE is also a special case of OPVE for $\mathcal{A}=\{1,2\}$, where $\epol(1| x) = 1$ and $\epol(2| x) = -1$. To identify $R(\epol)$, we assume the overlap in the distributions of policies, convergence of $\pi_{t-1}$, and the boundedness of reward. 
\begin{assumption}\label{asm:overlap_pol}
For all $t\in[T]$, $a\in\mathcal{A}$, $x\in\mathcal{X}$, and $\Omega_{t-1}\in\mathcal{M}_{t-1}$, there exists a constant $C_\pi > 0$, such that $\left| \frac{\epol(a| x)}{\pi_t(a| x, \Omega_{t-1})}\right| \leq C_\pi$.
\end{assumption}
Assumption~\ref{asm:overlap_pol} equivalently means that $\pi_{t}(a| x) > 0$ for all $a\in\mathcal{A}$ and $x\in\mathcal{X}$. 
\begin{assumption}
\label{asm:stationarity}
For all $x\in\mathcal{X}$ and $a\in\mathcal{A}$, $\pi_t(a| x, \Omega_{t-1})-\tilde{\pi}(a| x)\xrightarrow{\mathrm{p}}0$, where $\tilde{\pi}: \mathcal{A}\times\mathcal{X}\to(0,1)$.
\end{assumption}
\begin{assumption}\label{asm:bounded_reward}
For all $t\in[T]$ and $a\in\mathcal{A}$, there exists a constant $C_Y > 0$, such that $|Y_t(a)| \leq C_Y$.
\end{assumption}

Although the reader may feel that Assumption~\ref{asm:overlap_pol}--\ref{asm:stationarity} and the SUTVA \ref{rem:sutva} are strong, we adopt it as a simple and basic case for the application, in order to introduce adaptive-fitting and the ADR estimator. We can extend the proposed method for different cases, such as when the data has the structure of batches or when the average of logging policy converges \citep{Kato2021nonstationary}. Note that the convergence assumption (Assumption~\ref{asm:stationarity}) is also explicitly or implicitly required in other studies, such as \citet{Laan2008TheCA} and \citet{hadad2019}. 

\subsection{Notations}
We denote $\mathbb{E}[Y_t(a)| x]$ and $\mathrm{Var}(Y_t(a)| x)$ as $f^*(a, x)$ and $v^*(a, x)$, respectively. Let $\hat{f}_{t}(a, x)$ be an estimator of $f^*(a, x)$ constructed from $\Omega_{t}$. Let $\mathcal{N}(\mu, \mathrm{var})$ be the normal distribution with the mean $\mu$ and the variance $\mathrm{var}$. For a random variable $Z$ with density $p(z)$ and function $\mu$, let $\|\mu(Z)\|_2=\int |\mu(z)|^2 p(z) dz $ be the $L^{2}$-norm.

\section{Preliminaries of OPVE}
\label{sec:prelim}

\subsection{Existing estimators}
For estimating $R(\epol)$ from dependent samples, existing studies propose various estimators. An adaptive version of the IPW estimator is defined as $R^{\mathrm{AdaIPW}}_T(\epol)=\frac{1}{T}\sum^T_{t=1}\sum^K_{a = 1}\frac{\epol(A_t| X_t)\mathbbm{1}[A_t=a]Y_t }{\pi_{t-1}(A_t| X_t, \Omega_{t-1})}$ \citep{Laan2008TheCA}. If the model specification is correct, the direct method (DM) estimator $\frac{1}{T}\sum^T_{t=1}\sum^K_{a=1}\epol(a | x)\hat{f}_{T}(a| X_t)$ is known to be consistent to $R(\epol)$. As an adaptive version of the AIPW estimator, \citet{Laan2008TheCA} proposed an estimator $\widehat{R}^{\mathrm{AIPW}}_T(\epol)$ defined as 
\begin{align*}
\frac{1}{T}\sum^T_{t=1}\sum^K_{a=1}\Bigg\{\frac{\epol(a| X_t)\mathbbm{1}[A_t=a]\left(Y_t - \hat{f}_{t-1}(a, X_t)\right) }{\pi_{t-1}(a| X_t, \Omega_{t-1})}+ \epol(a| X_t)\hat{f}_{t-1}(a, X_t)\Bigg\}.
\end{align*}
Using the martingale property, \citet{Laan2008TheCA} showed asymptotic normality under Assumption~\ref{asm:stationarity}. \citet{hadad2019} and \citet{Kato2020adaptive} organized the results ( Proposition~\ref{prp:asymp_dist_a2ipw}).

\subsection{Asymptotic efficiency}
\label{sec:asymp_eff}
We are often interested in the asymptotic efficiency of estimators. The lower bound of the asymptotic variance is defined for an estimator under some posited models of the DGP \eqref{eq:DGP}. As with the \Cramer-Rao lower bound for  the parametric model, we can also define the lower bound for the non- or semiparametric model \citep{bickel98}. 
The semiparametric lower bound of the DGP~\eqref{eq:DGP} under $p_1(a| x)=\cdots=p_T(a| x)=p(a| x)$ is given as follows \citep{hahn1998role,narita2019counterfactual}:
\begin{align*}
\Psi(\epol, p) = \mathbb{E}\Bigg[\sum^{K}_{a=1}\frac{\big(\epol(a| X_t)\big)^2v^*(a, X_t)}{p(a| X_t)}+ \left(\sum^{K}_{a=1}\epol(a| X_t)f^*(a, X_t) - R(\epol)\right)^2\Bigg].
\end{align*}
The asymptotic variance of the asymptotic distribution is also known as the asymptotic mean squared error (MSE); that is, an OPVE estimator achieving the semiparametric lower bound also minimizes the MSE to the true value $R(\epol)$, not just obtaining a tight confidence interval.

\subsection{Related work} 
There are various studies related to OPVE, including ATE estimation, under the assumption that samples are i.i.d. \citep{hahn1998role,hirano2003,dudik2011doubly,wang2017optimal,narita2019counterfactual,Bibaut2019moreffficient,Oberst2019}. There are also several studies extending these methods to OPVE from dependent samples \citep{Laan2008TheCA,Laan2016onlinetml,Luedtke2016,hadad2019,Kato2020adaptive}.

The AIPW estimator for dependent samples are proposed by \citet{Laan2008TheCA}. \citet{hadad2019} proposed an evaluation weight to stabilize the estimator, which shares a similar motivation with weight clipping, or shrinkage, when i.i.d. samples are given \citep{Bembom2008,Bottou2013,Wang2017,Su2019,Su2020}. \citet{Kato2020adaptive} showed a non-asymptotic confidence interval of the AIPW estimator. \citet{Laan2016onlinetml} and \citet{Luedtke2016} proposed an OPVE method without the convergence of the logging policies by using batches and standardization, respectively. Note that \citet{hadad2019} cannot weaken the assumptions regarding the logging policy, unlike \citet{Luedtke2016}. \citet{kelly2020batched} proposed an estimator similar to \citet{Laan2016onlinetml} and applied it to linear regression. Estimators proposed by \citet{Laan2008TheCA}, \citet{Luedtke2016}, \citet{hadad2019}, \citet{Kato2020adaptive}, and \citet{kelly2020batched} require the true logging policy, unlike our ADR estimator.

A semiparametric estimator usually requires the Donsker condition for its asymptotic normality \citep{bickel98}. For semiparametric inference without the Donsker condition, sample-splitting is a typical approach \citep{klaassen1987,ZhengWenjing2011CTME,ChernozhukovVictor2018Dmlf}. \citet{ChernozhukovVictor2018Dmlf} referred to sample-splitting as \emph{cross-fitting} and the semiparametric inference using cross-fitting as \emph{double-debiased machine learning} (DML). For off-policy evaluation of reinforcement learning from dependent samples, \citet{KallusNathan2019EBtC} proposed a mixingale-based sample-splitting. 

\section{The ADR estimator}
\label{sec:main}
For OPVE with dependent samples, this paper proposes the ADR estimator $\widehat{R}^{\mathrm{ADR}}_T(\epol)$ defined as
\begin{align*}
\frac{1}{T}\sum^T_{t=1}\sum^K_{a=1}\Bigg\{\frac{\epol(a| X_t)\mathbbm{1}[A_t=a]\left(Y_t - \hat{f}_{t-1}(a, X_t)\right) }{\hat{g}_{t-1}(a| X_t)}+ \epol(a| X_t)\hat{f}_{t-1}(a, X_t)\Bigg\},
\end{align*}
where $\hat{g}_{t-1}$ is an estimator of $\pi_{t-1}$, constructed only from $\Omega_{t-1}$. We can use standard regression methods for constructing $\hat{f}_{t-1}$ and $\hat{g}_{t-1}$ if they satisfy the following assumptions.
\begin{assumption}
\label{asm:conv_rate1}
For $p, q > 0$ such that $p+q = 1/2$, $\|\hat{g}_{t-1}(a| X_t) - \pi_{t-1}(a| X_t, \Omega_{t-1})\|_{2}=\op(t^{-p})$, and $\|\hat{f}_{t-1}(a,X_t)-f^*(a,X_t)\|_2=\op(t^{-q})$, where the expectation of the norm is taken over $X_t$.
\end{assumption}
\begin{assumption}
\label{asm:bound_nuisance1}
There exists a constant $C_f$ such that $|\hat{f}_{t-1}(a, x)| \leq C_f$  $\forall a\in\mathcal{A},x\in\mathcal{X}$.
\end{assumption}
\begin{assumption}
\label{asm:bound_nuisance2}
There exist a constant $C_g$ such that $0 < \left|\frac{\epol(a| x)}{\hat{g}_{t-1}(a| x)}\right| \leq C_g$ $\forall a\in\mathcal{A},x\in\mathcal{X}$.
\end{assumption}

Assumption~\ref{asm:conv_rate1} requires convergence rates standard in regression estimators. For instance, we can apply nonparametric estimators proposed in MAB problems \citep{yang2002,qian2016kernel}. Under the assumptions, we show the asymptotic normality of the ADR estimator.
\begin{theorem}[Asymptotic distribution of an ADR estimator]
\label{thm:asymp_dist_adr}
Under Assumptions~\ref{asm:overlap_pol}--\ref{asm:bound_nuisance2},
\begin{align*}
\sqrt{T}\left(\widehat{R}^{\mathrm{ADR}}_T(\epol)-R(\epol)\right)\xrightarrow{d}\mathcal{N}\left(0, \Psi(\epol, \tilde{\pi})\right).
\end{align*}
\end{theorem}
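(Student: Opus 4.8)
The plan is to reduce $\sqrt{T}\big(\widehat{R}^{\mathrm{ADR}}_T(\epol)-R(\epol)\big)$ to a normalized sum of martingale differences plus an asymptotically negligible remainder, and then invoke a martingale CLT. Write $\xi_t$ for the $t$-th summand of $\widehat{R}^{\mathrm{ADR}}_T(\epol)$ and let $\mathcal{F}_{t-1}=\sigma(\Omega_{t-1})$. The central observation---this is exactly what \emph{adaptive-fitting} buys us---is that $\hat{f}_{t-1}$ and $\hat{g}_{t-1}$ are built only from $\Omega_{t-1}$ and are therefore $\mathcal{F}_{t-1}$-measurable, while the fresh triple $(X_t,A_t,Y_t)$ is, by the DGP \eqref{eq:DGP}, independent of $\Omega_{t-1}$ except through the logging policy. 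Hence at period $t$ the nuisances may be treated as frozen functions when conditioning on $\mathcal{F}_{t-1}$, which both restores a martingale structure (lost once the true policy is replaced by $\hat{g}_{t-1}$) and removes any need for a Donsker/stochastic-equicontinuity argument over a function class. I would decompose $\xi_t - R(\epol) = D_t + W_t$ with $D_t = \xi_t - \E[\xi_t\mid\mathcal{F}_{t-1}]$ and $W_t = \E[\xi_t\mid\mathcal{F}_{t-1}] - R(\epol)$, so that $\{D_t\}$ is a martingale difference sequence with respect to $\{\mathcal{F}_t\}$ and $W_t$ is the ($\mathcal{F}_{t-1}$-measurable) bias.

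For the bias, the first step is the standard doubly robust cancellation: computing $\E[\mathbbm{1}[A_t=a]\mid\mathcal{F}_{t-1},X_t]$ against the generating logging policy adapted to $\Omega_{t-1}$ and using $\E[Y_t(a)-f^*(a,X_t)\mid X_t]=0$ gives, after adding and subtracting $f^*$,
\[
W_t = \E_{X_t}\!\Big[\textstyle\sum_a \tfrac{\epol(a\mid X_t)}{\hat{g}_{t-1}(a\mid X_t)}\big(\pi_{t-1}-\hat{g}_{t-1}\big)\big(f^*-\hat{f}_{t-1}\big)\,\Big|\,\mathcal{F}_{t-1}\Big],
\]
a product of the two estimation errors. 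Bounding $\epol/\hat{g}_{t-1}$ by $C_g$ (Assumption~\ref{asm:bound_nuisance2}) and applying Cauchy--Schwarz in $X_t$ yields $|W_t|\le C_g\sum_a\|\pi_{t-1}-\hat{g}_{t-1}\|_2\,\|f^*-\hat{f}_{t-1}\|_2=\op(t^{-p})\op(t^{-q})=\op(t^{-1/2})$ by Assumption~\ref{asm:conv_rate1}. It remains to show $\tfrac{1}{\sqrt{T}}\sum_t W_t=\op(1)$; since $\sum_{t\le T}t^{-1/2}\asymp 2\sqrt{T}$, naive summation only gives $\Op(1)$, so I would instead use a Toeplitz/Kronecker weighted-averaging argument, writing $t^{1/2}W_t=\op(1)$ and noting that the normalized weights $t^{-1/2}/\sum_{s}s^{-1/2}$ concentrate on large $t$, forcing the weighted average to $0$.

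For the leading term I would apply a martingale CLT (e.g.\ Hall--Heyde). Two conditions must be verified. First, the conditional variance: $\E[D_t^2\mid\mathcal{F}_{t-1}]=\mathrm{Var}(\xi_t\mid\mathcal{F}_{t-1})$ is an $\mathcal{F}_{t-1}$-measurable functional of $\hat{f}_{t-1},\hat{g}_{t-1}$ and the generating policy; using $\hat{f}_{t-1}\to f^*$, $\hat{g}_{t-1}\to\tilde{\pi}$, and $\pi_{t-1}\to\tilde{\pi}$ (Assumptions~\ref{asm:stationarity},~\ref{asm:conv_rate1}) together with the fact that $(X_t,Y_t(a))$ are i.i.d.\ and independent of $\mathcal{F}_{t-1}$, one shows $\E[D_t^2\mid\mathcal{F}_{t-1}]\xrightarrow{p}\Psi(\epol,\tilde{\pi})$ by a law-of-total-variance computation (the action/outcome noise contributes $\sum_a \epol(a\mid X_t)^2 v^*(a,X_t)/\tilde{\pi}(a\mid X_t)$ and the direct-method part contributes the $X_t$-variance term); a bounded Cesàro/LLN argument then upgrades this to $\tfrac{1}{T}\sum_t\E[D_t^2\mid\mathcal{F}_{t-1}]\xrightarrow{p}\Psi(\epol,\tilde{\pi})$. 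Second, the Lindeberg condition holds trivially because $|D_t|$ is uniformly bounded---by Assumptions~\ref{asm:bounded_reward},~\ref{asm:bound_nuisance1},~\ref{asm:bound_nuisance2} each ingredient of $\xi_t$ is bounded---so for large $T$ the truncation indicator vanishes. Concluding, $\tfrac{1}{\sqrt{T}}\sum_t D_t\xrightarrow{d}\mathcal{N}\big(0,\Psi(\epol,\tilde{\pi})\big)$ while the bias is $\op(1)$, which gives the claim.

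I expect the main obstacle to be the bias step, for two reasons. The rates only balance at the boundary $p+q=1/2$, so the negligibility of $\tfrac{1}{\sqrt{T}}\sum_t W_t$ cannot come from absolute summability and genuinely needs the weighted-averaging (Toeplitz) argument; moreover, the doubly robust cancellation must be carried out against the \emph{generating} logging policy adapted to $\Omega_{t-1}$, so one must check that the error appearing in $W_t$ is precisely the quantity whose rate is controlled by Assumption~\ref{asm:conv_rate1}. A secondary delicate point is justifying the replacement of the random nuisances $\hat{f}_{t-1},\hat{g}_{t-1}$ by their deterministic limits inside the conditional variance, which is where boundedness (uniform integrability) together with Assumptions~\ref{asm:stationarity} and~\ref{asm:conv_rate1} do the work.
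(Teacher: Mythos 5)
Your proposal is correct and rests on the same two pillars as the paper's proof---the $\mathcal{F}_{t-1}$-measurability of $\hat{f}_{t-1},\hat{g}_{t-1}$ supplied by adaptive-fitting, and the doubly robust product-of-errors bound $|W_t|\le C\,\|\pi_{t-1}-\hat{g}_{t-1}\|_2\,\|f^*-\hat{f}_{t-1}\|_2=\op(t^{-1/2})$ via Cauchy--Schwarz and Assumption~\ref{asm:conv_rate1}---but it is packaged through a different top-level decomposition. You split each summand directly as $\xi_t-R(\epol)=D_t+W_t$ and verify a martingale CLT for $\{D_t\}$ from scratch (conditional variance $\to\Psi(\epol,\tilde{\pi})$ plus Lindeberg via boundedness). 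The paper instead compares $\widehat{R}^{\mathrm{ADR}}_T$ with the oracle estimator $\widehat{R}^*$ built from $(\pi_{t-1},f^*)$, invokes Proposition~\ref{prp:asymp_dist_a2ipw} for the normality of $\sqrt{T}(\widehat{R}^*-R(\epol))$, and then shows $\widehat{R}^{\mathrm{ADR}}_T-\widehat{R}^*=\op(1/\sqrt{T})$ by splitting that \emph{difference} into a centered martingale part (whose variance is killed via the $L^r$ convergence theorem and Chebyshev) and the same conditional bias you compute. Your route saves the citation but obliges you to re-establish the variance limit with estimated nuisances in place, which is where Assumptions~\ref{asm:stationarity} and~\ref{asm:conv_rate1} plus uniform integrability enter; the paper's route outsources the CLT but must separately show the variance of the difference vanishes, so the total work is comparable. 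One point where you are actually more careful than the paper: the paper ends the bias step at $\frac{C}{T}\sum_t\op(t^{-1/2})$ without justifying why this is $\op(T^{-1/2})$, whereas you correctly flag that $\sum_{t\le T}t^{-1/2}\asymp 2\sqrt{T}$ makes absolute summability insufficient and that a Toeplitz-type weighted-averaging argument (combined with boundedness to pass from $\op(1)$ to $L^1$) is needed to close this step.
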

Let us put forward the following assumption.
\begin{assumption}
\label{asm:pointwise_f}
For all $x\in\mathcal{X}$ and $a\in\mathcal{A}$, $\hat{f}_{t-1}(a, x)-f^*(a, x)\xrightarrow{\mathrm{p}}0$.
\end{assumption}

The proof is shown in Appendix~\ref{appdx:proof_main}, which uses the following proposition from \citet{Kato2020adaptive}.

\begin{proposition}[Asymptotic distribution of an AIPW estimator (Corollary~1, \citet{Kato2020adaptive}).]
\label{prp:asymp_dist_a2ipw}
Under Assumptions~\ref{asm:overlap_pol}--\ref{asm:bounded_reward}, \ref{asm:bound_nuisance1}, and \ref{asm:pointwise_f}, $\sqrt{T}\left(\widehat{R}^{\mathrm{AIPW}}_T(\epol)-R(\epol)\right)\xrightarrow{d}\mathcal{N}\left(0, \Psi(\epol, \tilde{\pi})\right)$.
\end{proposition}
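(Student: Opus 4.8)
The plan is to recognize the AIPW estimator as a normalized sum of a martingale difference sequence and apply a martingale central limit theorem, which is the route used by \citet{Laan2008TheCA}. Write $\widehat{R}^{\mathrm{AIPW}}_T(\epol)=\frac{1}{T}\sum_{t=1}^T\xi_t$ with
\begin{align*}
\xi_t=\sum_{a=1}^K\Bigg\{\frac{\epol(a|X_t)\mathbbm{1}[A_t=a]\big(Y_t-\hat f_{t-1}(a,X_t)\big)}{\pi_{t-1}(a|X_t,\Omega_{t-1})}+\epol(a|X_t)\hat f_{t-1}(a,X_t)\Bigg\},
\end{align*}
and take the filtration generated by the histories $\Omega_{t-1}$. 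The first step is to verify the martingale difference property $\E[\xi_t-R(\epol)\mid\Omega_{t-1}]=0$. I would condition further on $X_t$, under which $\hat f_{t-1}$ and $\pi_{t-1}$ are fixed; unconfoundedness (Remark~\ref{rem:unconfoundedness}) gives $A_t\ci Y_t(a)\mid X_t,\Omega_{t-1}$, the i.i.d.\ structure of $(X_t,Y_t(1),\dots,Y_t(K))$ gives $\E[Y_t(a)\mid X_t,\Omega_{t-1}]=f^*(a,X_t)$, and the logging policy generating $A_t$, which is $\Omega_{t-1}$-measurable and equals the denominator, gives $\E[\mathbbm{1}[A_t=a]\mid X_t,\Omega_{t-1}]=\pi_{t-1}(a|X_t,\Omega_{t-1})$. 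Substituting, the propensity in the denominator cancels, the inverse-probability piece contributes $\epol(a|X_t)\big(f^*(a,X_t)-\hat f_{t-1}(a,X_t)\big)$, and the term $\hat f_{t-1}$ cancels against the direct-method piece, leaving $\E[\xi_t\mid\Omega_{t-1}]=\E\big[\sum_a\epol(a|X_t)f^*(a,X_t)\big]=R(\epol)$ for every history. This is exactly where using the true logging policy (rather than an estimate) matters, and why no convergence rate on $\hat f_{t-1}$ is required here.

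It then remains to check the two hypotheses of the martingale CLT for the normalized array $\frac{1}{\sqrt T}\sum_{t=1}^T(\xi_t-R(\epol))$. For the negligibility (conditional Lindeberg) condition I would use uniform boundedness: Assumption~\ref{asm:overlap_pol} bounds the weights $\epol/\pi_{t-1}$, Assumption~\ref{asm:bounded_reward} bounds $Y_t$, and Assumption~\ref{asm:bound_nuisance1} bounds $\hat f_{t-1}$, so $|\xi_t-R(\epol)|$ is bounded by a deterministic constant and the truncated conditional second moment vanishes once $T$ is large.

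The substantive step is the conditional variance convergence $\frac1T\sum_{t=1}^T\mathrm{Var}(\xi_t\mid\Omega_{t-1})\xrightarrow{\mathrm{p}}\Psi(\epol,\tilde\pi)$. I would evaluate $\mathrm{Var}(\xi_t\mid\Omega_{t-1})$ by the law of total variance with inner conditioning on $X_t$. The between-$X_t$ part equals $\E\big[(\sum_a\epol(a|X_t)f^*(a,X_t)-R(\epol))^2\big]$, reproducing the second term of $\Psi$ for every $t$. The within-$X_t$ part comes only from the inverse-probability piece, since the direct-method piece is fixed given $(X_t,\Omega_{t-1})$; using that the indicator cross terms vanish and that the squared weight $\{\epol(a|X_t)/\pi_{t-1}(a|X_t,\Omega_{t-1})\}^2$ times the draw probability $\pi_{t-1}(a|X_t,\Omega_{t-1})$ collapses to $\epol(a|X_t)^2/\pi_{t-1}(a|X_t,\Omega_{t-1})$, it equals $\E\big[\sum_a\epol(a|X_t)^2\{v^*(a,X_t)+(f^*(a,X_t)-\hat f_{t-1}(a,X_t))^2\}/\pi_{t-1}(a|X_t,\Omega_{t-1})\mid\Omega_{t-1}\big]$ minus a remainder of order $(f^*-\hat f_{t-1})^2$.

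The hard part will be passing from these per-period conditional variances to the stated constant limit, which is where Assumptions~\ref{asm:stationarity} and~\ref{asm:pointwise_f} enter. Assumption~\ref{asm:stationarity} gives $\pi_{t-1}(a|X_t,\Omega_{t-1})\xrightarrow{\mathrm{p}}\tilde\pi(a|X_t)$ and Assumption~\ref{asm:pointwise_f} gives $\hat f_{t-1}(a,X_t)\xrightarrow{\mathrm{p}}f^*(a,X_t)$, so the bias-type terms $(f^*-\hat f_{t-1})^2$ are asymptotically negligible and the dominant term converges to $\E\big[\sum_a\epol(a|X_t)^2 v^*(a,X_t)/\tilde\pi(a|X_t)\big]$. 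Since all integrands are uniformly bounded (Assumptions~\ref{asm:overlap_pol},~\ref{asm:bounded_reward},~\ref{asm:bound_nuisance1}), bounded convergence upgrades these in-probability limits to $\E\big|\mathrm{Var}(\xi_t\mid\Omega_{t-1})-\Psi(\epol,\tilde\pi)\big|\to0$, and the Cesàro bound $\E\big|\frac1T\sum_t\mathrm{Var}(\xi_t\mid\Omega_{t-1})-\Psi(\epol,\tilde\pi)\big|\le\frac1T\sum_t\E\big|\mathrm{Var}(\xi_t\mid\Omega_{t-1})-\Psi(\epol,\tilde\pi)\big|\to0$ yields the required convergence in probability. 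The martingale CLT then delivers the limit $\mathcal{N}(0,\Psi(\epol,\tilde\pi))$.
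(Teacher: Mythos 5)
Your proposal is correct and takes essentially the same route the paper relies on: the paper does not reprove this proposition but imports it as Corollary~1 of \citet{Kato2020adaptive}, whose argument (mirrored in the paper's own appendix proof of the batched analogue, Theorem~\ref{thm:asymp_batch}) is exactly your martingale-CLT scheme — the MDS property from unconfoundedness and the $\Omega_{t-1}$-measurability of $\hat{f}_{t-1}$ and $\pi_{t-1}$, a conditional Lindeberg condition that holds trivially by the uniform bounds of Assumptions~\ref{asm:overlap_pol}, \ref{asm:bounded_reward}, and \ref{asm:bound_nuisance1}, and conditional-variance convergence obtained from the pointwise consistency in Assumptions~\ref{asm:stationarity} and \ref{asm:pointwise_f} upgraded via boundedness and the $L^r$ convergence theorem (Proposition~\ref{prp:lr_conv_theorem}) followed by a Ces\`{a}ro average. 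In particular, your observation that only consistency of $\hat{f}_{t-1}$ (and no rate) is needed because the true logging policy makes every summand conditionally unbiased is precisely the distinction the paper draws between the AIPW and ADR estimators.
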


\begin{remark}[Consistency and double robustness]
The ADR estimator has double robustness, as with standard DR estimators; that is, if either $\hat{f}$ or $\hat{g}$ is consistent, the ADR estimator is also consistent.
\end{remark}
\begin{theorem}[Consistency]
Under Assumptions~\ref{asm:overlap_pol}--\ref{asm:bounded_reward} and \ref{asm:bound_nuisance1}--\ref{asm:bound_nuisance2}, if either $\hat{f}_{t-1}$ or $\hat{g}_{t-1}$ is consistent, $\widehat{R}^{\mathrm{ADR}}_T\xrightarrow{\mathrm{p}} R(\epol)$.
\end{theorem}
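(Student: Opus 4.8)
The plan is to split the estimator into a martingale-difference fluctuation around its per-period conditional mean plus a bias term, and to show that double robustness is entirely a consequence of the product (factorized) structure of that bias. Writing the per-period score
\[
\psi_t=\sum_{a=1}^K\Bigg\{\frac{\epol(a\mid X_t)\mathbbm{1}[A_t=a]\big(Y_t-\hat f_{t-1}(a,X_t)\big)}{\hat g_{t-1}(a\mid X_t)}+\epol(a\mid X_t)\hat f_{t-1}(a,X_t)\Bigg\},
\]
so that $\widehat R^{\mathrm{ADR}}_T(\epol)=\tfrac1T\sum_{t=1}^T\psi_t$, I would first observe that $\hat f_{t-1}$ and $\hat g_{t-1}$ are $\Omega_{t-1}$-measurable while $\psi_t$ is $\Omega_t$-measurable. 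Hence $\xi_t:=\psi_t-\E[\psi_t\mid\Omega_{t-1}]$ is a martingale difference sequence, giving the decomposition $\widehat R^{\mathrm{ADR}}_T(\epol)=\tfrac1T\sum_t\E[\psi_t\mid\Omega_{t-1}]+\tfrac1T\sum_t\xi_t$.

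For the fluctuation term, Assumptions~\ref{asm:overlap_pol}, \ref{asm:bounded_reward}, \ref{asm:bound_nuisance1}, and \ref{asm:bound_nuisance2} bound $|\epol|\le C_\pi$, $|Y_t|\le C_Y$, $|\hat f_{t-1}|\le C_f$, and $|\epol/\hat g_{t-1}|\le C_g$, so each $\psi_t$, and therefore each $\xi_t$, is uniformly bounded by a constant. Because martingale differences are orthogonal, $\E\big[(\tfrac1T\sum_t\xi_t)^2\big]=\tfrac1{T^2}\sum_t\E[\xi_t^2]=\Op(1/T)$, and Chebyshev's inequality yields $\tfrac1T\sum_t\xi_t\xrightarrow{\mathrm{p}}0$.

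The key step is the conditional-mean computation. Conditioning further on $X_t$ and using that $(X_t,Y_t(1),\dots,Y_t(K))$ is drawn afresh, independent of $\Omega_{t-1}$, that $\P(A_t=a\mid X_t,\Omega_{t-1})=\pi_{t-1}(a\mid X_t,\Omega_{t-1})$, and unconfoundedness $A_t\perp Y_t(a)\mid X_t,\Omega_{t-1}$ together with SUTVA, I would evaluate $\E[\mathbbm{1}[A_t=a](Y_t-\hat f_{t-1}(a,X_t))\mid X_t,\Omega_{t-1}]=\pi_{t-1}(a\mid X_t,\Omega_{t-1})\big(f^*(a,X_t)-\hat f_{t-1}(a,X_t)\big)$. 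Adding the direct-method part, subtracting the target $\epol(a\mid X_t)f^*(a,X_t)$, and using $R(\epol)=\E_{X_t}[\sum_a\epol(a\mid X_t)f^*(a,X_t)]$ (the same for every $t$ since $X_t$ is i.i.d.), the leading terms cancel and what remains factorizes:
\[
\E[\psi_t\mid\Omega_{t-1}]-R(\epol)=\E_{X_t}\!\left[\sum_{a=1}^K\frac{\epol(a\mid X_t)}{\hat g_{t-1}(a\mid X_t)}\big(f^*(a,X_t)-\hat f_{t-1}(a,X_t)\big)\big(\pi_{t-1}(a\mid X_t,\Omega_{t-1})-\hat g_{t-1}(a\mid X_t)\big)\right]=:B_t.
\]
This is the classical doubly-robust product form, and I expect its derivation to be the main obstacle, since it is the single place where unconfoundedness, SUTVA, and the i.i.d.\ draw of $(X_t,Y_t(a))$ must be combined correctly; everything else is routine once the factorization is in hand.

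Finally I would control $B_t$. The factor $\epol/\hat g_{t-1}$ is bounded by $C_g$, and $|f^*-\hat f_{t-1}|$ and $|\pi_{t-1}-\hat g_{t-1}|$ are bounded by constants, so bounding one difference by its uniform bound and the other by its $L^2$-norm (using $\E_{X_t}|Z|\le\|Z\|_2$) gives $|B_t|\le C\,\min\big(\|f^*-\hat f_{t-1}\|_2,\ \|\pi_{t-1}-\hat g_{t-1}\|_2\big)$ for a constant $C$. If $\hat f_{t-1}$ is consistent the first factor vanishes in probability; if $\hat g_{t-1}$ is consistent then, combined with Assumption~\ref{asm:stationarity} ($\pi_{t-1}\to\tilde\pi$), the second factor vanishes. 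In either case $B_t\xrightarrow{\mathrm{p}}0$ as $t\to\infty$ while staying uniformly bounded, so bounded convergence gives $\E|B_t|\to0$ and hence $\tfrac1T\sum_t\E|B_t|\to0$ by a Cesàro argument, yielding $\tfrac1T\sum_tB_t\xrightarrow{\mathrm{p}}0$. Combining the three pieces gives $\widehat R^{\mathrm{ADR}}_T\xrightarrow{\mathrm{p}}R(\epol)$. A secondary technical point is justifying this last Cesàro step under mere consistency rather than explicit rates, which is exactly why the uniform boundedness established above is needed.
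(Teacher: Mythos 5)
Your proof is correct and follows essentially the same route the paper intends: the paper's entire proof is a one-sentence appeal to the weak law of large numbers for martingales (Proposition~\ref{prp:mrtgl_WLLN}), whose fluctuation part you handle via bounded martingale differences and Chebyshev, and whose centering condition is exactly your conditional-bias term $B_t$. Your derivation of the doubly robust product form for $B_t$ and its control under consistency of either nuisance is precisely the content the paper leaves implicit, so the proposal is a correct, fully worked-out version of the paper's argument.
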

We can prove this by the law of large numbers for martingales (Proposition~\ref{prp:mrtgl_WLLN} in Appendix~\ref{appdx:prelim}).

\vspace{-0.2cm}
\paragraph{Donsker condition.} The main reason for using step-wise estimators $\{\hat{f}_{t-1}\}^T_{t=1}$ and $\{\hat{g}_{t-1}\}^T_{t=1}$ is to regard them as constants in the expectation conditioned on $\Omega_{t-1}$. The motivation is shared with DML. For asymptotic normality shown in Theorem~\ref{thm:asymp_dist_adr}, we do not impose the Donsker condition on the nuisance estimators, $\hat{f}_{t-1}$ and $\hat{g}_{t-1}$, but only require the convergence rate conditions. We call this sample-splitting method, using $\hat{f}_{t-1}$ and $\hat{g}_{t-1}$, \emph{adaptive-fitting} and discuss again in Section~\ref{sec:relation_ddm}. In the MAB problem, convergence rate conditions in nonparametric regression, such as a nearest-neighbor regression \citep{yang2002}, Nadaraya-Watson regression \citep{qian2016kernel}, random forest \citep{Feraud2016}, kernelized linear models \citep{Chowdhury2017ker}, and neural networks \citep{Zhou2020}, have been shown. Note that we need to slightly modify the results for each situation because they are influenced by time-series behavior of logging probabilities.

\vspace{-0.2cm}
\paragraph{Convergence rate of the logging policy.} In the main theorem, we do not explicitly describe the convergence rate of the logging policy. However, from  Assumption~\ref{asm:conv_rate1}, which requires $\|\hat{g}_{t-1}(a| X_t) - \pi_{t-1}(a| X_t, \Omega_{t-1})\|_{2}=\op(t^{-p})$, $\|\tilde{\pi}(a| x) - \pi_{t-1}(a| X_t, \Omega_{t-1})\|_{2}=\op(t^{-p})$ is also required.

\vspace{-0.2cm}
\paragraph{Theoretical comparison between AIPW and ADR estimators.} There are two major differences between AIPW and ADR estimators. First, the AIPW estimator requires {\it a priori} knowledge of the true logging policy, but the ADR estimator does not. Another main difference between the two is the convergence rate of nuisance estimators $\hat{g}_{t-1}$ and $\hat{f}_{t-1}$. For the AIPW estimator, only the uniform convergence in probability is required, but the ADR estimator requires specific convergence rates on $\hat{g}_{t-1}$ and $\hat{f}_{t-1}$. This difference comes from unbiasedness. The AIPW estimator is unbiased; therefore, the convergence of the asymptotic variance is essential, where it converges with $\op(1)$ if $\hat{f}$ and $\pi_t$ is $\op(1)$. Thus, the AIPW estimator does not require specific convergence rates. On the other hand, the ADR estimator requires the asymptotic bias term to vanish in a specific order. For this purpose, it imposes specific convergence rates on the nuisance estimators. From another perspective, a standard DML for i.i.d. samples and Theorem~\ref{thm:asymp_dist_adr} require $\|\hat{g}_{t-1}(a| X_t) - \pi_{t-1}(a| X_t, \Omega_{t-1})\|_{2}=\op(t^{-p})$, $\|\hat{f}_{t-1}(a,X_t)-f^*(a,X_t)\|_2=\op(t^{-q})$, and $p+q=1/2$. 

\vspace{-0.3cm}
\paragraph{Asymptotic efficiencies.}
As shown in Theorem~\ref{thm:asymp_dist_adr} and Proposition~\ref{prp:asymp_dist_a2ipw}, ADR and AIPW estimators achieve the semiparametric lower bound. On the other hand, the asymptotic variance of the IPW estimator using the true logging policy $\pi_t$ is larger than the lower bound \citep{hirano2003,Laan2008TheCA,Kato2020adaptive}. Although it is known that the IPW estimator using the estimated logging policy can achieve the lower bound under some conditions with i.i.d. samples \citep{hirano2003}, the asymptotic property under dependent samples is still unknown.

\section{Adaptive-fitting: DML for dependent samples}
\label{sec:relation_ddm}
We generalize the method used to derive the asymptotic normality of the ADR estimator as a variant of DML. Let us define the parameter of interest $\theta_0$ that  satisfies $\mathbb{E}[\psi(W_t; \theta_0, \eta_0)] = 0$, where $\{W_t\}^T_{t=1}$ are observations, $\eta_0$ is a nuisance parameter, and $\psi$ is a score function. We consider obtaining an asymptotic normal estimator of $\theta_0$ when using complex and data-adaptive regression methods, such as random forests, neural networks, and Lasso, to estimate the nuisance parameter $\eta_0$. Under such a situation, the Donsker condition does not hold, in general. Sample-splitting is a typical approach to control the complexities of semiparametric inference without the Donsker condition \citep{klaassen1987,ZhengWenjing2011CTME,ChernozhukovVictor2018Dmlf}. In DML of \citet{ChernozhukovVictor2018Dmlf}, the dataset with i.i.d. samples $\{W_t\}^T_{t=1}$ is separated into several subgroups. Then, a semiparametric estimator for each subgroup is constructed, but the nuisance estimators are constructed from the other subgroups. Here, the nuisance estimators are independent in the expectation conditioned on the other subgroups. Thus, the complexities are controlled without the Donsker condition, and standard nonparametric convergence rate conditions on nuisance estimators suffice to show the asymptotic normality of the semiparametric estimator. \citet{ChernozhukovVictor2018Dmlf} called the method cross-fitting.

\begin{figure}[t]
\vspace{-0.5cm}
\begin{center}
 \includegraphics[width=100mm]{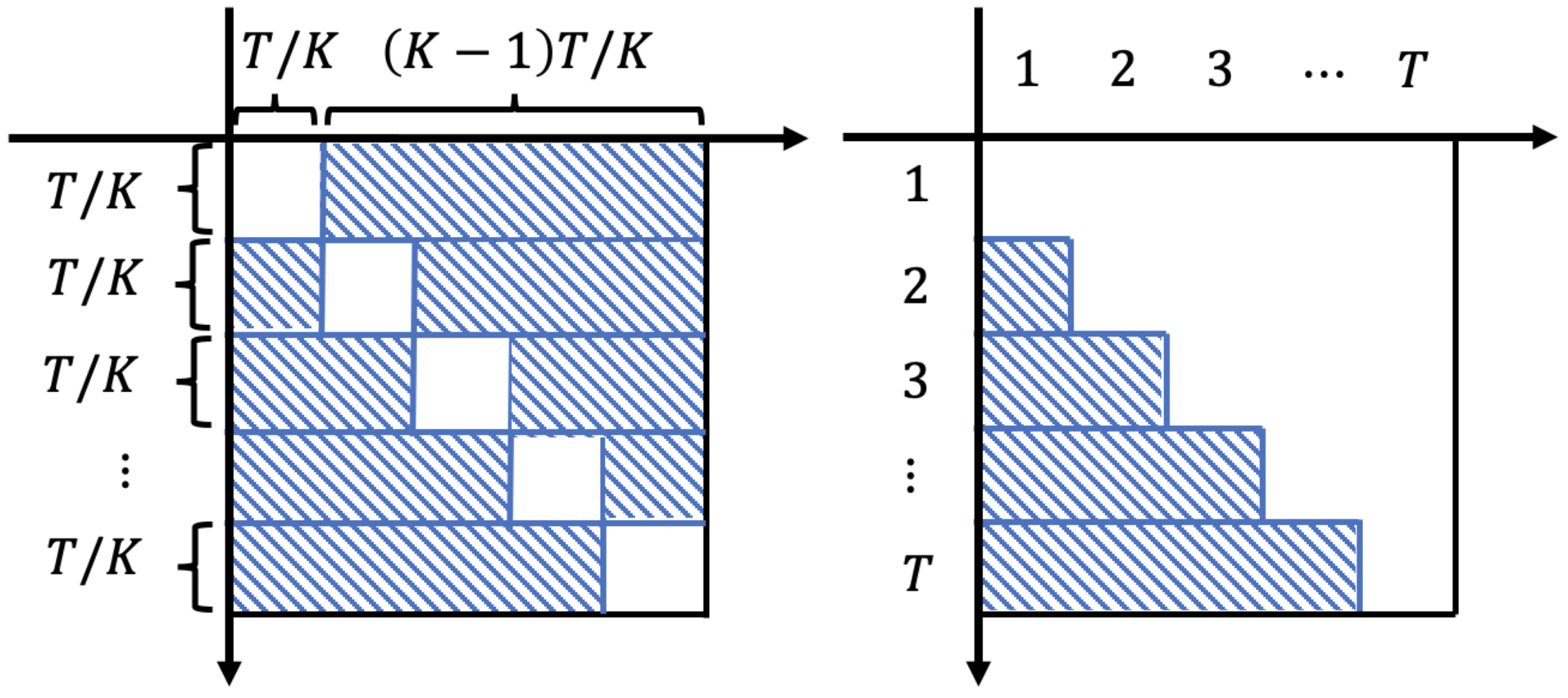}
\end{center}
\vspace{-0.3cm}
\caption{The difference between cross-fitting (left) and adaptive-fitting (right). }
\label{fig:concept}
\vspace{-0.15cm}
\end{figure} 

However, cross-fitting of \citet{ChernozhukovVictor2018Dmlf} cannot be applied when samples $\{W_t\}^T_{t=1}$ are dependent. Therefore, the ADR estimator uses step-wise nuisance estimators based on past observations $\Omega_{t-1}$. This construction is inspired by \citet{Laan2016onlinetml}, which proposed a sample-splitting for batched dependent samples. The nuisance estimators are independent in the expectation over the $t$-th samples conditioned on $\Omega_{t-1}$. Thus, we can regard our sample-splitting as another approach for DML. We call this step-wise construction \emph{adaptive-fitting}, in contrast to cross-fitting. We briefly summarize the procedure as follows: (i) at each step $t=1,2,\dots,T$, we estimate $\eta_0$ only using $\{W_s\}^{t-1}_{s=1}$ and denote the estimaters as $\{\eta_{t-1}\}^T_{t=1}$; (ii) then, we substitute $W_{t}$ and $\eta_{t-1}$ into $\psi$ and obtain an estimator $\hat{\theta}_T$ of $\theta_0$ by solving $\frac{1}{T}\sum^T_{t=1}\psi(W_t, \hat{\theta}_T, \eta_{t-1}) = 0$. Under this construction, if (a) an estimator $\check{\theta}_T$ obtained by solving $\frac{1}{T}\sum^T_{t=1}\psi(W_t, \check{\theta}_T, \eta_{0}) = 0$ has the asymptotically normal distribution and (b) the asymptotic bias decays with the rate faster than $\op(-1/\sqrt{t})$, we can obtain the asymptotically normal estimator of $\theta_0$. In Figure~\ref{fig:concept}, we illustrate the difference between cross-fitting (left) and adaptive-fitting (right). The y-axis represents samples used for constructing the OPVE estimator with nuisance estimators using samples represented by the x-axis. In the left graph, for $T/K$ samples, we calculate the sample average of the component including nuisance estimators based on the other $(K-1)T/K$ samples. In the right graph, at period $t$, we use nuisance estimators constructed from samples at $t=1,2,\dots,t-1$. To satisfy (a), we assumed that the convergence of the logging policy (Assumption~\ref{asm:stationarity}). There are other ways to satisfy this condition. For example, we can assume the existence of batches as \citet{Laan2016onlinetml} and \citet{kelly2020batched}. We briefly introduce the ADR estimator for this case in Appendix~\ref{appdx:batch}.


\section{Paradox concerning logging policy}
\label{sec:paradox}
The advantage of the ADR estimator is that, unlike the AIPW estimator, it does not require the true logging policy. However, we find that the ADR estimator often achieves a smaller MSE than the AIPW estimator, which requires the true logging policy. We refer to this phenomenon as a paradox concerning logging policy.

\subsection{Instability of the AIPW estimator}
\label{sec:a2ipw}
\citet{hadad2019} pointed out that the AIPW estimator tends to be unstable when the nuisance parameter $\pi_{t-1}$ can take a value close to zero before converging to $\tilde{\pi}$. To prevent this instability, \citet{hadad2019} proposed the Adaptively Weighted AIPW (AW-AIPW) estimator by adding evaluation weights, which converge to a constant almost surely, to the AIPW estimator proposed in other studies, such as \citet{Laan2008TheCA}. Note that the AW-AIPW estimator cannot mitigate the assumptions on the logging policy $\pi_{t-1}$ required in the AIPW \citep{Laan2008TheCA} and our ADR estimator, namely $\pi_{t-1}$ is non-zero and converges. The aim of adaptive weighting is not in theoretical improvement of the AIPW estimator, but in its empirical stabilization.

\subsection{Sensitivity of the ADR estimator to the logging policy.}
The ADR estimator replaces the true logging policy $\pi_{t-1}$ with its estimator. We find that by replacing the true logging policy with its estimator, we can control the instability caused from $\pi_{t-1}$ by constructing well-formed $\hat{g}_{t-1}$. For example, if we know that the range of $\tilde{\pi}$ is $(\varepsilon, 1-\varepsilon)$ for $0< \varepsilon < 1-\varepsilon < 1$, we can add the clipping technique when estimating $\hat{g}_{t-1}$. For example, in early periods, we can set $\hat{g}_{t-1}$ as $0.5$. Such techniques is greatly beneficial.
Note that if the range of $\tilde{\pi}$ is truly $(\varepsilon, 1-\varepsilon)$, the clipping of $\hat{g}_{t-1}$ does not cause clipping bias; that is, the estimator correctly converges to the asymptotic distribution shown in Theorem~\ref{thm:asymp_dist_adr}. 

The main difference between the AW-AIPW estimator and the ADR estimator is that the former requires information of the true logging policy $\pi_{t-1}$, while the latter does not by replacing it with its estimator. Note that the AW-AIPW estimator requires the same assumptions on $\pi_{t-1}$ as the AIPW and ADR estimator and loses unbiasedness due to the evaluation weight. In addition, the construction of the adaptive weight is not obvious when there are covariates $X_t$ because all but one method for weight construction in \citet{hadad2019} does not consider covariates. Our finding also shares motivation with weight clipping for i.i.d. samples, such as \citet{Bottou2013}. We can regard the AW-AIPW estimator as a variant of this method, where the weight clipping can decay as $\pi_t$ converges, and be ignored asymptotically. However, our interest is in asymptotic normality from dependent samples without the true logging policy, which is not discussed in the literature.

\begin{table*}[t]
\vspace{-0.1cm}
\caption{The results of Section~\ref{sec:a2ipw} with sample sizes $T=250, 500,750$. We show the RMSE, SD, and coverage ratio of the confidence interval (CR). We highlight in red bold two estimators with the lowest RMSE.
Estimators with asymptotic normality are marked with $\dagger$, and estimators that do not require the true logging policy are marked with $*$.} 
\vspace{-0.3cm}
\label{tbl:exp_table1}
\begin{center}
\scalebox{0.61}[0.61]{
\begin{tabular}{|l||rrr||rrr|rrr|rrr||rrr|rrr|}
\hline
\multicolumn{19}{|c|}{LinUCB policy}\\
\hline
{$T$} &    \multicolumn{3}{c||}{ADR $\dagger*$} &   \multicolumn{3}{c|}{IPW $\dagger$} &   \multicolumn{3}{c|}{AIPW $\dagger$} &   
\multicolumn{3}{c||}{AW-AIPW $\dagger$} &   \multicolumn{3}{c|}{DM $*$} &    \multicolumn{3}{c|}{EIPW $*$}\\
\hline
 &      RMSE &      SD &      CR &      RMSE &      SD &      CR &      RMSE &      SD &      CR &      RMSE &     SD &     CR &  RMSE &      SD &      CR &  RMSE &      SD &      CR\\
\hline
250 &  \textcolor{red}{\textbf{0.054}} &  0.003 &  0.97 &  0.117 &  0.034 &  0.87 &  0.106 &  0.036 &  0.90 &  0.152 &  0.022 &  0.09 &  0.073 &  0.006 &  0.12 &  0.103 &  0.017 &  0.82 \\
500 &  \textcolor{red}{\textbf{0.040}} &  0.002 &  0.94 &  0.078 &  0.014 &  0.93 &  0.059 &  0.007 &  0.98 &  0.179 &  0.020 &  0.00 &  0.046 &  0.002 &  0.16 &  0.121 &  0.013 &  0.40 \\
750 &  \textcolor{red}{\textbf{0.033}} &  0.001 &  0.97 &  0.064 &  0.008 &  0.90 &  0.059 &  0.006 &  0.97 &  0.176 &  0.015 &  0.00 &  0.040 &  0.002 &  0.15 &  0.122 &  0.012 &  0.22 \\
\hline
\end{tabular}
} 
\end{center}
\vspace{-0.3cm}
\begin{center}
\scalebox{0.61}[0.61]{
\begin{tabular}{|l||rrr||rrr|rrr|rrr||rrr|rrr|}
\hline
\multicolumn{19}{|c|}{LinTS policy}\\
\hline
{$T$} &    \multicolumn{3}{c||}{ADR $\dagger*$} &   \multicolumn{3}{c|}{IPW $\dagger$} &   \multicolumn{3}{c|}{AIPW $\dagger$} &   
\multicolumn{3}{c||}{AW-AIPW $\dagger$} &   \multicolumn{3}{c|}{DM $*$} &    \multicolumn{3}{c|}{EIPW $*$}\\
\hline
 &      RMSE &      SD &      CR &      RMSE &      SD &      CR &      RMSE &      SD &      CR &      RMSE &     SD &     CR &  RMSE &      SD &      CR &  RMSE &      SD &      CR\\
\hline
250 &  \textcolor{red}{\textbf{0.049}} &  0.002 &  0.96 &  0.128 &  0.037 &  0.82 &  0.107 &  0.031 &  0.93 &  0.121 &  0.016 &  0.20 &  0.069 &  0.005 &  0.15 &  0.077 &  0.011 &  0.89 \\
500 &  \textcolor{red}{\textbf{0.035}} &  0.001 &  0.95 &  0.151 &  0.139 &  0.91 &  0.122 &  0.086 &  0.91 &  0.136 &  0.014 &  0.06 &  0.048 &  0.002 &  0.13 &  0.090 &  0.007 &  0.57 \\
750 &  \textcolor{red}{\textbf{0.028}} &  0.001 &  0.97 &  0.069 &  0.010 &  0.90 &  0.053 &  0.005 &  0.91 &  0.154 &  0.013 &  0.00 &  0.036 &  0.001 &  0.14 &  0.113 &  0.009 &  0.23 \\
\hline
\end{tabular}
} 
\end{center}
\vspace{-0.5cm}
\end{table*}

\begin{figure}[t]
\begin{center}
 \includegraphics[width=130mm]{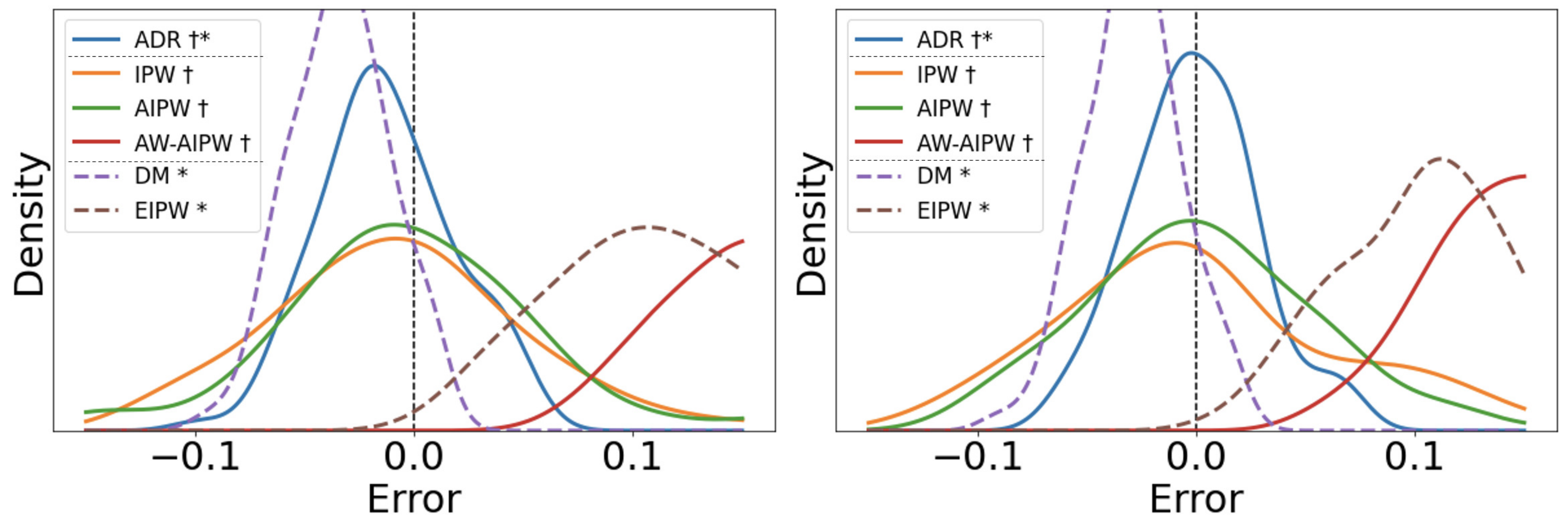}
\end{center}
\vspace{-0.5cm}
\caption{This figure illustrates the error distributions of estimators for OPVE from dependent samples generated with the LinUcB(left) and LinTS(right) with the sample size $750$. We smoothed the error distributions using kernel density estimation. Estimators with asymptotic normality are marked with $\dagger$, and estimators that do not require a true logging policy are marked with $*$.}
\label{fig:bias}
\vspace{-0.5cm}
\end{figure} 

\subsection{Simulation studies for OPVE estimators}
\label{sec:num_exp}
For reasons given in the previous section, the ADR estimator may empirically perform better than the AIPW estimator because the estimator $\hat{g}_{t-1}$ stabilizes the ADR estimator by absorbing the instability of $\pi_{t-1}$. 
We empirically show this paradox using a synthetic dataset. We generate an artificial pair of covariate and potential outcome, $(X_t, Y_t(1), Y_t(2), Y_t(3))$. The covariate $X_t$ is a $10$ dimensional vector generated from the standard normal distribution. For $a\in\{1,2,3\}$, $Y_t(a)=1$ if $a$ is chosen with probability $q(a| x) = \frac{\exp(g(a, x))}{\sum^3_{a'=1}\exp(g(a', x))}$, where $g(1, x) = \sum^{10}_{d=1} X_{t,d}$, $g(2, x) = \sum^{10}_{d=1} W_dX^2_{t,d}$, $g(3, x) = \sum^{10}_{d=1} W_d|X_{t,d}|$, and $W_d$ is uniform randomly chosen from $\{-1, 1\}$. We generate three datasets, $\mathcal{S}^{(1)}_{T_{(1)}}$, $\mathcal{S}^{(2)}_{T_{(2)}}$, and $\mathcal{S}^{(3)}_{T_{(3)}}$, where $\mathcal{S}^{(m)}_{T_{(m)}} = \{(X^{(m)}_t, Y^{(m)}_t(1), Y^{(m)}_t(2), Y^{(m)}_t(3))\}^{T_{(m)}}_{t=1}$. First, we train an evaluation function $\epol$ by solving a prediction problem between $X^{(1)}_t$ and $Y^{(1)}_t(1), Y^{(1)}_t(2), Y^{(1)}_t(3)$ using $\mathcal{S}^{(1)}_{T_{(1)}}$. Then, we regard the evaluation function $\epol$ as a logging policy and apply it on the independent dataset $\mathcal{S}^{(2)}_{T_{(2)}}$ to obtain $\{(X'_t, A'_t, Y'_t)\}^{T_{(2)}}_{t=1}$, where $A'_t$ is an action chosen from the evaluation function and $Y^{(m)}_t = \sum^3_{a=1}\mathbbm{1}[A^{(m)}_t = a]Y^{(m)}_t(a)$. Then, we estimate the true value $R(\epol)$ as $\frac{1}{T_{(2)}}\sum^{T_{(2)}}_{t=1}Y^{(m)}_t$. Next, using the datasets $\mathcal{S}^{(3)}_{T_{(3)}}$ and an MAB algorithm, we generate a bandit dataset $\mathcal{S}=\{(X_t, A_t, Y_t)\}^{T_{(3)}}_{t=1}$. For $\mathcal{S}$, we apply the ADR estimator, IPW estimator with the true logging policy (IPW), AIPW estimator (AIPW), AW-AIPW estimator (AW-AIPW) IPW estimator with estimated logging policy (EIPW), and DM estimator (DM). For the AW-AIPW estimator, we use the evaluation weight $\sqrt{\pi_{a,x}(t)}/T$, proposed in \citet{hadad2019}. The other proposed weights of \citet{hadad2019} seem to be specific to situations without covariate $X_t$. For estimating $f^*$ and the logging policy, we use the kernelized Ridge least squares and logistic regression, respectively. We use a Gaussian kernel, and both hyper-parameters of the Ridge and kernel are chosen from $\{0.01, 0.1, 1\}$. We define the estimation error as $R(\epol) - \widehat{R}(\epol)$. We conduct ten experiments by changing sample sizes and MAB algorithms. For the sample size $T_{(3)}$, we use $100$, $250$, $500$, $750$, and $10,000$ with the LinUCB and LinTS algorithms. For the sample sizes $T_{(1)}$ and $T_{(2)}$, we use $1,000$ and $100,000$ respectively. We conduct $100$ trials to obtain the root MSEs (RMSEs), the standard deviations of MSEs (SDs), and the coverage ratios (CRs) of the $95\%$ confidence interval (percentage that the confidence interval covers the true value). The results with $T=250$, $500$, $750$ are shown in Table~\ref{tbl:exp_table1} and those with $T=750$ are shown in Figure~\ref{fig:bias}. The other results are shown in Appendix~\ref{appdx:det_exp}. In all cases, the ADR estimator performs well compared to other methods. Although the AIPW estimator has asymptotic normality, the performance is not comparable with the ADR estimator. The reason why the AW-AIPW estimator underperforms is that the performance depends on the choice of evaluation weights, and the weight proposed by \citet{hadad2019} is not suitable for our case with covariates. The EIPW estimator suffers from the dependency problem. We postulate that this is owing to the instability of $\pi_t$.

\begin{remark}[Paradox for i.i.d. samples]
This paradox is similar to the well-known property that the IPW estimator using an estimated propensity score shows a smaller asymptotic variance than the IPW using the true one \citep{hirano2003,Henmi2004paradox,Henmi2007imp}. However, as discussed above, we consider this to be a different phenomenon. In these studies, the paradox is mainly explained by differences in the asymptotic variance between IPW estimators with the true and estimated propensity score. On the other hand, for our case, AIPW and ADR estimators have the same asymptotic variance, unlike IPW-type estimators. Therefore, we cannot resolve the paradox by the aforementioned explanations. The instability of the AIPW estimator is consistent with the findings of \citet{hadad2019}, but, in our paper, we suggest using the ADR estimator as another solution.
\end{remark}

\section{Experiments}
\label{sec:exp}
Following \citet{dudik2011doubly}, we evaluate the estimators using classification datasets by transforming them into contextual bandit data. From the LIBSVM repository \citep{CC01a}, we use the {\tt mnist}, {\tt satimage}, {\tt sensorless}, and {\tt connect-4} datasets \footnote{\url{https://www.csie.ntu.edu.tw/~cjlin/libsvmtools/datasets/.}}. The dataset description is in Table~\ref{Dataset} of Appendix~\ref{appdx:det_exp}. We construct a logging policy as $\pi_t(a| x) = \alpha\pi^m_t(a| x)+(1-\alpha)\frac{0.1}{K}$, where $\alpha\in (0,1)$ is a constant, and $\pi^m_t(a| x)$ is a policy such that $\pi^m_t(a| x) = 1$ for an action $a\in\mathcal{A}$ and $\pi^m_t(a'| x) = 0$ for the other actions ($a'\neq a$). The policy $\pi^m_t(a| x)$ is determined by logistic regression and MAB algorithms. For MAB algorithms, we use upper confidence bound and Thompson sampling with a linear model, which are denoted as LinUCB \citep{Wei2011} and LinTS \citep{Agrawal2013}. The evaluation function is fixed at $\epol(a| x) = 0.9\pi^d(a| x)+\frac{0.1}{K}$, where $\pi_d$ is a prediction of a logistic regression. 

We focus on the ADR estimator and compare it to the IPW, EIPW, AIPW, and DM estimators, as done in Section~\ref{sec:num_exp}. We omit the AW-AIPW estimator, as we found that the performance depends on the choice of the evaluation weight (see Section~\ref{sec:paradox}). Additionally, the paper does not consider or discuss situations where there exist covariates. We compare those estimators using the benchmark datasets generated through the process explained above. For $\alpha\in\{0.7, 0.4, 0.1\}$ and the sample sizes $800$, $1,000$, and $1,200$, we calculate the RMSEs and the SDs over $10$ trials. The results of {\tt mnist} and {\tt satimage} with the LinUCB policy and $T=1,000$ are shown in Tables~\ref{tbl:exp_man_table1}. The full results, including the results with the LinTS policy and i.i.d. samples, are shown in Appendix~\ref{appdx:det_exp}. As discussed in Section~\ref{sec:paradox}, the ADR estimator performs well. Although the asymptotic distributions of AIPW and ADR estimators are the same, the AIPW estimator shows poorer performance. As Section~\ref{sec:num_exp}, we consider this to be because the estimator of $\pi_t$ is absorbing the instability of $\pi_t$. 

\begin{table*}[t]
\vspace{-0.2cm}
\caption{The results of benchmark datasets with the LinUCB policy. We highlight in red bold the estimator with the lowest RMSE and highlight in under line the estimator with the lowest RMSE among estimators that do not use the true logging policy. Estimators with asymptotic normality are marked with $\dagger$, and estimators that do not require the true logging policy are marked with $*$.} 
\label{tbl:exp_man_table1}
\begin{center}
\scalebox{0.70}[0.70]{
\begin{tabular}{|l||rr||rr|rr||rr|rr|}
\hline
{\tt mnist}$\ \ \ \ \ \ \ \ $ &    \multicolumn{2}{|c||}{ADR\ $\dagger*$} &   \multicolumn{2}{c|}{IPW $\dagger$} &   \multicolumn{2}{c||}{AIPW $\dagger$} &    \multicolumn{2}{c|}{DM $*$} &    \multicolumn{2}{c|}{EIPW $*$} \\
\hline
$\alpha$ &      RMSE &      SD &      RMSE &      SD &      RMSE &      SD &      RMSE &      SD &      RMSE &      SD \\
\hline
0.7 &  \underline{\textcolor{red}{\textbf{0.046}}} &  0.002 &  0.100 &  0.011 &  0.162 &  0.027 &  0.232 &  0.013 &  0.148 &  0.014 \\
0.4 &  \underline{\textcolor{red}{\textbf{0.028}}} &  0.001 &  0.068 &  0.005 &  0.112 &  0.009 &  0.249 &  0.010 &  0.080 &  0.004 \\
0.1 &  \underline{0.086} &  0.006 &  \textcolor{red}{\textbf{0.078}} &  0.006 &  0.085 &  0.008 &  0.299 &  0.024 &  0.091 &  0.008 \\
\hline
\end{tabular}
}
\end{center}
\vspace{-0.3cm}
\begin{center}
\scalebox{0.70}[0.70]{
\begin{tabular}{|l||rr||rr|rr||rr|rr|}
\hline
{\tt satimage}$\ \ $ &    \multicolumn{2}{|c||}{ADR\ $\dagger*$} &   \multicolumn{2}{c|}{IPW $\dagger$} &   \multicolumn{2}{c||}{AIPW $\dagger$} &    \multicolumn{2}{c|}{DM $*$} &    \multicolumn{2}{c|}{EIPW $*$} \\
\hline
$\alpha$ &      RMSE &      SD &      RMSE &      SD &      RMSE &      SD &      RMSE &      SD &      RMSE &      SD \\
\hline
0.7 &  \underline{\textcolor{red}{\textbf{0.013}}} &  0.000 &  0.098 &  0.011 &  0.060 &  0.004 &  0.037 &  0.001 &  0.056 &  0.002 \\
0.4 &  \underline{\textcolor{red}{\textbf{0.022}}} &  0.000 &  0.078 &  0.008 &  0.019 &  0.000 &  0.043 &  0.001 &  0.060 &  0.002 \\
0.1 &  \underline{\textcolor{red}{\textbf{0.029}}} &  0.001 &  0.078 &  0.005 &  0.061 &  0.008 &  0.041 &  0.002 &  0.041 &  0.002 \\
\hline
\end{tabular}
}
\end{center}
\vspace{-0.65cm}
\end{table*}

Note that the ADR, IPW, and AIPW estimators are asymptotically normal, but the IPW and APIW estimators are not feasible when the true logging policy is not given. To the best of our knowledge, asymptotic normality of the EIPW and DM estimators has not been shown when samples are dependent, and the proof is non-trivial owing to the dependency and the Donsker condition. 



\section{Conclusion}
DR-type estimators are crucial in causal inference because they do not assume {\it a priori} knowledge of the true logging policy and they asymptotically follow a normal distribution under standard convergence rate conditions of nuisance estimators. However, existing studies have rarely discussed DR-type estimators when samples are dependent. We derived the ADR estimator and proposed adaptive-fitting as a variant of DML to obtain asymptotic normality. In experiments, we found a paradox that the ADR estimator tends to be more stable than the AIPW estimator and conjectured that this is because the ADR estimator absorbs the instability of the true logging policy $\pi_t$. 

In OPVE with dependent samples, we need to put some assumptions on the behavior of $\pi_t$ to obtain asymptotic normality (Assumption~\ref{asm:stationarity}). These assumptions can be broken if the time-series is very complicated. However, we note that this is a limitation of the entire field. The asymptotic normality and double robustness of OPVE estimators are necessary theoretical properties to avoid deriving false causality. Since causal inference is often used in applications related closely to public policy, we consider understanding these limitations critical.

\bibliographystyle{icml2021}
\bibliography{arXiv.bbl}

\clearpage
\onecolumn
\appendix

\section{Mathematical preliminaries}
\label{appdx:prelim}

\begin{proposition}[$L^r$ Convergence Theorem, \citet{loeve1977probability}]
\label{prp:lr_conv_theorem}
Let $0<r<\infty$, suppose that $\mathbb{E}\big[|a_n|^r\big] < \infty$ for all $n$ and that $a_n \xrightarrow{\mathrm{p}}a$ as $n\to \infty$. The following are equivalent: 
\begin{description}
\item{(i)} $a_n\to a$ in $L^r$ as $n\to\infty$;
\item{(ii)} $\mathbb{E}\big[|a_n|^r\big]\to \mathbb{E}\big[|a|^r\big] < \infty$ as $n\to\infty$; 
\item{(iii)} $\big\{|a_n|^r, n\geq 1\big\}$ is uniformly integrable.
\end{description}
\end{proposition}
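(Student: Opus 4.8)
The plan is to prove the equivalence by establishing the cycle $(i)\Rightarrow(ii)\Rightarrow(iii)\Rightarrow(i)$. Throughout I write $f_n=|a_n|^r$ and $f=|a|^r$, so that the continuous mapping theorem turns the hypothesis $a_n\xrightarrow{\mathrm{p}}a$ into $f_n\xrightarrow{\mathrm{p}}f$. Two families of elementary inequalities are used repeatedly, and the argument must be split by the range of $r$: for $r\ge 1$, I rely on Minkowski's inequality and the convexity bound $|x+y|^r\le 2^{r-1}(|x|^r+|y|^r)$; for $0<r<1$, I use the subadditivity of $t\mapsto t^r$, which gives $\big||x|^r-|y|^r\big|\le|x-y|^r$ and $|x+y|^r\le|x|^r+|y|^r$.

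For $(i)\Rightarrow(ii)$, when $r\ge 1$ the map $\|\cdot\|_r$ is a genuine norm, so the reverse triangle inequality yields $\big|\,\|a_n\|_r-\|a\|_r\,\big|\le\|a_n-a\|_r\to 0$; raising to the $r$-th power gives $\E[f_n]\to\E[f]$, with $\E[f]=\|a\|_r^r<\infty$. When $0<r<1$, subadditivity gives directly $\big|\E[f_n]-\E[f]\big|\le\E\big|\,|a_n|^r-|a|^r\,\big|\le\E[|a_n-a|^r]\to 0$, so again $\E[f_n]\to\E[f]<\infty$.

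For $(iii)\Rightarrow(i)$, I first note that uniform integrability forces $\sup_n\E[f_n]<\infty$, whence Fatou's lemma gives $\E[f]\le\liminf_n\E[f_n]<\infty$, so $a\in L^r$. Setting $h_n=|a_n-a|^r$, I have $h_n\xrightarrow{\mathrm{p}}0$ and $h_n\le C_r(f_n+f)$ with $C_r=\max(1,2^{r-1})$. The family $\{f_n+f\}_n$ is uniformly integrable, being the sum of a uniformly integrable family and a single integrable function, so $\{h_n\}$ is dominated by a uniformly integrable family and is itself uniformly integrable. Convergence in probability together with uniform integrability then gives $\E[h_n]\to 0$ by Vitali's convergence theorem, which is exactly $a_n\to a$ in $L^r$.

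The implication $(ii)\Rightarrow(iii)$ is the delicate step, and I expect it to be the main obstacle, since one must manufacture a tail bound that is \emph{uniform in $n$} out of the mere convergence of the integrals. The tool is the truncation identity $(f_n-M)^+=f_n-f_n\wedge M$ together with the elementary bound $x\,\mathbbm{1}\{x>2M\}\le 2(f_n-M)^+$ valid for $x=f_n\ge 0$. Because $f_n\wedge M$ is bounded and $f_n\wedge M\xrightarrow{\mathrm{p}}f\wedge M$, bounded convergence gives $\E[f_n\wedge M]\to\E[f\wedge M]$ for each fixed $M$, and combined with the hypothesis $\E[f_n]\to\E[f]$ this yields $\E[(f_n-M)^+]\to\E[(f-M)^+]$. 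Given $\epsilon>0$, I choose $M$ with $\E[(f-M)^+]<\epsilon/4$ (possible since $f\in L^1$), then pick $N$ so that $\E[(f_n-M)^+]<\epsilon/2$ for $n\ge N$, which bounds $\sup_{n\ge N}\E[f_n\,\mathbbm{1}\{f_n>2M\}]\le\epsilon$; the finitely many indices $n<N$ have $\E[f_n]<\infty$ by hypothesis and are absorbed by enlarging the truncation level. This gives $\lim_{M\to\infty}\sup_n\E[f_n\,\mathbbm{1}\{f_n>M\}]=0$, i.e.\ uniform integrability, closing the cycle.
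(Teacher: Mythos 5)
Your proposal is correct, but note that there is no in-paper argument to compare it against: Proposition~\ref{prp:lr_conv_theorem} is imported verbatim from \citet{loeve1977probability} as a mathematical preliminary and is invoked as a black box in the proofs of Theorem~\ref{thm:asymp_dist_adr} and Theorem~\ref{thm:asymp_batch}, with no proof supplied in the paper. Judged on its own, your cyclic scheme $(i)\Rightarrow(ii)\Rightarrow(iii)\Rightarrow(i)$ is the standard textbook route and each step is sound: the case split at $r=1$ (Minkowski versus subadditivity of $t\mapsto t^r$) correctly handles the fact that $\|\cdot\|_r$ fails to be a norm for $0<r<1$; the truncation identity $\mathbb{E}[(f_n-M)^+]=\mathbb{E}[f_n]-\mathbb{E}[f_n\wedge M]$ combined with bounded convergence (valid under convergence in probability on a probability space) is exactly the right device for converting convergence of moments into a tail bound uniform in $n$ in $(ii)\Rightarrow(iii)$, and the elementary bound $x\,\mathbbm{1}\{x>2M\}\le 2(x-M)^+$ finishes it cleanly; the domination $|a_n-a|^r\le \max(1,2^{r-1})\left(|a_n|^r+|a|^r\right)$ by a uniformly integrable family plus Vitali's theorem closes the loop. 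Two small points you should make explicit in a full write-up, neither of which is a genuine gap: in $(iii)\Rightarrow(i)$, Fatou's lemma as you state it requires the subsequence argument (every subsequence of $f_n$ admits a further almost-surely convergent subsequence), since $f_n\to f$ only in probability; and in $(i)\Rightarrow(ii)$ for $r\ge 1$, the reverse triangle inequality presupposes $a\in L^r$, which follows from $\|a\|_r\le\|a_n\|_r+\|a_n-a\|_r<\infty$ but deserves a line.
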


\begin{proposition}
\label{prp:mrtgl_WLLN}[Weak Law of Large Numbers for Martingale, \citet{hall2014martingale}]
Let $\{S_n = \sum^{n}_{i=1} X_i, \mathcal{H}_{t}, t\geq 1\}$ be a martingale and $\{b_n\}$ a sequence of positive constants with $b_n\to\infty$ as $n\to\infty$. Then, writing $X_{ni} = X_i\mathbbm{1}[|X_i|\leq b_n]$, $1\leq i \leq n$, we have that $b^{-1}_n S_n \xrightarrow{\mathrm{p}} 0$ as $n\to \infty$ if 
\begin{description}
\item[(i)] $\sum^n_{i=1}P(|X_i| > b_n)\to 0$;
\item[(ii)] $b^{-1}_n\sum^n_{i=1}\mathbb{E}[X_{ni}| \mathcal{H}_{t-1}] \xrightarrow{\mathrm{p}} 0$, and;
\item[(iii)] $b^{-2}_n \sum^n_{i=1}\big\{\mathbb{E}[X^2_{ni}] - \mathbb{E}\big[\mathbb{E}\big[X_{ni}| \mathcal{H}_{t-1}\big]\big]^2\big\}\to 0$.
\end{description}
\end{proposition}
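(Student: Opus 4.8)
The plan is to run the classical three-step truncation argument for martingale weak laws, reducing the claim to a second-moment estimate that condition (iii) is built to annihilate. Throughout I read the conditioning index $t-1$ in (ii)--(iii) as $i-1$, so that $X_i$ is $\calh_i$-measurable and, since $\{S_n,\calh_n\}$ is a martingale, $\E[X_i\mid\calh_{i-1}]=0$. Write $S_n'=\sum_{i=1}^n X_{ni}$ for the truncated partial sum, $\mu_{ni}=\E[X_{ni}\mid\calh_{i-1}]$ for the conditional mean of the truncated increment, and $D_{ni}:=X_{ni}-\mu_{ni}$ for its centering.

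First I would remove the untruncated event. Since $\{S_n\neq S_n'\}\subseteq\bigcup_{i=1}^n\{|X_i|>b_n\}$, a union bound gives $\P(S_n\neq S_n')\le\sum_{i=1}^n\P(|X_i|>b_n)$, which tends to $0$ by (i); hence $b_n^{-1}(S_n-S_n')\xrightarrow{\mathrm{p}}0$ and it suffices to control $b_n^{-1}S_n'$. Next I would center, writing $b_n^{-1}S_n'=b_n^{-1}\sum_{i=1}^n D_{ni}+b_n^{-1}\sum_{i=1}^n\mu_{ni}$, where the last sum vanishes in probability by (ii). So the whole statement reduces to showing $b_n^{-1}\sum_{i=1}^n D_{ni}\xrightarrow{\mathrm{p}}0$.

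The core step is an $L^2$ bound. By construction $\E[D_{ni}\mid\calh_{i-1}]=0$, so for each fixed $n$ the $\{D_{ni}\}_{i\le n}$ form a martingale-difference array; the cross terms vanish because, for $i<j$, $\E[D_{ni}D_{nj}]=\E[D_{ni}\,\E[D_{nj}\mid\calh_{j-1}]]=0$ by the tower property. Hence $\E[(\sum_i D_{ni})^2]=\sum_i\E[D_{ni}^2]$, and using $\E[X_{ni}\mu_{ni}]=\E[\mu_{ni}^2]$ one gets $\E[D_{ni}^2]=\E[X_{ni}^2]-\E[(\E[X_{ni}\mid\calh_{i-1}])^2]$. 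Summing and dividing by $b_n^2$ produces exactly the expression in (iii), which tends to $0$; thus $b_n^{-1}\sum_i D_{ni}\to 0$ in $L^2$, and therefore in probability by Chebyshev's inequality. Assembling the three reductions gives $b_n^{-1}S_n\xrightarrow{\mathrm{p}}0$.

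I expect the difficulty to be bookkeeping rather than anything deep. Two points deserve care: first, pinning down the reading of (iii), since the displayed $\E[\E[X_{ni}\mid\calh_{t-1}]]^2$ must be interpreted as the second moment $\E[(\E[X_{ni}\mid\calh_{i-1}])^2]$ of the conditional mean for the variance identity to close; second, justifying that the centered truncated increments $D_{ni}$ genuinely form a martingale-difference array even though truncation destroys the martingale property of the original $X_i$ --- this is exactly why the recentering by $\mu_{ni}$, which is then controlled by (ii), cannot be avoided. The union bound, the orthogonality of $\{D_{ni}\}$, and the passage from $L^2$ to probability are all routine.
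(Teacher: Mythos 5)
Your proof is correct and is precisely the standard truncation argument behind this result (Theorem 2.13 of the cited Hall--Heyde reference); the paper itself states the proposition only by citation and gives no proof, so there is nothing different to compare against. Your reading of the typographical issues ($\mathcal{H}_{t-1}$ as $\mathcal{H}_{i-1}$, and $\mathbb{E}\big[\mathbb{E}[X_{ni}\mid\mathcal{H}_{i-1}]\big]^2$ as the second moment $\mathbb{E}\big[(\mathbb{E}[X_{ni}\mid\mathcal{H}_{i-1}])^2\big]$) is the intended one, and the three reductions --- union bound via (i), recentering via (ii), and the orthogonality-plus-Chebyshev step via (iii) --- all close correctly.
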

\begin{remark} The weak law of large numbers for martingale holds when the random variable is bounded by a constant.
\end{remark}

\section{Proof of Theorem~\ref{thm:asymp_dist_adr}}
\label{appdx:proof_main}

\begin{proof}[Proof of Theorem~\ref{thm:asymp_dist_adr}]
We show asymptotic normality of 
\begin{align*}
\widehat{R}^{\mathrm{ADR}}_T(\epol)=\frac{1}{T}\sum^T_{t=1}\left\{\phi_1(X_t, A_t, Y_t; \hat{g}_{t-1}, \hat{f}_{t-1}) + \phi_2(X_t; \hat{f}_{t-1})\right\},
\end{align*}
where
\begin{align*}
&\phi_1(X_t, A_t, Y_t; g, f)=\sum^K_{a=1}\frac{\epol(a| X_t)\mathbbm{1}[A_t=a]\left(Y_t - f(a, X_t)\right) }{g(a| X_t)}\\
&\phi_2(X_t; f)=\sum^K_{a=1}\epol(a| X_t)f(a, X_t).
\end{align*}

Let us define an AIPW estimator with $\hat{f}=f^*$ as
\begin{align*}
\widehat{R}^*(\epol)=\frac{1}{T}\sum^T_{t=1}\left\{\phi_1(X_t, A_t, Y_t; \pi_{t-1}, f^*) + \phi_2(X_t; f^*)\right\}.
\end{align*}
We decompose $\sqrt{T}\left(R^{\mathrm{ADRE}}(\epol) - R(\epol)\right)$.
\begin{align*}
&\sqrt{T}\left(R^{\mathrm{ADRE}}(\epol) - R(\epol)\right)= \sqrt{T}\left(\widehat{R}^{\mathrm{ADR}}_T(\epol) - \widehat{R}^*(\epol) + \widehat{R}^*(\epol) - R(\epol)\right).
\end{align*}
From Proposition~\ref{prp:asymp_dist_a2ipw} of \citet{Kato2020adaptive}, condition~(i) and (ii), and Assumption~\ref{asm:overlap_pol} and \ref{asm:bounded_reward}, because $\sqrt{T}\left(\widehat{R}^*(\epol) - R(\epol)\right)$ follows asymptotic normal distribution, we want to show 
\begin{align*}
\widehat{R}^{\mathrm{ADR}}_T(\epol) - \widehat{R}^*(\epol) = \mathrm{o}_p(1/\sqrt{T}).
\end{align*}
Here, we have 
\begin{align*}
&\widehat{R}^{\mathrm{ADR}}_T(\epol) - \widehat{R}^*(\epol)\\
&=\frac{1}{T}\sum^T_{t=1}\Bigg\{\phi_1(X_t, A_t, Y_t; \hat{g}_{t-1}, \hat{f}_{t-1}) - \phi_1(X_t, A_t, Y_t; \pi_{t-1}, f^*)\\
&\ \ \ -\mathbb{E}\left[\phi_1(X_t, A_t, Y_t; \hat{g}_{t-1}, \hat{f}_{t-1}) - \phi_1(X_t, A_t, Y_t; \pi_{t-1}, f^*)| \Omega_{t-1}\right]\\
&\ \ \ + \phi_2(X_t; \hat{f}_{t-1}) - \phi_2(X_t; f^*) -\mathbb{E}\left[\phi_2(X_t; \hat{f}_{t-1}) - \phi_2(X_t; f^*)| \Omega_{t-1}\right]\Bigg\}\\
&\ \ \ + \frac{1}{T}\sum^T_{t=1}\mathbb{E}\left[\phi_1(X_t, A_t, Y_t; \hat{g}_{t-1}, \hat{f}_{t-1})| \Omega_{t-1}\right] + \frac{1}{T}\sum^T_{t=1}\mathbb{E}\left[\phi_2(X_t; \hat{f}_{t-1})| \Omega_{t-1}\right]\\
&\ \ \ - \frac{1}{T}\sum^T_{t=1}\mathbb{E}\left[\phi_1(X_t, A_t, Y_t; \pi_{t-1}, f^*)| \Omega_{t-1}\right] - \frac{1}{T}\sum^T_{t=1}\mathbb{E}\left[\phi_2(X_t; f^*)| \Omega_{t-1}\right].
\end{align*}
In the following parts, we separately show that
\begin{align}
\label{eq:part1}
&\sqrt{T}\frac{1}{T}\sum^T_{t=1}\Bigg\{\phi_1(X_t, A_t, Y_t; \hat{g}_{t-1}, \hat{f}_{t-1}) - \phi_1(X_t, A_t, Y_t; \pi_{t-1}, f^*)\\
&\ \ \ -\mathbb{E}\left[\phi_1(X_t, A_t, Y_t; \hat{g}_{t-1}, \hat{f}_{t-1}) - \phi_1(X_t, A_t, Y_t; \pi_{t-1}, f^*)| \Omega_{t-1}\right]\nonumber\\
&\ \ \ + \phi_2(X_t; \hat{f}_{t-1}) - \phi_2(X_t; f^*) -\mathbb{E}\left[\phi_2(X_t; \hat{f}_{t-1}) - \phi_2(X_t; f^*)| \Omega_{t-1}\right]\Bigg\}\nonumber\\
&= \mathrm{o}_p(1);\nonumber
\end{align}
and 
\begin{align}
\label{eq:part2}
&\frac{1}{T}\sum^T_{t=1}\mathbb{E}\left[\phi_1(X_t, A_t, Y_t; \hat{g}_{t-1}, \hat{f}_{t-1})| \Omega_{t-1}\right] + \frac{1}{T}\sum^T_{t=1}\mathbb{E}\left[\phi_2(X_t; \hat{f}_{t-1})| \Omega_{t-1}\right]\\
&- \frac{1}{T}\sum^T_{t=1}\mathbb{E}\left[\phi_1(X_t, A_t, Y_t; \pi_{t-1}, f^*)| \Omega_{t-1}\right] - \frac{1}{T}\sum^T_{t=1}\mathbb{E}\left[\phi_2(X_t; f^*)| \Omega_{t-1}\right] = \mathrm{o}_p(1/\sqrt{T}).\nonumber
\end{align}

\paragraph{Proof of \eqref{eq:part1}.}
For any $\varepsilon > 0$, to show that 
\begin{align*}
&\mathbb{P}\Bigg(\Bigg|\sqrt{T}\frac{1}{T}\sum^T_{t=1}\Bigg\{\phi_1(X_t, A_t, Y_t; \hat{g}_{t-1}, \hat{f}_{t-1}) - \phi_1(X_t, A_t, Y_t; \pi_{t-1}, f^*)\\
&\ \ \ -\mathbb{E}\left[\phi_1(X_t, A_t, Y_t; \hat{g}_{t-1}, \hat{f}_{t-1}) - \phi_1(X_t, A_t, Y_t; \pi_{t-1}, f^*)| \Omega_{t-1}\right]\\
&\ \ \ + \phi_2(X_t; \hat{f}_{t-1}) - \phi_2(X_tt; f^*) -\mathbb{E}\left[\phi_2(X_t; \hat{f}_{t-1}) - \phi_2(X_t; f^*)| \Omega_{t-1}\right]\Bigg\}\Bigg| > \varepsilon \Bigg)\\
& \to 0,
\end{align*}
we show that the mean is $0$ and the variance of the component converges to $0$. Then, from the Chebyshev's inequality, this result yields the statement.

The mean is calculated as 
\begin{align*}
&\sqrt{T}\frac{1}{T}\sum^T_{t=1}\mathbb{E}\Bigg[\Bigg\{\phi_1(X_t, A_t, Y_t; \hat{g}_{t-1}, \hat{f}_{t-1}) - \phi_1(X_t, A_t, Y_t; \pi_{t-1}, f^*)\\
&\ \ \ \ \ \ \ -\mathbb{E}\left[\phi_1(X_t, A_t, Y_t; \hat{g}_{t-1}, \hat{f}_{t-1}) - \phi_1(X_t, A_t, Y_t; \pi_{t-1}, f^*)| \Omega_{t-1}\right]\\
&\ \ \ \ \ \ \ + \phi_2(X_t; \hat{f}_{t-1}) - \phi_2(X_t; f^*) -\mathbb{E}\left[\phi_2(X_t; \hat{f}_{t-1}) - \phi_2(X_t; f^*)| \Omega_{t-1}\right]\Bigg\}\Bigg]\\
&=\sqrt{T}\frac{1}{T}\sum^T_{t=1}\mathbb{E}\Bigg[\mathbb{E}\Bigg[\Bigg\{\phi_1(X_t, A_t, Y_t; \hat{g}_{t-1}, \hat{f}_{t-1}) - \phi_1(X_t, A_t, Y_t; \pi_{t-1}, f^*)\\
&\ \ \ \ \ \ \ -\mathbb{E}\left[\phi_1(X_t, A_t, Y_t; \hat{g}_{t-1}, \hat{f}_{t-1}) - \phi_1(X_t, A_t, Y_t; \pi_{t-1}, f^*)| \Omega_{t-1}\right]\\
&\ \ \ \ \ \ \ + \phi_2(X_t; \hat{f}_{t-1}) - \phi_2(X_t; f^*) -\mathbb{E}\left[\phi_2(X_t; \hat{f}_{t-1}) - \phi_2(X_t; f^*)| \Omega_{t-1}\right]\Bigg\}| \Omega_{t-1}\Bigg]\Bigg]\\
& = 0
\end{align*}

Because the mean is $0$, the variance is
\begin{align*}
&\mathrm{Var}\Bigg(\sqrt{T}\frac{1}{T}\sum^T_{t=1}\Bigg\{\phi_1(X_t, A_t, Y_t; \hat{g}_{t-1}, \hat{f}_{t-1}) - \phi_1(X_t, A_t, Y_t; \pi_{t-1}, f^*)\\
&\ \ \ \ \ \ \ -\mathbb{E}\left[\phi_1(X_t, A_t, Y_t; \hat{g}_{t-1}, \hat{f}_{t-1}) - \phi_1(X_t, A_t, Y_t; \pi_{t-1}, f^*)| \Omega_{t-1}\right]\\
&\ \ \ \ \ \ \ + \phi_2(X_t; \hat{f}_{t-1}) - \phi_2(X_t; f^*) -\mathbb{E}\left[\phi_2(X_t; \hat{f}_{t-1}) - \phi_2(X_t; f^*)| \Omega_{t-1}\right]\Bigg\}\Bigg)\\
&=\mathbb{E}\Bigg[\Bigg(\sqrt{T}\frac{1}{T}\sum^T_{t=1}\Bigg\{\phi_1(X_t, A_t, Y_t; \hat{g}_{t-1}, \hat{f}_{t-1}) - \phi_1(X_t, A_t, Y_t; \pi_{t-1}, f^*)\\
&\ \ \ \ \ \ \ -\mathbb{E}\left[\phi_1(X_t, A_t, Y_t; \hat{g}_{t-1}, \hat{f}_{t-1}) - \phi_1(X_t, A_t, Y_t; \pi_{t-1}, f^*)| \Omega_{t-1}\right]\\
&\ \ \ \ \ \ \ + \phi_2(X_t; \hat{f}_{t-1}) - \phi_2(X_t; f^*) -\mathbb{E}\left[\phi_2(X_t; \hat{f}_{t-1}) - \phi_2(X_t; f^*)| \Omega_{t-1}\right]\Bigg\}\Bigg)^2\Bigg]\\
&=\frac{1}{T}\mathbb{E}\Bigg[\Bigg(\sum^T_{t=1}\Bigg\{\phi_1(X_t, A_t, Y_t; \hat{g}_{t-1}, \hat{f}_{t-1}) - \phi_1(X_t, A_t, Y_t; \pi_{t-1}, f^*)\\
&\ \ \ \ \ \ \ -\mathbb{E}\left[\phi_1(X_t, A_t, Y_t; \hat{g}_{t-1}, \hat{f}_{t-1}) - \phi_1(X_t, A_t, Y_t; \pi_{t-1}, f^*)| \Omega_{t-1}\right]\\
&\ \ \ \ \ \ \ + \phi_2(X_t; \hat{f}_{t-1}) - \phi_2(X_t; f^*) -\mathbb{E}\left[\phi_2(X_t; \hat{f}_{t-1}) - \phi_2(X_t; f^*)| \Omega_{t-1}\right]\Bigg\}\Bigg)^2\Bigg].
\end{align*}
Therefore, we have
\begin{align*}
&=\frac{1}{T}\sum^T_{t=1}\mathbb{E}\Bigg[\Bigg(\phi_1(X_t, A_t, Y_t; \hat{g}_{t-1}, \hat{f}_{t-1}) - \phi_1(X_t, A_t, Y_t; \pi_{t-1}, f^*)\\
&\ \ \ \ \ \ \ -\mathbb{E}\left[\phi_1(X_t, A_t, Y_t; \hat{g}_{t-1}, \hat{f}_{t-1}) - \phi_1(X_t, A_t, Y_t; \pi_{t-1}, f^*)| \Omega_{t-1}\right]\\
&\ \ \ \ \ \ \ + \phi_2(X_t; \hat{f}_{t-1}) - \phi_2(X_t; f^*) -\mathbb{E}\left[\phi_2(X_t; \hat{f}_{t-1}) - \phi_2(X_t; f^*)| \Omega_{t-1}\right]\Bigg)^2\Bigg]\\
&\ \ \ +\frac{2}{T}\sum^{T-1}_{t=1}\sum^T_{s=t+1}\mathbb{E}\Bigg[\Bigg(\phi_1(X_t, A_t, Y_t; \hat{g}_{t-1}, \hat{f}_{t-1}) - \phi_1(X_t, A_t, Y_t; \pi_{t-1}, f^*)\\
&\ \ \ \ \ \ \ -\mathbb{E}\left[\phi_1(X_t, A_t, Y_t; \hat{g}_{t-1}, \hat{f}_{t-1}) - \phi_1(X_t, A_t, Y_t; \pi_{t-1}, f^*)| \Omega_{t-1}\right]\\
&\ \ \ \ \ \ \ + \phi_2(X_t; \hat{f}_{t-1}) - \phi_2(X_t; f^*) -\mathbb{E}\left[\phi_2(X_t; \hat{f}_{t-1}) - \phi_2(X_t; f^*)| \Omega_{t-1}\right]\Bigg)\\
&\ \ \ \ \ \ \ \times \Bigg(\phi_1(X_s, A_s, Y_s; \hat{g}_{s-1}, \hat{f}_{s-1}) - \phi_1(X_s, A_s, Y_s; \pi_{s-1}, f^*)\\
&\ \ \ \ \ \ \ -\mathbb{E}\left[\phi_1(X_s, A_s, Y_s; \hat{g}_{s-1}, \hat{f}_{s-1}) - \phi_1(X_s, A_s, Y_s; \pi_{s-1}, f^*)| \Omega_{s-1}\right]\\
&\ \ \ \ \ \ \ + \phi_2(X_s; \hat{f}_{s-1}) - \phi_2(X_s; f^*) -\mathbb{E}\left[\phi_2(X_s; \hat{f}_{s-1}) - \phi_2(X_s; f^*)| \Omega_{s-1}\right]\Bigg)\Bigg].
\end{align*}

For $s > t$, we can vanish the covariance terms as 
\begin{align*}
&\mathbb{E}\Bigg[\Bigg(\phi_1(X_t, A_t, Y_t; \hat{g}_{t-1}, \hat{f}_{t-1}) - \phi_1(X_t, A_t, Y_t; \pi_{t-1}, f^*)\\
&\ \ \ \ \ \ \ -\mathbb{E}\left[\phi_1(X_t, A_t, Y_t; \hat{g}_{t-1}, \hat{f}_{t-1}) - \phi_1(X_t, A_t, Y_t; \pi_{t-1}, f^*)| \Omega_{t-1}\right]\\
&\ \ \ \ \ \ \ + \phi_2(X_t; \hat{f}_{t-1}) - \phi_2(X_t; f^*) -\mathbb{E}\left[\phi_2(X_t; \hat{f}_{t-1}) - \phi_2(X_t; f^*)| \Omega_{t-1}\right]\Bigg)\\
&\ \ \ \ \ \ \ \times \Bigg(\phi_1(X_s, A_s, Y_s; \hat{g}_{s-1}, \hat{f}_{s-1}) - \phi_1(X_s, A_s, Y_s; \pi_{s-1}, f^*)\\
&\ \ \ \ \ \ \ -\mathbb{E}\left[\phi_1(X_s, A_s, Y_s; \hat{g}_{s-1}, \hat{f}_{s-1}) - \phi_1(X_s, A_s, Y_s; \pi_{s-1}, f^*)| \Omega_{s-1}\right]\\
&\ \ \ \ \ \ \ + \phi_2(X_s; \hat{f}_{s-1}) - \phi_2(X_s; f^*) -\mathbb{E}\left[\phi_2(X_s; \hat{f}_{s-1}) - \phi_2(X_s; f^*)| \Omega_{s-1}\right]\Bigg)\Bigg]\\
&=\mathbb{E}\Bigg[U\mathbb{E}\Bigg[\Bigg(\phi_1(X_s, A_s, Y_s; \hat{g}_{s-1}, \hat{f}_{s-1}) - \phi_1(X_s, A_s, Y_s; \pi_{s-1}, f^*)\\
&\ \ \ \ \ \ \ -\mathbb{E}\left[\phi_1(X_s, A_s, Y_s; \hat{g}_{s-1}, \hat{f}_{s-1}) - \phi_1(X_s, A_s, Y_s; \pi_{s-1}, f^*)| \Omega_{s-1}\right]\\
&\ \ \ \ \ \ \ + \phi_2(X_s; \hat{f}_{s-1}) - \phi_2(X_s; f^*) -\mathbb{E}\left[\phi_2(X_s; \hat{f}_{s-1}) - \phi_2(X_s; f^*)| \Omega_{s-1}\right]\Bigg)| \Omega_{s-1}\Bigg]\Bigg]\\
&=0,
\end{align*}
where 
\begin{align*}
U=\Bigg(&\phi_1(X_t, A_t, Y_t; \hat{g}_{t-1}, \hat{f}_{t-1}) - \phi_1(X_t, A_t, Y_t; \pi_{t-1}, f^*)\\
&-\mathbb{E}\left[\phi_1(X_t, A_t, Y_t; \hat{g}_{t-1}, \hat{f}_{t-1}) - \phi_1(X_t, A_t, Y_t; \pi_{t-1}, f^*)| \Omega_{t-1}\right]\\
& + \phi_2(X_t; \hat{f}_{t-1}) - \phi_2(X_t; f^*)\\
& -\mathbb{E}\left[\phi_2(X_t; \hat{f}_{t-1}) - \phi_2(X_t; f^*)| \Omega_{t-1}\right]\Bigg).
\end{align*}
Therefore, the variance is calculated as 
\begin{align*}
&\mathrm{Var}\Bigg(\sqrt{T}\frac{1}{T}\sum^T_{t=1}\Bigg\{\phi_1(X_t, A_t, Y_t; \hat{g}_{t-1}, \hat{f}_{t-1}) - \phi_1(X_t, A_t, Y_t; \pi_{t-1}, f^*)\\
&\ \ \ \ \ \ \ -\mathbb{E}\left[\phi_1(X_t, A_t, Y_t; \hat{g}_{t-1}, \hat{f}_{t-1}) - \phi_1(X_t, A_t, Y_t; \pi_{t-1}, f^*)| \Omega_{t-1}\right]\\
&\ \ \ \ \ \ \ + \phi_2(X_t; \hat{f}_{t-1}) - \phi_2(X_t; f^*)\\
&\ \ \ \ \ \ \ -\mathbb{E}\left[\phi_2(X_t; \hat{f}_{t-1}) - \phi_2(X_t; f^*)| \Omega_{t-1}\right]\Bigg\}\Bigg)\\
&=\frac{1}{T}\sum^T_{t=1}\mathbb{E}\Bigg[\Bigg(\phi_1(X_t, A_t, Y_t; \hat{g}_{t-1}, \hat{f}_{t-1}) - \phi_1(X_t, A_t, Y_t; \pi_{t-1}, f^*)\\
&\ \ \ \ \ \ \ -\mathbb{E}\left[\phi_1(X_t, A_t, Y_t; \hat{g}_{t-1}, \hat{f}_{t-1}) - \phi_1(X_t, A_t, Y_t; \pi_{t-1}, f^*)| \Omega_{t-1}\right]\\
&\ \ \ \ \ \ \ + \phi_2(X_t; \hat{f}_{t-1}) - \phi_2(X_t; f^*)\\
&\ \ \ \ \ \ \  -\mathbb{E}\left[\phi_2(X_t; \hat{f}_{t-1}) - \phi_2(X_t; f^*)| \Omega_{t-1}\right]\Bigg)^2\Bigg]\\
&=\frac{1}{T}\sum^T_{t=1}\mathbb{E}\Bigg[\mathbb{E}\Bigg[\Bigg(\phi_1(X_t, A_t, Y_t; \hat{g}_{t-1}, \hat{f}_{t-1}) - \phi_1(X_t, A_t, Y_t; \pi_{t-1}, f^*)\\
&\ \ \ \ \ \ \ -\mathbb{E}\left[\phi_1(X_t, A_t, Y_t; \hat{g}_{t-1}, \hat{f}_{t-1}) - \phi_1(X_t, A_t, Y_t; \pi_{t-1}, f^*)| \Omega_{t-1}\right]\\
&\ \ \ \ \ \ \ + \phi_2(X_t; \hat{f}_{t-1}) - \phi_2(X_t; f^*)\\
&\ \ \ \ \ \ \  -\mathbb{E}\left[\phi_2(X_t; \hat{f}_{t-1}) - \phi_2(X_t; f^*)| \Omega_{t-1}\right]\Bigg)^2| \Omega_{t-1}\Bigg]\Bigg]\\
&=\frac{1}{T}\sum^T_{t=1}\mathbb{E}\Bigg[\mathrm{Var}\Bigg(\phi_1(X_t, A_t, Y_t; \hat{g}_{t-1}, \hat{f}_{t-1}) - \phi_1(X_t, A_t, Y_t; \pi_{t-1}, f^*) + \phi_2(X_t; \hat{f}_{t-1}) - \phi_2(X_t; f^*) | \Omega_{t-1}\Bigg)\Bigg]\\
&=\frac{1}{T}\sum^T_{t=1}\mathbb{E}\Bigg[\mathrm{Var}\Bigg(\phi_1(X_t, A_t, Y_t; \hat{g}_{t-1}, \hat{f}_{t-1}) - \phi_1(X_t, A_t, Y_t; \pi_{t-1}, f^*) | \Omega_{t-1}\Bigg)\Bigg]\\
&\ \ \ +\frac{1}{T}\sum^T_{t=1}\mathbb{E}\Bigg[\mathrm{Var}\Bigg(\phi_2(X_t; \hat{f}_{t-1}) - \phi_2(X_t; f^*) | \Omega_{t-1}\Bigg)\Bigg]\\
&\ \ \ +\frac{2}{T}\sum^T_{t=1}\mathbb{E}\Bigg[\mathrm{Cov}\Bigg(\phi_1(X_t, A_t, Y_t; \hat{g}_{t-1}, \hat{f}_{t-1}) - \phi_1(X_t, A_t, Y_t; \pi_{t-1}, f^*), \phi_2(X_t; \hat{f}_{t-1}) - \phi_2(X_t; f^*) | \Omega_{t-1}\Bigg)\Bigg].
\end{align*}

Then, we want to show that
\begin{align}
\label{eq:first_e0}
&\frac{1}{T}\sum^T_{t=1}\mathbb{E}\Bigg[\mathrm{Var}\Bigg(\phi_1(X_t, A_t, Y_t; \hat{g}_{t-1}, \hat{f}_{t-1}) - \phi_1(X_t, A_t, Y_t; \pi_{t-1}, f^*) | \Omega_{t-1}\Bigg)\Bigg] \to 0,\\
\label{eq:second_e0}
&\frac{1}{T}\sum^T_{t=1}\mathbb{E}\Bigg[\mathrm{Var}\Bigg(\phi_2(X_t; \hat{f}_{t-1}) - \phi_2(X_t; f^*) | \Omega_{t-1}\Bigg)\Bigg]  \to 0,\\
\label{eq:third_e0}
&\frac{2}{T}\sum^T_{t=1}\mathbb{E}\Bigg[\mathrm{Cov}\Bigg(\phi_1(X_t, A_t, Y_t; \hat{g}_{t-1}, \hat{f}_{t-1}) - \phi_1(X_t, A_t, Y_t; \pi_{t-1}, f^*), \phi_2(X_t; \hat{f}_{t-1}) - \phi_2(X_t; f^*) | \Omega_{t-1}\Bigg)\Bigg] \to 0
\end{align}

For showing \eqref{eq:first_e0}--\eqref{eq:third_e0}, we consider showing
\begin{align}
\label{eq:first_e}
&\mathrm{Var}\Bigg(\phi_1(X_t, A_t, Y_t; \hat{g}_{t-1}, \hat{f}_{t-1}) - \phi_1(X_t, A_t, Y_t; \pi_{t-1}, f^*) | \Omega_{t-1}\Bigg) = \op(1),\\
\label{eq:second_e}
&\mathrm{Var}\Bigg(\phi_2(X_t; \hat{f}_{t-1}) - \phi_2(X_t; f^*) | \Omega_{t-1}\Bigg) = \op(1)\\
\label{eq:third_e}
&\mathrm{Cov}\Bigg(\phi_1(X_t, A_t, Y_t; \hat{g}_{t-1}, \hat{f}_{t-1}) - \phi_1(X_t, A_t, Y_t; \pi_{t-1}, f^*), \phi_2(X_t; \hat{f}_{t-1}) - \phi_2(X_t; f^*) | \Omega_{t-1}\Bigg) = \op(1),
\end{align}

The first equation \eqref{eq:first_e} is shown as 
\begin{align*}
&\mathrm{Var}\Bigg(\phi_1(X_t, A_t, Y_t; \hat{g}_{t-1}, \hat{f}_{t-1}) - \phi_1(X_t, A_t, Y_t; \pi_{t-1}, f^*) | \Omega_{t-1}\Bigg)\\
&\leq \mathbb{E}\left[\left\{\sum^K_{a=1}\frac{\epol(a| X_t)\mathbbm{1}[A_t=a]\left(Y_t - \hat{f}_{t-1}(a, X_t)\right) }{\hat{g}_{t-1}(a| X_t)} - \sum^K_{a=1}\frac{\epol(a| X_t)\mathbbm{1}[A_t=a]\left(Y_t - f^*(a, X_t)\right) }{\pi_{t-1}(a| X_t)}\right\}^2| \Omega_{t-1}\right]\\
&=\mathbb{E}\Bigg[\Bigg\{\sum^K_{a=1}\frac{\epol(a| X_t)\mathbbm{1}[A_t=a]\left(Y_t - \hat{f}_{t-1}(a, X_t)\right) }{\hat{g}_{t-1}(a| X_t)} - \sum^K_{a=1}\frac{\epol(a| X_t)\mathbbm{1}[A_t=a]\left(Y_t - f^*(a, X_t)\right) }{\hat{g}_{t-1}(a| X_t)}\\
&\ \ \ + \sum^K_{a=1}\frac{\epol(a| X_t)\mathbbm{1}[A_t=a]\left(Y_t - f^*(a, X_t)\right) }{\hat{g}_{t-1}(a| X_t)}- \sum^K_{a=1}\frac{\epol(a| X_t)\mathbbm{1}[A_t=a]\left(Y_t - f^*(a, X_t)\right) }{\pi_{t-1}(a| X_t)}\Bigg\}^2| \Omega_{t-1}\Bigg]\\
&\leq 2\mathbb{E}\Bigg[\Bigg\{\sum^K_{a=1}\frac{\epol(a| X_t)\mathbbm{1}[A_t=a]\left(Y_t - \hat{f}_{t-1}(a, X_t)\right) }{\hat{g}_{t-1}(a| X_t)}- \sum^K_{a=1}\frac{\epol(a| X_t)\mathbbm{1}[A_t=a]\left(Y_t - f^*(a, X_t)\right) }{\hat{g}_{t-1}(a| X_t)}| \Omega_{t-1}\Bigg]\\
&\ \ \ + 2\mathbb{E}\Bigg[\Bigg\{\sum^K_{a=1}\frac{\epol(a| X_t)\mathbbm{1}[A_t=a]\left(Y_t - f^*(a, X_t)\right) }{\hat{g}_{t-1}(a| X_t)} - \sum^K_{a=1}\frac{\epol(a| X_t)\mathbbm{1}[A_t=a]\left(Y_t - f^*(a, X_t)\right) }{\pi_{t-1}(a| X_t)}\Bigg\}^2| \Omega_{t-1}\Bigg]\\
&\leq 2C\|f^* - \hat{f}_{t-1}\|^2_2 + 2\times 4C\|\hat{g}_{t-1} - \pi_{t-1}\|^2_{2} = \op(1),
\end{align*}
where $C>0$ is a constant. Here, we have used a parallelogram law from the second line to the third line. We have used $|\hat{f}_{t-1}| < C_3$, and $0<\frac{\epol}{\pi_{t-1}} < C_4$, convergence of $\pi_{t-1}$ and convergence rate conditions from the third line to the fourth line. Then, from the $L^r$ convergence theorem (Proposition~\ref{prp:lr_conv_theorem}) and the boundedness of the random variables, we can show that as $t\to \infty$,
\begin{align*}
&\mathbb{E}\Bigg[\mathrm{Var}\Bigg(\phi_1(X_t, A_t, Y_t; \hat{g}_{t-1}, \hat{f}_{t-1}) - \phi_1(X_t, A_t, Y_t; \pi_{t-1}, f^*) | \Omega_{t-1}\Bigg)\Bigg]\\
&\leq \mathbb{E}\Bigg[\left|\mathrm{Var}\Bigg(\phi_1(X_t, A_t, Y_t; \hat{g}_{t-1}, \hat{f}_{t-1}) - \phi_1(X_t, A_t, Y_t; \pi_{t-1}, f^*)| \Omega_{t-1}\Bigg) \right|\Bigg] \to 0.
\end{align*}
Therefore, for any $\epsilon > 0$, there exists a constant $ C > 0$ such that 
\begin{align*}
&\frac{1}{T} \sum^{T}_{t=1}\mathbb{E}\Bigg[\mathrm{Var}\Bigg(\phi_1(X_t, A_t, Y_t; \hat{g}_{t-1}, \hat{f}_{t-1}) - \phi_1(X_t, A_t, Y_t; \pi_{t-1}, f^*) | \Omega_{t-1}\Bigg)\Bigg]\leq  C/T + \epsilon.
\end{align*}

The second equation \eqref{eq:second_e} is derived by Jensen's inequality, and we show \eqref{eq:second_e0} as well as \eqref{eq:first_e0} by using  $L^r$ convergence theorem.

Next, we show the third equation \eqref{eq:third_e} as 
\begin{align*}
&\mathrm{Cov}\Bigg(\phi_1(X_t, A_t, Y_t; \hat{g}_{t-1}, \hat{f}_{t-1}) - \phi_1(X_t, A_t, Y_t; \pi_{t-1}, f^*), \phi_2(X_t; \hat{f}_{t-1}) - \phi_2(X_t; f^*) | \Omega_{t-1}\Bigg)\\
&\leq \Bigg|\mathbb{E}\Bigg[ \left(\phi_1(X_t, A_t, Y_t; \hat{g}_{t-1}, \hat{f}_{t-1}) - \phi_1(X_t, A_t, Y_t; \pi_{t-1}, f^*) - \mathbb{E}\left[\phi_1(X_t, A_t, Y_t; \hat{g}_{t-1}, \hat{f}_{t-1}) - \phi_1(X_t, A_t, Y_t; \pi_{t-1}, f^*) | \Omega_{t-1}\right]\right)\\
&\ \ \ \ \ \ \ \ \ \ \ \ \ \ \ \ \ \ \ \ \ \ \ \ \ \ \ \ \ \ \ \ \ \ \ \ \ \ \ \ \ \ \ \ \ \ \ \ \ \ \ \ \ \ \ \ \ \ \times \left(\phi_2(X_t; \hat{f}_{t-1}) - \phi_2(X_t; f^*) - \mathbb{E}\left[\phi_2(X_t; \hat{f}_{t-1}) - \phi_2(X_t; f^*)\right]\right) | \Omega_{t-1}\Bigg]\Bigg|\\
&\leq \mathbb{E}\Bigg[\Bigg|\left(\phi_1(X_t, A_t, Y_t; \hat{g}_{t-1}, \hat{f}_{t-1}) - \phi_1(X_t, A_t, Y_t; \pi_{t-1}, f^*) - \mathbb{E}\left[\phi_1(X_t, A_t, Y_t; \hat{g}_{t-1}, \hat{f}_{t-1}) - \phi_1(X_t, A_t, Y_t; \pi_{t-1}, f^*) | \Omega_{t-1}\right]\right)\\
&\ \ \ \ \ \ \ \ \ \ \ \ \ \ \ \ \ \ \ \ \ \ \ \ \ \ \ \ \ \ \ \ \ \ \ \ \ \ \ \ \ \ \ \ \ \ \ \ \ \ \ \ \ \ \ \ \ \ \times \left(\phi_2(X_t; \hat{f}_{t-1}) - \phi_2(X_t; f^*) - \mathbb{E}\left[\phi_2(X_t; \hat{f}_{t-1}) - \phi_2(X_t; f^*)\right]\right)\Bigg| | \Omega_{t-1}\Bigg]\\
&\leq C\mathbb{E}\Bigg[ \Bigg| \phi_1(X_t, A_t, Y_t; \hat{g}_{t-1}, \hat{f}_{t-1}) - \phi_1(X_t, A_t, Y_t; \pi_{t-1}, f^*)\\
&\ \ \ \ \ \ \ \ \ \ \ \ \ \ \ \ \ \ \ \ \ \ \ \ \ \ \ \ \ \ \ \ \ \ \ \ \ \ \ \ \ \ \ \ \ \ \ \ \ \ \ \ \ \ \ \ \ \ - \mathbb{E}\left[\phi_1(X_t, A_t, Y_t; \hat{g}_{t-1}, \hat{f}_{t-1}) - \phi_1(X_t, A_t, Y_t; \pi_{t-1}, f^*) | \Omega_{t-1}\right] \Bigg| | \Omega_{t-1}\Bigg]\\
&=\op(1),
\end{align*}
where $C>0$ is a constant. From the second to third line, we used Jensen's inequality. From the fourth to fifth line, we used consistencies of $\hat{f}_{t-1}$ and $\hat{g}_{t-1}$, which imply that for all $X_t\in\mathcal{X}$,
\begin{align*}
&\phi_1(X_t, A_t, Y_t; \hat{g}_{t-1}, \hat{f}_{t-1}) - \phi_1(X_t, A_t, Y_t; \pi_{t-1}, f^*)\\
&=\sum^K_{a=1}\left(\frac{\epol(a| X_t)\mathbbm{1}[A_t=a]\left(Y_t - \hat{f}_{t-1}(a, X_t)\right) }{\hat{g}_{t-1}(a| X_t)} - \frac{\epol(a| X_t)\mathbbm{1}[A_t=a]\left(Y_t - f^*(a, X_t)\right) }{\pi_{t-1}(a| X_t)}\right)\\
&\leq \sum^K_{a=1}\left|\frac{\epol(a| X_t)\mathbbm{1}[A_t=a]\left(Y_t - \hat{f}_{t-1}(a, X_t)\right) }{\hat{g}_{t-1}(a| X_t)} - \frac{\epol(a| X_t)\mathbbm{1}[A_t=a]\left(Y_t - f^*(a, X_t)\right) }{\pi_{t-1}(a| X_t)}\right|\\
&\leq C\sum^K_{a=1}\left|\pi_{t-1}(a| X_t)\left(Y_t - \hat{f}_{t-1}(a, X_t)\right) - \hat{g}_{t-1}(a| X_t)\left(Y_t - f^*(a, X_t)\right)\right|\\
&\leq C\sum^K_{a=1}\Big|\pi_{t-1}(a| X_t) - \hat{g}_{t-1}(a| X_t)\Big|\\
&\ \ \  - C\sum^K_{a=1}\Big|\pi_{t-1}(a| X_t)\hat{f}_{t-1}(a, X_t) - \hat{g}_{t-1}(a| X_t)\hat{f}_{t-1}(a, X_t) + \hat{g}_{t-1}(a| X_t)\hat{f}_{t-1}(a, X_t) - \hat{g}_{t-1}(a| X_t)f^*(a, X_t)\Big|\\
&\leq C\sum^K_{a=1}\Big|\pi_{t-1}(a| X_t) - \hat{g}_{t-1}(a| X_t)\Big| - C\sum^K_{a=1}\Big|\hat{f}_{t-1}(a, X_t) - f^*(a, X_t)\Big| = \op(1),
\end{align*}
where $C > 0$ is a constant.

Thus, from \eqref{eq:first_e0}--\eqref{eq:third_e0}, the variance of the bias term converges to $0$. Then, from Chebyshev's inequality,
\begin{align*}
&\mathbb{P}\Bigg(\Bigg|\sqrt{T}\frac{1}{T}\sum^T_{t=1}\Bigg\{\phi_1(X_t, A_t, Y_t; \hat{g}_{t-1}, \hat{f}_{t-1}) - \phi_1(X_t, A_t, Y_t; \pi_{t-1}, f^*)\\
&\ \ \ -\mathbb{E}\left[\phi_1(X_t, A_t, Y_t; \hat{g}_{t-1}, \hat{f}_{t-1}) - \phi_1(X_t, A_t, Y_t; \pi_{t-1}, f^*)| \Omega_{t-1}\right]\\
&\ \ \ + \phi_2(X_t; \hat{f}_{t-1}) - \phi_2(X_t; f^*) -\mathbb{E}\left[\phi_2(X_t; \hat{f}_{t-1}) - \phi_2(X_t; f^*)| \Omega_{t-1}\right]\Bigg\}\Bigg| > \varepsilon\Bigg)\\
&\leq \mathrm{Var}\Bigg(\sqrt{T}\frac{1}{T}\sum^T_{t=1}\Bigg\{\phi_1(X_t, A_t, Y_t; \hat{g}_{t-1}, \hat{f}_{t-1}) - \phi_1(X_t, A_t, Y_t; \pi_{t-1}, f^*)\\
&\ \ \ -\mathbb{E}\left[\phi_1(X_t, A_t, Y_t; \hat{g}_{t-1}, \hat{f}_{t-1}) - \phi_1(X_t, A_t, Y_t; \pi_{t-1}, f^*)| \Omega_{t-1}\right]\\
&\ \ \ + \phi_2(X_t; \hat{f}_{t-1}) - \phi_2(X_t; f^*) -\mathbb{E}\left[\phi_2(X_t; \hat{f}_{t-1}) - \phi_2(X_t; f^*)| \Omega_{t-1}\right]\Bigg\}\Bigg)/\varepsilon^2\\
&\to 0.
\end{align*}

\paragraph{Proof of \eqref{eq:part2}.}
\begin{align}
&\frac{1}{T}\sum^T_{t=1}\mathbb{E}\left[\phi_1(X_t, A_t, Y_t; \hat{g}_{t-1}, \hat{f}_{t-1})| \Omega_{t-1}\right] + \frac{1}{T}\sum^T_{t=1}\mathbb{E}\left[\phi_2(X_t; \hat{f}_{t-1})| \Omega_{t-1}\right]\nonumber\\
&\ \ \ - \frac{1}{T}\sum^T_{t=1}\mathbb{E}\left[\phi_1(X_t, A_t, Y_t; \pi_{t-1}, f^*)| \Omega_{t-1}\right] - \frac{1}{T}\sum^T_{t=1}\mathbb{E}\left[\phi_2(X_t; f^*)| \Omega_{t-1}\right]\nonumber\\
& = \frac{1}{T}\sum^T_{t=1} \mathbb{E}\left[\sum^K_{a=1}\frac{\epol(a| X_t)\mathbbm{1}[A_t=a]\left(Y_t - \hat{f}_{t-1}(a, X_t)\right) }{\hat{g}_{t-1}(a| X_t)}| \Omega_{t-1}\right]\nonumber\\
&\ \ \ + \frac{1}{T}\sum^T_{t=1}\mathbb{E}\left[\sum^K_{a=1}\epol(a, X_t)\hat{f}_{t-1}(a, X_t)| \Omega_{t-1}\right]\nonumber\\
\label{eq:vanish1}
&\ \ \ - \frac{1}{T}\sum^T_{t=1}\mathbb{E}\left[\sum^K_{a=1}\frac{\epol(a| X_t)\mathbbm{1}[A_t=a]\left(Y_t - f^*(a, X_t)\right) }{\pi_{t-1}(a| X_t, \Omega_{t-1})}| \Omega_{t-1}\right]\\
&\ \ \ - \frac{1}{T}\sum^T_{t=1}\mathbb{E}\left[\sum^K_{a=1}\epol(a, X_t)f^*(a, X_t)| \Omega_{t-1}\right]\nonumber.
\end{align}
Because \eqref{eq:vanish1} is $0$,
\begin{align*}
& = \frac{1}{T}\sum^T_{t=1} \mathbb{E}\left[\sum^K_{a=1}\frac{\epol(a| X_t)\mathbbm{1}[A_t=a]\left(Y_t - \hat{f}_{t-1}(a, X_t)\right) }{\hat{g}_{t-1}(a| X_t)}| \Omega_{t-1}\right]\\
&\ \ \ + \frac{1}{T}\sum^T_{t=1}\mathbb{E}\left[\sum^K_{a=1}\epol(a, X_t)\hat{f}_{t-1}(a, X_t)| \Omega_{t-1}\right] - \frac{1}{T}\sum^T_{t=1}\mathbb{E}\left[\sum^K_{a=1}\epol(a, X_t)f^*(a, X_t)| \Omega_{t-1}\right]\\
& = \frac{1}{T}\sum^T_{t=1} \mathbb{E}\left[\sum^K_{a=1}\frac{\epol(a| X_t)\mathbbm{1}[A_t=a]\left(Y_t - \hat{f}_{t-1}(a, X_t)\right) }{\hat{g}_{t-1}(a| X_t)}| \Omega_{t-1}\right]\\
&\ \ \ - \frac{1}{T}\sum^T_{t=1}\mathbb{E}\left[\sum^K_{a=1}\epol(a, X_t)\Big( f^*(a, X_t)- \hat{f}_{t-1}(a, X_t))\Big) | \Omega_{t-1}\right]\\
& = \frac{1}{T}\sum^T_{t=1} \sum^K_{a=1}\mathbb{E}\Bigg[\mathbb{E}\Bigg[\frac{\epol(a| X_t)\pi_{t-1}(a| X_t, \Omega_{t-1})\left(f^*(a, X_t) - \hat{f}_{t-1}(a, X_t)\right) }{\hat{g}_{t-1}(a| X_t)}\\
&\ \ \ \ \ \ \ \ \ \ \ \ \ \ \ \ \ \ \ \ \ \ \ \  - \epol(a, X_t)\Big( f^*(a, X_t)- \hat{f}_{t-1}(a, X_t))\Big) | X_t, \Omega_{t-1}\Bigg]| \Omega_{t-1}\Bigg]\\
& \leq \frac{1}{T}\sum^T_{t=1} \sum^K_{a=1}\Bigg|\mathbb{E}\Bigg[ \frac{\epol(a| X_t)\Big(\pi_{t-1}(a| X_t) - \hat{g}_{t-1}(a| X_t)\Big)\left(f^*(a, X_t) - \hat{f}_{t-1}(a, X_t)\right) }{\hat{g}_{t-1}(a| X_t)}  | \Omega_{t-1}\Bigg]\Bigg|. 
\end{align*}
By using \Holder's inequality $\|fg \|_1 \leq  \|f \|_2  \|g \|_2$, for a constant $C>0$, we have
\begin{align*}
& \leq \frac{C}{T}\sum^T_{t=1} \Bigg\| \pi_{t-1}(a| X_t, \Omega_{t-1}) - \hat{g}_{t-1}(a| X_t)\Bigg\|_{2}\Bigg\|f^*(a, X_t) - \hat{f}_{t-1}(a, X_t) \Bigg\|_{2}\\
& = \frac{C}{T}\sum^T_{t=1}\op(t^{-p})\op(t^{-q})\\
& = \frac{C}{T}\sum^T_{t=1}\op(t^{-1/2}).
\end{align*}

\end{proof}

\section{Adaptive-fitting and batched samples}
\label{appdx:batch}
Section~\ref{appdx:det_af} supplements the description of adaptive-fitting. Next, in Section~\ref{appdx:batch_true}, we introduce the AIPW estimator when the samples are given in batch form, and the true logging policy is given $\pi_t$. This is essentially the same as the generalized method of moment (GMM), which gives an asymptotically normal estimator for martigaale difference sequences (MDS), and we do not use adaptive-fitting. Based on this estimator, in Section~\ref{appdx:batch_est}, we introduce the ADR estimator when the data is given in batch form, but the true logging policy $\pi_t$ is not given.

\subsection{Details of adaptive-fitting}
\label{appdx:det_af}
As Section~\ref{sec:relation_ddm}, let us define the parameter of interest $\theta_0$ that  satisfies $\mathbb{E}[\psi(W_t; \theta_0, \eta_0)] = 0$, where $\{W_t\}^T_{t=1}$ are observations, $\eta_0$ is a nuisance parameter, and $\psi$ is a score function. Let us define two estimators $\hat{\theta}_T$ and $\check{\theta}_T$ as $\frac{1}{T}\sum^T_{t=1}\psi(W_t, \hat{\theta}_T, \eta_{t-1}) = 0$ and $\frac{1}{T}\sum^T_{t=1}\psi(W_t, \check{\theta}_T, \eta_{0}) = 0$. Suppose that $\check{\theta}_T$ is an asymptotically normal estimator of $\theta_0$. Then, if $\check{\theta}_T -\hat{\theta}_T$ converges to $0$ with convergence rate $\mathrm{o}_p(1/\sqrt{T})$, $\hat{\theta}_T$ is also an asymptotically normal estimator. In general, we cannot obtain such a fast convergence rate. However, by using double robustness, we can obtain the convergence rate.  In ADR estimator, this conditions appears as $\|\hat{g}_{t-1}(a| X_t) - \pi_{t-1}(a| X_t, \Omega_{t-1})\|_{2}=\op(t^{-p})$, and $\|\hat{f}_{t-1}(a,X_t)-f^*(a,X_t)\|_2=\op(t^{-q})$, where $p, q > 0$ such that $p+q = 1/2$, and the expectation of the norm is taken over $X_t$. This allows us to obtain $\mathrm{o}_p(1/\sqrt{T})$ of the asympttoic bias. The image of the vanishing asymptotic bias is shown in Figure~\ref{fig:af_bias}.

\begin{figure}[ht]
\begin{center}
 \includegraphics[width=100mm]{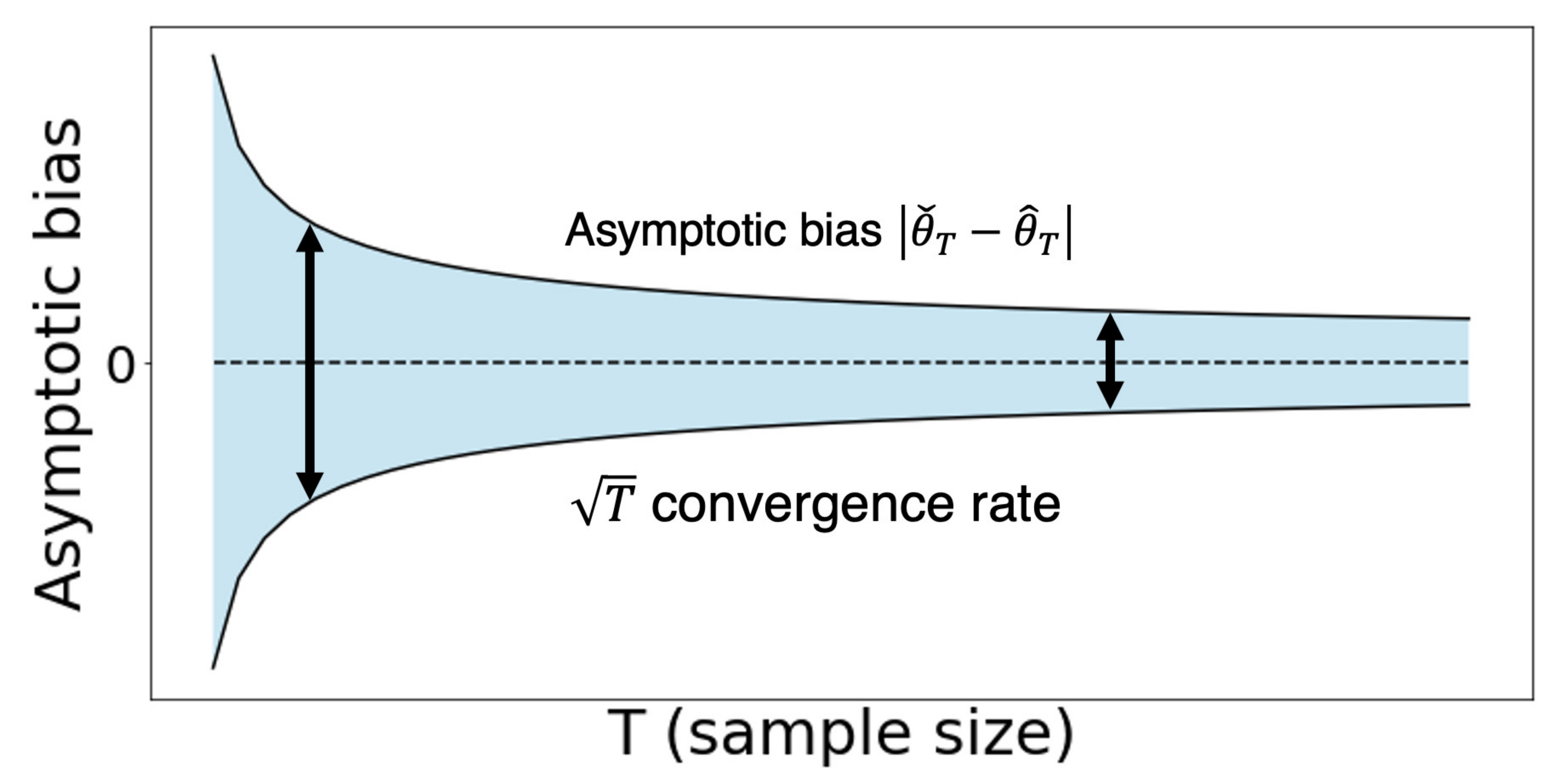}
\end{center}
\caption{Convergence of asymptotic bias $|\check{\theta}_T - \hat{\theta}_T|$.}
\label{fig:af_bias}
\end{figure} 

\subsection{AIPW estimator with batched samples when the true logging policy is known}
\label{appdx:batch_true}
The proposed adaptive-fitting can be applied when batched samples are given. Let $M$ denote the number of batch updates and $\tau\in I = \{1,2,\dots,M\}$ denotes the batch index. For $\tau\in I$, the probability is updated at a period $t_\tau$, where $t_\tau - t_{\tau-1} = Tr_\tau$, using samples $\{(X_t, Y_t, A_t)\}^{t_{\tau}}_{t=t_{\tau-1}}$, where $r_1 + r_2 + \cdots + r_M = 1$ and $t_0 = 0$. Thus, in addition to the DGP \eqref{eq:DGP}, we assume that 
\begin{align*}
\big\{(X_t, A_t, Y_t)\big\}^{t_\tau}_{t=t_{\tau-1}}\iid p(x)\pi_\tau(a| x, \Omega_{t_{\tau-1}})p(y_a| x),
\end{align*}
where $\pi_\tau(a| x, \Omega_{t_{\tau-1}})$ denotes the probability of choosing an action updated based on samples until the period $t_{\tau-1}$.

We consider asymptotic properties based on the assumption of $t_\tau - t_{\tau-1} \to \infty$ as $T\to \infty$ for fixed $\tau$. This strategy is the same as \citet{kelly2020batched}. Because $\big\{(X_t, A_t, Y_t)\big\}^{t_\tau}_{t=t_{\tau-1}}$ is i.i.d., we can use the standard limit theorems for the partial sum of the samples to obtain an asymptotically normal estimator of $\theta_0 = R(\epol)$. However, we also have the motivation to use all samples together to increase the efficiency of the estimator. Therefore, by using the GMM, we propose an estimator of $\theta_0$ considering the sample averages of each block as an empirical moment conditions. Although we cannot use standard CLT, we can can apply the CLT for the MDS by appropriately constructing an estimator. Note that the original GMM is proved when the MDS is applicable, which also includes the method of \citet{kelly2020batched} as a special case.

For an index of batch $\tau\in I$, a function $f\in\mathcal{F}$ such that $f:\mathcal{A}\times \mathcal{X} \to \mathbb{R}$ and an evaluation policy $\epol \in \Pi$, we define a function $h_t$ as 
\begin{align*}
&h_t(x, k, y; \tau, R, f, \pi_\tau, \epol) = \frac{1}{r_\tau}\xi_t(x, k, y; \tau, R, f, \pi_\tau, \epol)\mathbbm{1}\big[t_{\tau - 1} < t \leq t_{\tau} \big],\nonumber
\end{align*}
where $R\in\mathbb{R}$, $\xi_t(x, k, y; \tau, R, f, \pi_\tau, \epol) := \phi_t(x, k, y; \tau, f, \pi_\tau, \epol) - R$ and
\begin{align*}
&\phi_t(x, k, y; \tau, f, \pi_\tau, \epol):=\sum^{K}_{a=1}\epol(a| x)\Bigg\{\frac{\mathbbm{1}[k=a]\big\{y - f(a, x)\big\}}{\pi_{\tau}(a| x, \Omega_{t_{\tau - 1}})}+f(a, x)\Bigg\}.
\end{align*}
Let us note that the sequence $\big\{h_t(X_t, A_t, Y_t; \tau, R(\epol), \hat{f}_{t-1}, \pi_\tau, \epol)\big\}^{T}_{t=1}$ is an MDS: for $h_t(X_t, A_t, Y_t; \tau, R(\epol), \hat{f}_{t-1}, \pi_\tau, \epol)$, by using $\mathbb{E}[\mathbbm{1}[A_t=a]| \mathbb{H}_{t-1}]=\pi_{\tau}(a| X_t, \Omega_{t_{\tau - 1}})$, we have
\begin{align*}
&\mathbb{E}\left[h_t(X_t, A_t, Y_t; \tau, R(\epol), \hat{f}_{t-1}, \pi_\tau, \epol)| \Omega_{t-1}\right]\\
& = \mathbb{E}\left[\frac{\mathbbm{1}\big[t_{\tau - 1} < t \leq t_{\tau} \big]}{r_\tau}\xi_t(x, k, y; \tau, R(\epol), \hat{f}_{t-1}, \pi_\tau, \epol)| \Omega_{t-1}\right]\\
& = \frac{\mathbbm{1}\big[t_{\tau - 1} < t \leq t_{\tau} \big]}{r_\tau}\mathbb{E}\left[\xi_t(x, k, y; \tau, R(\epol), \hat{f}_{t-1}, \pi_\tau,  \epol)| \Omega_{t-1}\right]\\
& = \frac{\mathbbm{1}\big[t_{\tau - 1} < t \leq t_{\tau} \big]}{r_\tau}\times 0 = 0.
\end{align*}
Let us also define
\begin{align*}
\bm{h}_t\left(X_t, A_t, Y_t; R, \hat{f}_{t-1}, \pi_\tau, \epol\right):=
\begin{pmatrix} 
h_t(X_t, A_t, Y_t; 1, R, \hat{f}_{t-1},  \pi_1, \epol) \\
h_t(X_t, A_t, Y_t; 2, R, \hat{f}_{t-1},  \pi_2, \epol) \\
 \vdots \\ 
h_t(X_t, A_t, Y_t; M, R, \hat{f}_{t-1},  \pi_M, \epol)
\end{pmatrix}.
\end{align*}
Then, the sequence $\left\{\bm{h}_t\left(X_t, A_t, Y_t; R(\epol), \hat{f}_{t-1},  \pi_\tau, \epol\right)\right\}^{T}_{t=1}$ is an MDS with respect to $\big\{\Omega_t\big\}^{T-1}_{t=0}$, i.e.,
\begin{align*}
&\mathbb{E}\left[\bm{h}_t\left(X_t, A_t, Y_t; R(\epol), \hat{f}_{t-1}, \pi_\tau,\epol\right)| \Omega_{t-1}\right] = \bm{0}. 
\end{align*}

Using the sequence $\left\{\bm{h}_t\left(X_t, A_t, Y_t; R, \hat{f}_{t-1}, \pi_\tau, \epol\right)\right\}^{T}_{t=1}$, we define an estimator of $R(\epol)$ as
\begin{align}
\label{gmmdm_ope}
\hat{R}^{\mathrm{batch}}_T(\epol) = \argmin_{R\in\mathbb{R}} \left(\hat{\bm{q}}_T(R)\right)^\top \hat{W}_T \left(\hat{\bm{q}}_T(R)\right),
\end{align}
where $\hat{\bm{q}}_T(R) = \frac{1}{T}\sum^{T}_{t=1}\bm{h}_t\left(X_t, A_t, Y_t; R, \hat{f}_{t-1}, \pi_\tau, \epol\right)$ and $\hat{W}_T$ is a data-dependent $(M\times M)$-dimensional positive semi-definite matrix. Let us note that the estimator defined in Eq.~\eqref{gmmdm_ope} is an application of GMM with the moment condition
\begin{align*}
&\bm{q}(R(\epol)) = \mathbb{E}\left[\frac{1}{T}\sum^T_{t=1}\bm{h}_t\left(X_t, A_t, Y_t; R(\epol), \hat{f}_{t-1}, \pi_\tau,\epol\right)\right]=0.
\end{align*}
For the minimization problem defined in Eq.~\eqref{gmmdm_ope}, we can analytically calculate the minimizer as
\begin{align*}
\hat{R}^{\mathrm{batch}}_T(\epol)  = w^\top_T D_T(\epol),
\end{align*}
where $w_T = (w_{T, 1}\ \cdots\ w_{T, M})^\top$ is an $M$-dimensional vector such that $\sum^M_{\tau=1}w_{T, \tau}=1$, and 
\begin{align*}D_T(\epol) = 
\begin{pmatrix}
\frac{1}{t_{1}}\sum^{t_{1}}_{t=1} \phi_t(X_t, A_t, Y_t; 1, f_{t-1},  \pi_1, \epol) \\
\frac{1}{t_{2} - {t_{1}}}\sum^{t_2}_{t=t_1+1} \phi_t(X_t, A_t, Y_t; 2, f_{t-1}, \pi_2, \epol)\\
\vdots\\
\frac{1}{T-t_{M-1}}\sum^{T}_{t=t_{M-1}+1} \phi_t(X_t, A_t, Y_t; M, f_{t-1}, \pi_M, \epol)\\
\end{pmatrix}.
\end{align*}

Here, we show the asymptotic normality of the proposed estimator $\hat \theta^{\mathrm{OPE}}_T$. 

\begin{theorem}[Asymptotic distribution the proposed estimator]
\label{thm:asymp_batch}
Suppose that 
\begin{description}
\item[(i)] $w_T = (w_{T,1}\ \cdots\ w_{T,M})^\top \xrightarrow{\mathrm{p}} w = (w_{1}\ \cdots\ w_{M})^\top$;
\item[(ii)] $w_{T,\tau} > 0$ and $\sum^M_{\tau=1}w_{T,\tau} = 1$.
\end{description}
Then, under Assumptions~\ref{asm:overlap_pol}, \ref{asm:bounded_reward},  \ref{asm:bound_nuisance1}, \ref{asm:pointwise_f}, 
\begin{align*}
\sqrt{T}\big(\hat{R}^{\mathrm{batch}}_T(\epol) - R(\epol)\big)  \xrightarrow{\mathrm{d}}\mathcal{N}\big(0, \sigma^2\big),
\end{align*}
where $\sigma^2 = \sum^M_{\tau=1}w_{\tau}\Psi(\epol, \pi_\tau)$.
\end{theorem}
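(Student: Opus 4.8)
The plan is to exploit the martingale difference structure the authors have already verified: since $\bigl\{\bm h_t(X_t,A_t,Y_t;R(\epol),\hat f_{t-1},\pi_\tau,\epol)\bigr\}_{t=1}^T$ is an MDS with respect to $\{\Omega_t\}$, I would reduce the whole statement to one application of a multivariate martingale central limit theorem. First I would rewrite the centered estimator as a normalized MDS sum. Using $\sum_{\tau}w_{T,\tau}=1$ and the analytic form $\hat R^{\mathrm{batch}}_T(\epol)=w_T^\top D_T(\epol)$, the $\tau$-th entry of $\sqrt T\,(D_T(\epol)-R(\epol)\bm 1)$ is exactly $\tfrac1{\sqrt T}\sum_{t=1}^T h_t(X_t,A_t,Y_t;\tau,R(\epol),\hat f_{t-1},\pi_\tau,\epol)$, so that $\sqrt T\bigl(\hat R^{\mathrm{batch}}_T(\epol)-R(\epol)\bigr)=w_T^\top S_T$ with $S_T:=\tfrac1{\sqrt T}\sum_{t=1}^T\bm h_t$. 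A key simplification relative to Theorem~\ref{thm:asymp_dist_adr} is that, because the true within-batch policy $\pi_\tau$ is plugged in, $\E[\xi_t\mid\Omega_{t-1}]=0$ holds exactly for every $\Omega_{t-1}$-measurable $\hat f_{t-1}$; hence there is no asymptotic bias term to cancel, and I would need only consistency of $\hat f_{t-1}$ (Assumption~\ref{asm:pointwise_f}), not the convergence rates required in Theorem~\ref{thm:asymp_dist_adr}.

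Next I would establish $S_T\xrightarrow{\mathrm{d}}\mathcal N(\bm 0,\Sigma)$ by checking the hypotheses of a martingale-array CLT. The conditional Lindeberg, or a $2{+}\delta$ conditional Lyapunov, condition follows from uniform boundedness: Assumptions~\ref{asm:overlap_pol}, \ref{asm:bounded_reward}, and \ref{asm:bound_nuisance1} bound $\epol/\pi_\tau$, $|Y_t|$, and $|\hat f_{t-1}|$, so each $|\bm h_t|$ is bounded by a constant and the Lyapunov sum is $O(1/\sqrt T)$. The substantive ingredient is convergence of the predictable quadratic variation $V_T:=\tfrac1T\sum_{t=1}^T\E[\bm h_t\bm h_t^\top\mid\Omega_{t-1}]$. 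Here I would use the indicator $\mathbbm 1[t_{\tau-1}<t\le t_\tau]$ in the definition of $h_t$: at each $t$ exactly one coordinate of $\bm h_t$ is nonzero, so $\bm h_t\bm h_t^\top$ is diagonal and every cross-batch entry of $\Sigma$ vanishes. For the $\tau$-th diagonal entry, $\tfrac1T\sum_{t_{\tau-1}<t\le t_\tau}r_\tau^{-2}\,\E[\xi_t^2\mid\Omega_{t-1}]$, I would show $\E[\xi_t^2\mid\Omega_{t-1}]\to\Psi(\epol,\pi_\tau)$ using $\hat f_{t-1}\to f^*$ together with the $L^r$ convergence theorem (Proposition~\ref{prp:lr_conv_theorem}) under boundedness; since the block contains $Tr_\tau$ indices, the entry converges to $\Psi(\epol,\pi_\tau)/r_\tau$.

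Finally I would contract with the weights. Because $w_T\xrightarrow{\mathrm{p}}w$, the decomposition $w_T^\top S_T=w^\top S_T+(w_T-w)^\top S_T$ with $S_T=\Op(1)$ makes the second term $\op(1)$, and Slutsky's theorem gives $w_T^\top S_T\xrightarrow{\mathrm{d}}\mathcal N(0,\,w^\top\Sigma w)$. The diagonal form of $\Sigma$ yields $w^\top\Sigma w=\sum_{\tau=1}^M w_\tau^2\,\Psi(\epol,\pi_\tau)/r_\tau$, which reduces to the stated $\sigma^2=\sum_{\tau=1}^M w_\tau\Psi(\epol,\pi_\tau)$ under the natural block weighting $w_\tau=r_\tau$ (the pooled estimator $\tfrac1T\sum_t\phi_t$); matching the general weights to this form is the main constant-bookkeeping step, where the block sizes $Tr_\tau$, the $r_\tau^{-2}$ factors, and the $\tfrac1T$ normalization must combine correctly.

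The hardest part will be the convergence of $V_T$ when no stationarity assumption (Assumption~\ref{asm:stationarity}) is imposed on the logging policy: the within-batch policy $\pi_\tau(\cdot\mid\cdot,\Omega_{t_{\tau-1}})$ is $\Omega_{t_{\tau-1}}$-measurable and need not converge to a deterministic limit as $T\to\infty$, so each diagonal entry $\Psi(\epol,\pi_\tau)/r_\tau$ is in general a genuinely random limit. The correct reading of the conclusion is therefore stable convergence to a variance mixture of Gaussians, and the appropriate tool is a martingale CLT that permits a random predictable variance (e.g.\ the Hall--Heyde theory); the limiting $\sigma^2$ is then this random quadratic form, collapsing to a deterministic variance exactly when the policy stabilizes. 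Verifying $V_T\xrightarrow{\mathrm{p}}\Sigma$ uniformly across the growing blocks, despite the random $\hat f_{t-1}$ and the dependence of $\pi_\tau$ on the full pre-batch history, is where I expect the real work to lie.
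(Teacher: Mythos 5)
Your proposal follows essentially the same route as the paper: write the centered estimator as $w_T^\top$ times a normalized martingale-difference sum, invoke an MDS central limit theorem whose Lindeberg/Lyapunov condition follows from the boundedness assumptions, identify the limiting covariance as a diagonal matrix via the $L^r$ convergence theorem applied to $\hat f_{t-1}\to f^*$, and finish with Slutsky using $w_T\xrightarrow{\mathrm{p}}w$. The two difficulties you flag are not gaps in your argument but genuine loose ends in the paper itself: the proof computes the $(\tau,\tau)$-entry of $\Sigma$ as $\Psi(\epol,\pi_\tau)/r_\tau$ while the theorem statements drop the $1/r_\tau$, so $w^\top\Sigma w=\sum_\tau w_\tau^2\Psi(\epol,\pi_\tau)/r_\tau$ matches the claimed $\sigma^2=\sum_\tau w_\tau\Psi(\epol,\pi_\tau)$ only when $w_\tau=r_\tau$; and since $\pi_\tau(\cdot\mid\cdot,\Omega_{t_{\tau-1}})$ is history-dependent and no stationarity is assumed here, $\Psi(\epol,\pi_\tau)$ is in general random, so the convergence should be read as stable convergence to a variance mixture (Hall--Heyde), a point the paper's proof passes over by taking unconditional expectations. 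Your treatment is, if anything, more careful than the paper's on both counts.
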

In the proposed method, we construct a moment condition using martingale difference sequences. On the other hand, for some readers, using martingale difference sequences may look unnecessary because samples are i.i.d in each block between $t_{\tau-1}$ and $t_{\tau}$. Therefore, such readers also might feel that we can use $f_T(a, x)$, which is an estimator of $\mathbb{E}[Y(a)\mid x]$ using samples until $T$-th period, without going through constructing several estimators $\{f_{t_\tau}\}^{M-1}_{\tau=0}$. However, in that case, it is difficult to guarantee the asymptotic normality of the proposed estimator. For example, we can consider Cram\'{e}r-Wold theorem to consider this problem. This motivations shares with the GMM and the method of \citet{kelly2020batched}.

The proof of Theorem~\ref{thm:asymp_batch} is shown as follows.

\paragraph{Choice of weight $w_T$.} We discuss the choice of weight $w_T$. A naive choice is weighting the moment conditions equality; that is, $w_{\tau, T} = \frac{1}{M}$. Next, we consider an efficient weight $w_T$ that minimizes the asymptotic variance of $\hat{R}^{\mathrm{batch}}_T(\epol)$. In the GMM, the $\tau$-th element of the efficient weight is given as $w^*_{\tau}=\frac{1}{\Psi(\epol,\pi_\tau)}/\sum^M_{\tau'=1}\frac{1}{\Psi(\epol,\pi_{\tau'})}$ \citep{GVK126800421}. Here, we use the orthogonality among moment conditions; that is, zero covariance. In this case, the asymptotic variance becomes $1/\sum^M_{\tau'=1}\frac{1}{\sigma^2_{\tau'}}$. Therefore, for gaining efficiency, we use a weight $\hat{w}_{T,\tau} = \frac{1}{\hat{\Psi}_T(\epol,\pi_\tau)}/\sum^M_{\tau'=1}\frac{1}{\hat{\Psi}_T(\epol,\pi_{\tau'})}$, where $\hat{\Psi}_T(\epol,\pi_\tau)$ is an estimator of $\Psi(\epol,\pi_\tau)$. 

\paragraph{Proof of Theorem~\ref{thm:asymp_batch}.}  Instead of $\hat{R}^{\mathrm{batch}}_T(\epol) = w_T D_T(\epol)$, from the original formulation Eq.~\eqref{gmmdm_ope}, we consider an estimator $\hat{R}^{\mathrm{batch}}_T(\epol) = \big(I^\top \hat W_T I\big)^{-1}I^\top \hat W_T D_T(\epol)$, where $W_T$ is a $(M\times M)$-dimensional positive-definite matrix. Let us note that $w_T = \big(I^\top \hat W_T I\big)^{-1}I^\top \hat W_T$. We prove the following theorem, which is a generalized statement of Theorem~\ref{thm:asymp_batch}.

\begin{theorem}[Asymptotic distribution the AIPW estimator under batch update]
\label{thm:main}
Suppose that 
\begin{description}
\item[(i)] $\hat W_T \xrightarrow{\mathrm{p}} W$;
\item[(ii)] $W$ is a positive definite;
\end{description}
Then, under Assumptions~\ref{asm:overlap_pol}, \ref{asm:bounded_reward},  \ref{asm:bound_nuisance1}, \ref{asm:pointwise_f}, 
\begin{align*}
\sqrt{T}\big(\hat{R}^{\mathrm{batch}}_T(\epol) - R(\epol)\big)  \xrightarrow{\mathrm{d}}\mathcal{N}\big(0, \sigma^2\big),
\end{align*}
where $\sigma^2 = \big(I^\top W I \big)^{-1} I^\top W \Sigma W^\top I \big(I^\top W I \big)^{-1}$ and $\Sigma$ is a $(M\times M)$ diagonal matrix such that the $(\tau\times \tau)$-element is $\Psi(\epol,\pi_\tau)$.
\end{theorem}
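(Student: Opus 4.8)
The plan is to reduce the statement to a joint central limit theorem for the block-wise averages collected in $D_T(\epol)$ and then pass through the linear GMM weighting by Slutsky's theorem. Writing $I$ for the $M$-vector of ones, the estimator is $\hat{R}^{\mathrm{batch}}_T(\epol) = w_T^\top D_T(\epol)$ with $w_T = \big(I^\top \hat W_T I\big)^{-1}I^\top \hat W_T$, and the crucial algebraic identity is $w_T^\top I = 1$, so that $\hat{R}^{\mathrm{batch}}_T(\epol) - R(\epol) = w_T^\top\big(D_T(\epol) - R(\epol) I\big)$. By hypotheses (i)--(ii) and the continuous mapping theorem, $w_T \xrightarrow{\mathrm{p}} w := \big(I^\top W I\big)^{-1}I^\top W$. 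Hence it suffices to prove $\sqrt{T}\big(D_T(\epol) - R(\epol) I\big) \xrightarrow{\mathrm{d}} \mathcal{N}(0, \Sigma)$ and then conclude $\sqrt{T}\big(\hat{R}^{\mathrm{batch}}_T(\epol) - R(\epol)\big) \xrightarrow{\mathrm{d}} \mathcal{N}(0, w^\top \Sigma w)$, identifying $w^\top \Sigma w$ with the stated $\sigma^2 = \big(I^\top W I\big)^{-1} I^\top W \Sigma W^\top I \big(I^\top W I\big)^{-1}$.

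The joint CLT is where the martingale structure enters. First I would record the exact identity $\sqrt{T}\big(D_T(\epol) - R(\epol) I\big) = \frac{1}{\sqrt{T}}\sum_{t=1}^T \bm{h}_t\big(X_t, A_t, Y_t; R(\epol), \hat f_{t-1}, \pi_\tau, \epol\big)$, which holds because the $1/r_\tau$ normalization and the indicator $\mathbbm{1}[t_{\tau-1} < t \leq t_\tau]$ turn each block average into the matching coordinate of $\frac{1}{T}\sum_t \bm{h}_t$. Since $\hat f_{t-1}$ is $\Omega_{t-1}$-measurable and the true logging policy $\pi_\tau$ is used, the excerpt already verifies that $\{\bm{h}_t\}$ is a martingale difference sequence with respect to $\{\Omega_t\}$. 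I would then invoke the multivariate martingale CLT (e.g., \citet{hall2014martingale}) after checking its two hypotheses: the conditional Lindeberg condition, immediate from the uniform boundedness granted by Assumptions~\ref{asm:overlap_pol}, \ref{asm:bounded_reward}, and \ref{asm:bound_nuisance1}; and convergence of the predictable quadratic variation $\frac{1}{T}\sum_{t=1}^T \E\big[\bm{h}_t \bm{h}_t^\top \mid \Omega_{t-1}\big] \xrightarrow{\mathrm{p}} \Sigma$. The off-diagonal entries vanish identically because the block indicators have disjoint support, so $\Sigma$ is diagonal, and applying Proposition~\ref{prp:asymp_dist_a2ipw} block-wise identifies the $\tau$-th diagonal entry with $\Psi(\epol,\pi_\tau)$.

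The main obstacle is this last convergence, namely replacing the estimated $\hat f_{t-1}$ by $f^*$ inside the limiting conditional variance and showing the nuisance error does not perturb the limit. Because $\pi_\tau$ is the \emph{true} policy, the AIPW score is first-order (Neyman) insensitive to $f$, so no convergence rate is needed: pointwise consistency $\hat f_{t-1} \xrightarrow{\mathrm{p}} f^*$ (Assumption~\ref{asm:pointwise_f}) already drives the per-period conditional second moment to its $f^*$-value. To upgrade this pointwise-in-$t$ statement to convergence of the Ces\`aro average $\frac{1}{T}\sum_t \E[\bm{h}_t \bm{h}_t^\top \mid \Omega_{t-1}]$, I would combine the uniform boundedness with the $L^r$ convergence theorem (Proposition~\ref{prp:lr_conv_theorem}) to turn convergence in probability into convergence in mean, and then use that the Ces\`aro mean of a convergent sequence shares the same limit. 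This is exactly the mechanism underlying Proposition~\ref{prp:asymp_dist_a2ipw}, which can therefore be reused block-wise rather than re-deriving the limiting variance. Finally, assembling the coordinates via the \Cramer--Wold device for the joint convergence and Slutsky's theorem for the random weighting $w_T$ yields $\sqrt{T}\big(\hat{R}^{\mathrm{batch}}_T(\epol) - R(\epol)\big) = w_T^\top \sqrt{T}\big(D_T(\epol) - R(\epol) I\big) \xrightarrow{\mathrm{d}} \mathcal{N}\big(0, w^\top \Sigma w\big)$, which is the claim.
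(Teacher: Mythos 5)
Your proposal is correct and follows essentially the same route as the paper's own proof: both reduce the claim to a multivariate martingale CLT for $\frac{1}{\sqrt{T}}\sum_t \bm{h}_t$ (verifying the MDS property, a Lindeberg-type condition from boundedness, and convergence of the averaged conditional covariance using the $L^r$ convergence theorem together with Assumption~\ref{asm:pointwise_f}), and then pass through the GMM weighting $\big(I^\top \hat W_T I\big)^{-1}I^\top \hat W_T$ by Slutsky's theorem. Your observation that only pointwise consistency of $\hat f_{t-1}$ is needed because the true $\pi_\tau$ makes the score first-order insensitive to $f$ matches the paper's discussion of why the AIPW estimator needs no convergence rates.
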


\begin{proof}
Let us define $\hat{\bm{q}}_T(R(\epol)) = (\hat{q}_{1, T}(R(\epol))\ \hat{q}_{2, T}(R(\epol))\ \cdots\ \hat{q}_{M, T}(R(\epol)))$, where  
\begin{align*}
&\hat{q}_{\tau, T}(R(\epol)) = \\
&\frac{1}{T}\frac{1}{r_1}\sum^{T}_{t=1} \left(\sum^{K}_{a=1}\left\{\frac{\epol(a\mid X_t)\mathbbm{1}[A_t=a]\big\{Y_t - \hat{f}_{t-1}(a, X_t)\big\}}{\pi_{1}(a\mid x, \Omega_{0})} + \epol(a\mid X_t)\hat{f}_{t-1}(a, X_t)\right\} - R(\epol)\right)\\
&\ \ \ \ \ \ \ \ \ \ \ \ \ \ \ \ \ \ \ \ \ \ \ \ \ \ \ \ \ \ \ \ \ \ \ \ \ \ \ \ \ \ \ \ \ \ \ \ \ \ \ \ \ \ \ \ \ \ \ \ \ \ \ \ \times \mathbbm{1}\big[t_0=0 < t \leq t_{1} \big].
\end{align*}
For $\sqrt{T}\big(\hat{R}^{\mathrm{batch}}_T(\epol) - R(\epol)\big)  = \big(I^\top \hat W_T I\big)^{-1}I^\top \hat W_T \sqrt{T} \hat{\bm{q}}_T(R(\epol))$,
we show that 
\begin{align*}
\sqrt{T} \hat{\bm{q}}_T(R(\epol)) \xrightarrow{\mathrm{d}} \mathcal{N}\big(0, \Sigma\big),
\end{align*}
where $\Sigma$ is a diagonal matrix such that the $(\tau,\tau)$-element is
\begin{align*}
&\frac{1}{r_\tau}\mathbb{E}\left[\sum^{K}_{a=1}\frac{\big(\epol(a\mid X)\big)^2\mathrm{Var}(Y(a)\mid X)}{\pi_{\tau}(a\mid X, \Omega_{t_{\tau - 1}})} + \left(\sum^{K}_{a=1}\epol(a\mid X)\mathbb{E}\big[Y(a)\mid X\big] - R(\epol)\right)^2 \right].
\end{align*}
Then, from Slutsky Theorem, we can show that 
\begin{align*}
\big(I^\top \hat W_T I\big)^{-1}I^\top \hat W_T \sqrt{T} \hat{\bm{q}}_T(R(\epol)) \xrightarrow{\mathrm{d}}\mathcal{N}\big(0, \big(I^\top W I \big)^{-1} I^\top W \Sigma W^\top I \big(I^\top W I \big)^{-1}\big).
\end{align*}

To show this result, we use the CLT for MDS by checking the following conditions:
\begin{description}
\item[(a)] $(1/T)\sum^T_{t=1}\Sigma_t\to\Sigma$, where 
$$\Sigma_t = \mathbb{E}\left[\left(\bm{h}_t\left(X_t, A_t, Y_t; R(\epol), f_{t-1}, \pi_\tau,\epol\right)\right)\left(\bm{h}_t\left(X_t, A_t, Y_t; R(\epol), f_{t-1}, \pi_\tau,\epol\right)\right)^{\top}\right];$$
\item[(b)] $\mathbb{E}\big[\tilde{h}_t(i, R(\epol), f_{t-1}, \epol)\tilde{h}_t(j, R(\epol), f_{t-1}, \epol)\tilde{h}_t(a, R(\epol), f_{t-1}, \epol)\tilde{h}_t(l, R(\epol), f_{t-1}, \epol)\big] < \infty$ for $i,j,k,l\in I$, where $\tilde{h}_t(a, R(\epol), f_{t-1}, \epol)=h^\mathrm{OPE}_t(X_t, A_t, Y_t; k, R(\epol), f_{k}, \epol)$ for $k\in I$;
\item[(c)] $\frac{1}{T}\sum^{T}_{t=1}\left(\bm{h}_t\left(X_t, A_t, Y_t; R(\epol), f_{t-1}, \pi_\tau,\epol\right)\right)\left(\bm{h}_t\left(X_t, A_t, Y_t; R(\epol), f_{t-1}, \pi_\tau,\epol\right)\right)^{\top}\xrightarrow{p}\Sigma$,
\end{description}
Note that the GMM shows the asymptotic normality for more general cases.

\subsection*{Step~1: Condition~(a)}
From 
\begin{align*}
&\Sigma_t = \mathbb{E}\left[\left(\bm{h}_t\left(X_t, A_t, Y_t; R(\epol), f_{t-1}, \pi_\tau,\epol\right)\right)\left(\bm{h}_t\left(X_t, A_t, Y_t; R(\epol), f_{t-1}, \pi_\tau,\epol\right)\right)^{\top}\right],
\end{align*}
the matrix $(1/T)\sum^T_{t=1}\Omega_t$ becomes a diagonal matrix such that the $(\tau, \tau)$-element is 
\begin{align*}
&\frac{1}{r^2_\tau T}\sum^T_{t=1}\mathbb{E}\Bigg[\left(\sum^{K}_{a=1}\left\{\frac{\epol(a\mid X_t)\mathbbm{1}[A_t=a]\big\{Y_t - f_{t- 1}(a, X_t)\big\}}{\pi_{\tau}(a\mid X_t, \Omega_{t_{\tau - 1}})} + \epol(a\mid X_t)f_{t-1}(a, X_t)\right\} - R(\epol)\right)^2\\
&\ \ \ \ \ \ \ \ \ \ \ \ \ \ \ \ \ \ \ \ \ \ \ \ \ \ \ \ \ \ \ \ \ \ \ \ \ \ \ \ \ \ \ \ \ \ \ \ \ \ \ \ \ \ \ \ \ \ \ \ \ \ \ \ \ \ \ \ \ \ \ \ \ \ \ \ \ \ \ \ \times \mathbbm{1}\big[t_{\tau - 1} < t \leq t_{\tau} \big]\Bigg].
\end{align*}

For $\tau \in I$ and $t$ such that $t_{\tau - 1} < t \leq t_{\tau}$,
\begin{align*}
&\mathbb{E}\left[\left(\sum^{K}_{a=1}\left\{\frac{\epol(a\mid X_t)\mathbbm{1}[A_t=a]\big\{Y_t - f_{t-1}(a, X_t)\big\}}{\pi_{\tau}(a\mid X_t, \Omega_{t_{\tau - 1}})} + \epol(a\mid X_t)f_{t-1}(a, X_t)\right\} - R(\epol)\right)^2\right]\\
&- \mathbb{E}\left[\left(\sum^{K}_{a=1}\left\{\frac{\epol(a\mid X_t)\mathbbm{1}[A_t=a]\big\{Y_t - \mathbb{E}\big[Y_t(a)\mid X_t\big]\big\}}{\pi_{\tau}(a\mid X_t, \Omega_{t_{\tau - 1}})} + \epol(a\mid X_t)\mathbb{E}\big[Y(a)\mid X_t\big]\right\} - R(\epol)\right)^2\right]\\
&\leq \mathbb{E}\Bigg[\Bigg|\left(\sum^{K}_{a=1}\left\{\frac{\epol(a\mid X_t)\mathbbm{1}[A_t=a]\big\{Y_t - f_{t-1}(a, X_t)\big\}}{\pi_{\tau}(a\mid X_t, \Omega_{t_{\tau - 1}})} + \epol(a\mid X_t)f_{t-1}(a, X_t)\right\} - R(\epol)\right)^2\\
&\ \ \ \ \ \ \ \ - \left(\sum^{K}_{a=1}\left\{\frac{\epol(a\mid X_t)\mathbbm{1}[A_t=a]\big\{Y_t - \mathbb{E}\big[Y_t(a)\mid X_t\big]\big\}}{\pi_{\tau}(a\mid X_t, \Omega_{t_{\tau - 1}})} + \epol(a\mid X_t)\mathbb{E}\big[Y(a)\mid X_t\big]\right\} - R(\epol)\right)^2\Bigg|\Bigg]
\end{align*}
Because $\alpha^2 - \beta^2 = (\alpha + \beta)(\alpha - \beta)$, there exists a constant $\gamma_0 > 0$ such that
\begin{align*}
&\leq \gamma_0\mathbb{E}\Bigg[\Bigg|\sum^{K}_{a=1}\Bigg\{\frac{\epol(a\mid X_t)\mathbbm{1}[A_t=a]\big\{Y_t - f_{t-1}(a, X_t)\big\}}{\pi_{\tau}(a\mid X_t, \Omega_{t_{\tau - 1}})} + \epol(a\mid X_t)f_{t-1}(a, X_t)\\
&\ \ \ \ \ \ \ \ \ \ \ \ \ \ - \frac{\epol(a\mid X_t)\mathbbm{1}[A_t=a]\big\{Y_t - \mathbb{E}\big[Y_t(a)\mid X_t\big]\big\}}{\pi_{\tau}(a\mid X_t, \Omega_{t_{\tau - 1}})} - \epol(a\mid X_t)\mathbb{E}\big[Y(a)\mid X_t\big]\Bigg\}\Bigg|\Bigg]
\end{align*}
Then, there exist constants $\gamma_1 > 0$ such that
\begin{align*}
\leq &\gamma_1 \mathbb{E}\left[\sum^{K}_{a=1}\Big|f_{t-1}(a, X_t) - \mathbb{E}\big[Y(a)\mid X_t\big]\Big|\right].
\end{align*}
Here, from the assumption that $f_{t-1}(a, x)- \mathbb{E}\big[Y(a)\mid X\big]\xrightarrow{\mathrm{p}}0$ for $\tau=2,3,\dots,M$, and $f_{t_{\tau-1}}(a, x)$ is bounded for $\tau \in I$, we can use $L^r$ convergence theorem. First, to use $L^r$ convergence theorem, we use boundedness of $f_{t_m}$ to derive the uniform integrability of $f_{t_m}$ for $m=0,1,\dots,\tau-1$. Then, from $L^r$ convergence theorem, we have $\mathbb{E}\big[|f_{t_m}(a, X) - \mathbb{E}[Y(a)\mid X]|\big]\to 0$ as $t_m\to\infty$. Using this results, we can show that, as $t_{\tau-1}\to\infty$  (this also means $T\to\infty$),
\begin{align*}
&\gamma_1 \sum^{K}_{a=1}\mathbb{E}\left[\Big|f_{t-1}(a, X_t) - \mathbb{E}\big[Y(a)\mid X_t\big]\Big|\right] \to 0.
\end{align*}
Therefore, as $t_{\tau-1}\to\infty$ ($T\to\infty$),
\begin{align*}
&\mathbb{E}\left[\left(\sum^{K}_{a=1}\left\{\frac{\epol(a\mid X_t)\mathbbm{1}[A_t=a]\big\{Y_t - f_{t-1}(a, X_t)\big\}}{\pi_{\tau}(a\mid X_t, \Omega_{t_{\tau - 1}})} + \epol(a\mid X_t)f_{t-1}(a, X_t)\right\} - R(\epol)\right)^2\right]\\
&\to \mathbb{E}\left[\left(\sum^{K}_{a=1}\left\{\frac{\epol(a\mid X_t)\mathbbm{1}[A_t=a]\big\{Y_t - \mathbb{E}\big[Y_t(a)\mid X_t\big]\big\}}{\pi_{\tau}(a\mid X_t, \Omega_{t_{\tau - 1}})} + \epol(a\mid X_t)\mathbb{E}\big[Y(a)\mid X_t\big]\right\} - R(\epol)\right)^2\right].
\end{align*} 
Then, by using $\mathbbm{1}[A_t=a]\mathbbm{1}[A_t=l] = 0$, $\mathbb{E}\left[\frac{\mathbbm{1}[A_t=a]Y^2_t}{\big(\pi_{\tau}(a\mid X_t, \Omega_{t_{\tau - 1}})\big)^2}\right] = \mathbb{E}\left[\frac{\mathbb{E}\big[Y^2_t(a)\mid X_t\big]}{\pi_{\tau}(a\mid X_t, \Omega_{t_{\tau - 1}})}\right]$, and $\frac{1}{r_\tau T}\sum^T_{t=1}\mathbbm{1}\big[t_{\tau - 1} < t \leq t_{\tau} \big]=1$,
\begin{align*}
&\mathbb{E}\left[\left(\sum^{K}_{a=1}\left\{\frac{\epol(a\mid X_t)\mathbbm{1}[A_t=a]\big\{Y_t - \mathbb{E}\big[Y_t(a)\mid X_t\big]\big\}}{\pi_{\tau}(a\mid X_t, \Omega_{t_{\tau - 1}})} + \epol(a\mid X_t)\mathbb{E}\big[Y(a)\mid X_t\big]\right\} - R(\epol)\right)^2\right]\\
&= \mathbb{E}\left[\sum^{K}_{a=1}\left\{\frac{\big(\epol(a\mid X_t)\big)^2\mathrm{Var}\big(Y_t(a) \mid X_t\big)}{\pi_{\tau}(a\mid X_t, \Omega_{t_{\tau - 1}})} + \Big(\epol(a\mid X_t)\mathbb{E}\big[Y_t(a)\mid X_t\big] - R(\epol)\Big)^2\right\}\right].
\end{align*}
In addition, the variance does not depend on $t$. We represent the independence by omitting the subscript $t$, i.e.,
\begin{align*}
&\mathbb{E}\left[\sum^{K}_{a=1}\left\{\frac{\big(\epol(a\mid X_t)\big)^2\mathrm{Var}\big(Y_t(a) \mid X_t\big)}{\pi_{\tau}(a\mid X_t, \Omega_{t_{\tau - 1}})} + \Big(\epol(a\mid X_t)\mathbb{E}\big[Y_t(a)\mid X_t\big] - R(\epol)\Big)^2\right\}\right]\\
&=\mathbb{E}\left[\sum^{K}_{a=1}\left\{\frac{\big(\epol(a\mid X)\big)^2\mathrm{Var}\big(Y(a) \mid X\big)}{\pi_{\tau}(a\mid X, \Omega_{t_{\tau - 1}})} + \Big(\epol(a\mid X)\mathbb{E}\big[Y_t(a)\mid X\big] - R(\epol)\Big)^2\right\}\right].
\end{align*}
Therefore, we have
\begin{align*}
&\frac{1}{r^2_\tau T}\sum^T_{t=1}\mathbb{E}\Bigg[\left(\sum^{K}_{a=1}\left\{\frac{\epol(a\mid X_t)\mathbbm{1}[A_t=a]\big\{Y_t - f_{t-1}(a, X_t)\big\}}{\pi_{\tau}(a\mid X_t, \Omega_{t_{\tau - 1}})} + \epol(a\mid X_t)f_{t-1}(a, X_t)\right\} - R(\epol)\right)^2\\
&\ \ \ \ \ \ \ \ \ \ \ \ \ \ \ \ \ \ \ \ \ \ \ \ \ \ \ \ \ \ \ \ \ \ \ \ \ \ \ \ \ \ \ \ \ \ \ \ \ \ \ \ \ \ \ \ \ \ \ \ \ \ \ \ \ \ \ \ \ \ \ \ \ \ \ \ \ \ \ \ \times \mathbbm{1}\big[t_{\tau - 1} < t \leq t_{\tau} \big]\Bigg]\\
&\to \frac{1}{r_\tau}\mathbb{E}\left[\sum^{K}_{a=1}\left\{\frac{\big(\epol(a\mid X)\big)^2\mathrm{Var}\big(Y(a) \mid X\big)}{\pi_{\tau}(a\mid X, \Omega_{t_{\tau - 1}})} + \Big(\epol(a\mid X)\mathbb{E}\big[Y(a)\mid X\big] - R(\epol)\Big)^2\right\}\right].
\end{align*}

Thus, the matrix $(1/T)\sum^T_{t=1}\Sigma_t$ converges to a diagonal matrix $\Sigma$ as $T\to\infty$, where the $(\tau,\tau)$-element of $\Sigma$ is
\begin{align*}
\frac{1}{r_\tau}\mathbb{E}\left[\sum^{K}_{a=1}\left\{\frac{\big(\epol(a\mid X)\big)^2\mathrm{Var}\big(Y(a) \mid X\big)}{\pi_{\tau}(a\mid X, \Omega_{t_{\tau - 1}})} + \Big(\epol(a\mid X)\mathbb{E}\big[Y(a)\mid X\big] - R(\epol)\Big)^2\right\}\right].
\end{align*}

\subsection*{Step~2: Condition~(b)}
Because we assume that all variables are bounded, this condition holds.

\subsection*{Step~3: Condition~(c)}
Here, we check that $(1/T)\sum^{T}_{t=1}\left(\bm{h}_t\left(X_t, A_t, Y_t; R(\epol), f_{t-1}, \pi_\tau,\epol\right)\right)\left(\bm{h}_t\left(X_t, A_t, Y_t; R(\epol), f_{t-1}, \pi_\tau,\epol\right)\right)^{\top}\xrightarrow{p}\Sigma$. The $(\tau,\tau)$-element of the matrix is 
\begin{align*}
&\frac{1}{T}\sum^{T}_{t=1}\frac{1}{r^2_\tau}\left(\sum^{K}_{a=1}\left\{\frac{\epol(a\mid X_t)\mathbbm{1}[A_t=a]\big\{Y_t - f_{t-1}(a, X_t)\big\}}{\pi_{\tau}(a\mid X, \Omega_{t_{\tau - 1}})} + \epol(a\mid X_t)f_{t-1}(a, X_t)\right\} - \theta\right)^2\\
&\ \ \ \ \ \ \ \ \  \ \ \ \ \ \ \ \ \ \ \ \ \ \ \ \ \ \ \ \ \ \ \ \ \ \ \ \ \ \ \  \ \ \ \ \ \ \ \ \ \ \ \ \ \ \ \ \ \ \ \ \ \  \times\mathbbm{1}\big[t_{\tau - 1} < t \leq t_{\tau} \big]\\
&=\frac{1}{T}\sum^{T}_{t=1}\frac{1}{r^2_\tau}\left(\sum^{K}_{a=1}\left\{\frac{\epol(a\mid X_t)\mathbbm{1}[A_t=a]\big\{Y_t - f_{t-1}(a, X_t)\big\}}{\pi_{\tau}(a\mid X, \Omega_{t_{\tau - 1}})} + \epol(a\mid X_t)f_{t-1}(a, X_t)\right\} - \theta\right)^2\\
&\ \ \ \ \ \ \ \ \  \ \ \ \ \ \ \ \ \ \ \ \ \ \ \ \ \ \ \ \ \ \ \ \ \ \ \ \ \ \ \  \ \ \ \ \ \ \ \ \ \ \ \ \ \ \ \ \ \ \ \ \ \  \times\mathbbm{1}\big[t_{\tau - 1} < t \leq t_{\tau} \big]\\
& - \frac{1}{T}\sum^{T}_{t=1}\frac{1}{r^2_\tau}\left(\sum^{K}_{a=1}\left\{\frac{\epol(a\mid X_t)\mathbbm{1}[A_t=a]\big\{Y_t - \mathbb{E}\big[Y(a)\mid X_t\big]\big\}}{\pi_{\tau}(a\mid X, \Omega_{t_{\tau - 1}})} + \epol(a\mid X_t)\mathbb{E}\big[Y(a)\mid X_t\big]\right\} - \theta\right)^2\\
&\ \ \ \ \ \ \ \ \  \ \ \ \ \ \ \ \ \ \ \ \ \ \ \ \ \ \ \ \ \ \ \ \ \ \ \ \ \ \ \  \ \ \ \ \ \ \ \ \ \ \ \ \ \ \ \ \ \ \ \ \ \  \times \mathbbm{1}\big[t_{\tau - 1} < t \leq t_{\tau} \big]\\
& + \frac{1}{T}\sum^{T}_{t=1}\frac{1}{r^2_\tau}\left(\sum^{K}_{a=1}\left\{\frac{\epol(a\mid X_t)\mathbbm{1}[A_t=a]\big\{Y_t - \mathbb{E}\big[Y(a)\mid X_t\big]\big\}}{\pi_{\tau}(a\mid X, \Omega_{t_{\tau - 1}})} + \epol(a\mid X_t)\mathbb{E}\big[Y(a)\mid X_t\big]\right\} - \theta\right)^2\\
&\ \ \ \ \ \ \ \ \  \ \ \ \ \ \ \ \ \ \ \ \ \ \ \ \ \ \ \ \ \ \ \ \ \ \ \ \ \ \ \  \ \ \ \ \ \ \ \ \ \ \ \ \ \ \ \ \ \ \ \ \ \  \times\mathbbm{1}\big[t_{\tau - 1} < t \leq t_{\tau} \big].
\end{align*}
The part 
\begin{align*}
&\frac{1}{T}\sum^{T}_{t=1}\frac{1}{r^2_\tau}\left(\sum^{K}_{a=1}\left\{\frac{\epol(a\mid X_t)\mathbbm{1}[A_t=a]\big\{Y_t - f_{t-1}(a, X_t)\big\}}{\pi_{\tau}(a\mid X, \Omega_{t_{\tau - 1}})} + \epol(a\mid X_t)f_{t-1}(a, X_t)\right\} - \theta\right)^2\\
&\ \ \ \ \ \ \ \ \  \ \ \ \ \ \ \ \ \ \ \ \ \ \ \ \ \ \ \ \ \ \ \ \ \ \ \ \ \ \ \  \ \ \ \ \ \ \ \ \ \ \ \ \ \ \ \ \ \ \ \ \ \  \times\mathbbm{1}\big[t_{\tau - 1} < t \leq t_{\tau} \big]\\
& - \frac{1}{T}\sum^{T}_{t=1}\frac{1}{r^2_\tau}\left(\sum^{K}_{a=1}\left\{\frac{\epol(a\mid X_t)\mathbbm{1}[A_t=a]\big\{Y_t - \mathbb{E}\big[Y(a) \mid X_t\big]\big\}}{\pi_{\tau}(a\mid X, \Omega_{t_{\tau - 1}})} + \epol(a\mid X_t)\mathbb{E}\big[Y(a)\mid X_t\big]\right\} - \theta\right)^2\\
&\ \ \ \ \ \ \ \ \  \ \ \ \ \ \ \ \ \ \ \ \ \ \ \ \ \ \ \ \ \ \ \ \ \ \ \ \ \ \ \  \ \ \ \ \ \ \ \ \ \ \ \ \ \ \ \ \ \ \ \ \ \  \times\mathbbm{1}\big[t_{\tau - 1} < t \leq t_{\tau} \big]
\end{align*}
converges in probability to $0$ because $f_{t-1}(a, X_t) \xrightarrow{\mathrm{p}} \mathbb{E}\big[Y(a)\mid X_t\big]$. The term 
\begin{align*}
&\frac{1}{T}\sum^{T}_{t=1}\frac{1}{r^2_\tau}\left(\sum^{K}_{a=1}\left\{\frac{\epol(a\mid X_t)\mathbbm{1}[A_t=a]\big\{Y_t - \mathbb{E}\big[Y(a) \mid X_t\big]\big\}}{\pi_{\tau}(a\mid X, \Omega_{t_{\tau - 1}})} + \epol(a\mid X_t)\mathbb{E}\big[Y(a) \mid X_t\big]\right\} - R(\epol)\right)^2\\
&\ \ \ \ \ \ \ \ \  \ \ \ \ \ \ \ \ \ \ \ \ \ \ \ \ \ \ \ \ \ \ \ \ \ \ \ \ \ \ \  \ \ \ \ \ \ \ \ \ \ \ \ \ \ \ \ \ \ \ \ \ \  \times\mathbbm{1}\big[t_{\tau - 1} < t \leq t_{\tau} \big].
\end{align*}
converges in probability to 
\begin{align*}
\frac{1}{r_\tau}\mathbb{E}\left[\sum^{K}_{a=1}\left\{\frac{\big(\epol(a\mid X)\big)^2\mathrm{Var}\big(Y(a) \mid X\big)}{\pi_{\tau}(a\mid X, \Omega_{t_{\tau - 1}})} + \Big(\epol(a\mid X)\mathbb{E}\big[Y(a)\mid X\big] - R(\epol)\Big)^2\right\}\right].
\end{align*}
from the weak law of large numbers for i.i.d. samples as $t_{\tau-1}-t_{\tau} \to \infty$ because the samples are i.i.d. between $t_{\tau-1}$ and $t_{\tau}$. 
\end{proof}

\subsection{OPE estimator when the true logging policy is unknown}
\label{appdx:batch_est}
Then, we consider estimating the policy value without using the true logging policy $\pi_t$. We use adaptive-fitting for obtaining an asymptotically normal estimator. As the ADR estimator under Assumption~\ref{asm:stationarity}, we estimate $\pi_t$ and $f^*$ only using $\Omega_{t-1}$ and denote them as $g_{t-1}$ and $\hat{f}_{t-1}$, respectively. Then, we define an estimator of $R(\epol)$ as
\begin{align*}
\tilde{R}^{\mathrm{batch}}_T(\epol) = \argmin_{R\in\mathbb{R}} \left(\tilde{\bm{q}}_T(R)\right)^\top \hat{W}_T \left(\tilde{\bm{q}}_T(R)\right),
\end{align*}
where $\tilde{\bm{q}}_T(R) = \frac{1}{T}\sum^{T}_{t=1}\bm{h}_t\left(X_t, A_t, Y_t; R, \hat{f}_{t-1}, \hat{g}_{t-1} \epol\right)$ and $\hat{W}_T$ is a data-dependent $(M\times M)$-dimensional positive semi-definite matrix. This estimator is the ADR estimator under batch update. As well as the proof of Theorem~\ref{thm:asymp_dist_adr}, it is hold that 
$$ \left| \tilde{R}^{\mathrm{batch}}_T(\epol) - \hat{R}^{\mathrm{batch}}_T(\epol)  \right| = \mathrm{o}_p(1/\sqrt{T}) $$ if $\|\hat{g}_{t-1}(a| X_t) - \pi_{t-1}(a| X_t, \Omega_{t-1})\|_{2}=\op(t^{-p})$, and $\|\hat{f}_{t-1}(a,X_t)-f^*(a,X_t)\|_2=\op(t^{-q})$, where $p, q > 0$ such that $p+q = 1/2$. Therefore, we can obtain the following theorem. 

\begin{theorem}[Asymptotic distribution the ADR estimator under batch update]
Suppose that 
\begin{description}
\item[(i)] $\hat W_T \xrightarrow{\mathrm{p}} W$;
\item[(ii)] $W$ is a positive definite.
\end{description}
Then, under Assumptions~\ref{asm:overlap_pol}, \ref{asm:bounded_reward}, \ref{asm:conv_rate1}--\ref{asm:bound_nuisance2}, 
\begin{align*}
\sqrt{T}\big(\tilde{R}^{\mathrm{batch}}_T(\epol) - R(\epol)\big)  \xrightarrow{\mathrm{d}}\mathcal{N}\big(0, \sigma^2\big),
\end{align*}
where $\sigma^2 = \big(I^\top W I \big)^{-1} I^\top W \Sigma W^\top I \big(I^\top W I \big)^{-1}$ and $\Sigma$ is a $(M\times M)$ diagonal matrix such that the $(\tau\times \tau)$-element is $\Psi(\epol,\pi_\tau)$.
\end{theorem}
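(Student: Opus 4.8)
The plan is to mirror the proof of Theorem~\ref{thm:asymp_dist_adr} and then combine it with Theorem~\ref{thm:main} via Slutsky's theorem. Write
$\sqrt{T}\big(\tilde{R}^{\mathrm{batch}}_T(\epol)-R(\epol)\big) = \sqrt{T}\big(\hat{R}^{\mathrm{batch}}_T(\epol)-R(\epol)\big) + \sqrt{T}\big(\tilde{R}^{\mathrm{batch}}_T(\epol)-\hat{R}^{\mathrm{batch}}_T(\epol)\big)$,
where $\hat{R}^{\mathrm{batch}}_T$ is the batched AIPW estimator that uses the true per-block logging policy $\pi_\tau$ but the \emph{same} step-wise outcome estimator $\hat{f}_{t-1}$. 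Theorem~\ref{thm:main} already gives $\sqrt{T}\big(\hat{R}^{\mathrm{batch}}_T(\epol)-R(\epol)\big)\xrightarrow{\mathrm{d}}\mathcal{N}(0,\sigma^2)$ under conditions (i)--(ii), so the whole task reduces to showing that the remainder is $\op(1)$, i.e. $\big|\tilde{R}^{\mathrm{batch}}_T(\epol)-\hat{R}^{\mathrm{batch}}_T(\epol)\big| = \op(1/\sqrt{T})$.

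First I would exploit the linearity of both estimators in their moment vectors. By the same analytic minimization as in the proof of Theorem~\ref{thm:main}, $\hat{R}^{\mathrm{batch}}_T = (I^\top \hat W_T I)^{-1} I^\top \hat W_T D_T(\epol)$ and $\tilde{R}^{\mathrm{batch}}_T = (I^\top \hat W_T I)^{-1} I^\top \hat W_T \tilde D_T(\epol)$, where $\tilde D_T(\epol)$ is obtained from $D_T(\epol)$ by replacing each true logging probability $\pi_\tau$ with its estimator $\hat g_{t-1}$. Since the prefactor is common and $\hat W_T \xrightarrow{\mathrm{p}} W$ with $W$ positive definite implies $(I^\top \hat W_T I)^{-1} I^\top \hat W_T \xrightarrow{\mathrm{p}} (I^\top W I)^{-1} I^\top W$, it suffices to prove that every block component of $\sqrt{T}\big(\tilde D_T(\epol)-D_T(\epol)\big)$ is $\op(1)$. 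Because both vectors use the identical $\hat f_{t-1}$, the direct-method part $\phi_2$ cancels exactly in the difference, and the $\tau$-th block component reduces to the block average of $\phi_1(X_t,A_t,Y_t;\hat g_{t-1},\hat f_{t-1}) - \phi_1(X_t,A_t,Y_t;\pi_\tau,\hat f_{t-1})$, whose only source of discrepancy is $1/\hat g_{t-1}(a\mid X_t)$ versus $1/\pi_\tau(a\mid X_t,\Omega_{t_{\tau-1}})$.

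Within each block I would replay the two-part argument of Appendix~\ref{appdx:proof_main}, splitting each summand into its $\Omega_{t-1}$-centered fluctuation and its conditional mean. The fluctuation contribution is controlled by Chebyshev: its conditional variance is bounded by a constant times $\|\hat g_{t-1}-\pi_\tau\|_2^2=\op(1)$, so by the $L^r$ convergence theorem (Proposition~\ref{prp:lr_conv_theorem}) and boundedness the Ces\`aro average of expected conditional variances vanishes, while all cross-covariances across distinct time indices vanish by the martingale property of adaptive-fitting. For the conditional-mean (bias) part, taking $\E[\,\cdot\mid X_t,\Omega_{t-1}]$ produces the double-robustness cross-term $\E\big[\sum_a \epol(a\mid X_t)\,(f^*(a,X_t)-\hat f_{t-1}(a,X_t))\,\tfrac{\pi_\tau(a\mid X_t)-\hat g_{t-1}(a\mid X_t)}{\hat g_{t-1}(a\mid X_t)}\,\big|\,\Omega_{t-1}\big]$; here I use that within block $\tau$ the true logging policy is constant, $\pi_{t-1}=\pi_\tau$, so Assumption~\ref{asm:conv_rate1} directly gives $\|\pi_\tau-\hat g_{t-1}\|_2=\op(t^{-p})$. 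By \Holder's inequality this term is bounded by $C\,\|\pi_\tau-\hat g_{t-1}\|_2\,\|f^*-\hat f_{t-1}\|_2 = \op(t^{-p})\op(t^{-q}) = \op(t^{-1/2})$, and after block-normalization by the length $Tr_\tau$ and multiplication by $\sqrt{T}$ it is $\op(1)$.

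The main obstacle is the interplay between adaptive-fitting and the batch normalization in the bias term. Unlike the fixed per-block policy $\pi_\tau$, the estimator $\hat g_{t-1}$ is refreshed at every step, so the $\tau$-block is not a clean i.i.d. sum; the remedy is that conditioning on $\Omega_{t-1}$ renders $\hat g_{t-1}$ and $\hat f_{t-1}$ deterministic, restoring the martingale-difference structure used above. The delicate point is verifying $\tfrac{\sqrt T}{Tr_\tau}\sum_{t_{\tau-1}<t\le t_\tau}\op(t^{-1/2})=\op(1)$ uniformly in $\tau$, which relies on $\sum_{t_{\tau-1}<t\le t_\tau} t^{-1/2}=O(\sqrt T)$ together with $r_\tau$ fixed and bounded away from zero. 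Once this rate bookkeeping is settled, the remainder is $\op(1)$ and Slutsky's theorem yields the stated limit with variance $\sigma^2$ inherited verbatim from Theorem~\ref{thm:main}.
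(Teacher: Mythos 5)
Your proposal is correct and follows essentially the same route as the paper: decompose $\sqrt{T}\big(\tilde{R}^{\mathrm{batch}}_T(\epol)-R(\epol)\big)$ into the batched AIPW term handled by Theorem~\ref{thm:main} plus the difference $\tilde{R}^{\mathrm{batch}}_T(\epol)-\hat{R}^{\mathrm{batch}}_T(\epol)$, and show the latter is $\op(1/\sqrt{T})$ by replaying the adaptive-fitting argument of Theorem~\ref{thm:asymp_dist_adr} (martingale-centered fluctuation controlled by Chebyshev and the $L^r$ convergence theorem, plus the \Holder\ bound $\op(t^{-p})\op(t^{-q})=\op(t^{-1/2})$ on the double-robustness cross term). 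The paper compresses this entire step into a one-line appeal to the proof of Theorem~\ref{thm:asymp_dist_adr}, so your write-up, including the observation that the $\phi_2$ parts cancel because both estimators share the same $\hat{f}_{t-1}$ and the block-normalization bookkeeping, is if anything more complete than the paper's own argument.
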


\section{Details of experiments}
\label{appdx:det_exp}

The description of the dataset is shown in Table~\ref{Dataset}. We use LinUCB and LinTS policies. We add uniform sampling to make overlap between policies. We can relax this requirement by considering different DGPs, such as batched sampling. However, for brevity, we adopt this setting. Additional results are shown as follows. 

For numerical experiments in Section~\ref{sec:num_exp}, we show the result with sample sizes $T= 100,1,000$ in Table~\ref{tbl:exp_table_bias2}. In addition, we show the error distribution with sample size $T= 100$ in Figure~\ref{fig:bias2}; we show the error distribution with sample size $T=500$ in Figure~\ref{fig:bias3}; we show the error distribution with sample size $T=500$ in Figure~\ref{fig:bias4}; we show the error distribution with sample size $T=1,000$ in Figure~\ref{fig:bias5}. 

For experiments with dependent samples in Section~\ref{sec:exp}, we show the additional results with different settings in Tables~\ref{tbl:appdx:exp_table1}--\ref{tbl:appdx:exp_table6}. In Table~\ref{tbl:appdx:exp_table1}, we show the results using the benchmark datasets with $800$ samples generated from the LinUCB algorithm.  In Table~\ref{tbl:appdx:exp_table2}, we show the results using the benchmark datasets with $1,000$ samples generated from the LinUCB algorithm. In Table~\ref{tbl:appdx:exp_table3}, we show the results using the the benchmark datasets with $1,200$ samples generated from the LinUCB algorithm. In Table~\ref{tbl:appdx:exp_table4}, we show the results using the the benchmark datasets with $800$ samples generated from the LinTS algorithm. In Table~\ref{tbl:appdx:exp_table5}, we show the results using the the benchmark datasets with $1,000$ samples generated from the LinTS algorithm. In Table~\ref{tbl:appdx:exp_table6}, we show the results using the the benchmark datasets with $1,200$ samples generated from the LinTS algorithm.

Next, we compare the estimators using the benchmark datasets generated from the logistic regression as well as the evaluation weight; that is, the samples are i.i.d.  For $\alpha\in\{0.7, 0.4, 0.1\}$ and the sample sizes $800$, $1,000$, and $1,200$, we calculate the RMSEs and the SDs over $10$ trials. We show additional results with different settings in Tables~\ref{tbl:appdx:exp_table7}--\ref{tbl:appdx:exp_table9}. In Table~\ref{tbl:appdx:exp_table7}, we show the results using the benchmark datasets with $800$ samples. In Table~\ref{tbl:appdx:exp_table8}, we show the results using the benchmark datasets with $1,000$ samples. In Table~\ref{tbl:appdx:exp_table9}, we show the results using the benchmark datasets with $1,200$ samples. In these experiments, AIPW estimators show better performances than the ADR estimator. We conjecture that this is because the logging policy is not unstable unlike the case with dependent samples; that is, the paradox is specific to the case where samples are dependent. Note this case (i.i.d. samples) is not in the scope of the proposed method. In such cases, it is common to use other methods, such as the cross-fitting proposed by \citet{ChernozhukovVictor2018Dmlf}, instead of the AIPW and ADR estimators discussed in this paper. We show this result to clarify the cause of the paradox. Since the ADR and AIPW estimators have the same asymptotic variance, and since the AIPW estimator uses more information, it is a natural result that the AIPW estimator performs better. However, when the samples are dependent, the ADR estimator paradoxically outperforms the AIPW estimator because of the unstable behavior of the logging policy $\pi_t$, as pointed out by \citet{hadad2019}.

As a surprising discovery, our proposed ADR estimator shows better results than the AIPW estimator, although the AIPW estimator uses more information (true logging policy $\pi_t$) than the ADR estimator and their asymptotic properties are the same. As discussed above, we consider that this result is due to the logging policy's unstable behavior. Even when knowing the true logging policy $\pi_t$, we can stabilize estimation by reestimating the logging policy from $(A_t, X_t)$. This paradox is similar to the well-known property that the IPW estimator using an estimated propensity score shows a smaller asymptotic variance than the IPW using the true propensity score \citep{hirano2003,Henmi2004paradox,Henmi2007imp}. However, we consider that they are different phenomena. In previous studies, the paradox is mainly explained by differences in the asymptotic variance between the IPW estimators with the true and an estimated propensity score. On the other hand, for our case, the AIPW and ADR estimators have the same asymptotic variance, unlike IPW-type estimators; therefore, we cannot elucidate the paradox by traditional explanation.

\begin{table}[ht]
\caption{Specification of datasets}
\label{Dataset}
\begin{center}
\scalebox{1}[1]{\begin{tabular}{cccc}
\hline
Dataset&the number of samples &Dimension &the number of classes\\
\hline
 {\tt mnist} & 60,000 &  780 & 10 \\
 {\tt satimage} & 4,435  &  35 & 6 \\
 {\tt sensorless} & 58,509 &  48 & 11 \\
 {\tt connect-4} & 67,557 &  126 & 3\\
\end{tabular}}
\end{center}
\end{table}

\begin{table*}[t]
\vspace{-0.1cm}
\caption{The results of Section~\ref{sec:a2ipw} with sample sizes $T=100, 1,000$. We show the RMSE, SD, and coverage ratio of the confidence interval (CR). We highlight in red bold two estimators with the lowest RMSE.
Estimators with asymptotic normality are marked with $\dagger$, and estimators that do not require the true logging policy are marked with $*$.} 
\vspace{-0.3cm}
\label{tbl:exp_table_bias2}
\begin{center}
\scalebox{0.61}[0.61]{
\begin{tabular}{|l||rrr||rrr|rrr|rrr||rrr|rrr|}
\hline
\multicolumn{19}{|c|}{LinUCB policy}\\
\hline
{$T$} &    \multicolumn{3}{c||}{ADR $\dagger*$} &   \multicolumn{3}{c|}{IPW $\dagger$} &   \multicolumn{3}{c|}{AIPW $\dagger$} &   
\multicolumn{3}{c||}{AW-AIPW $\dagger$} &   \multicolumn{3}{c|}{DM $*$} &    \multicolumn{3}{c|}{EIPW $*$}\\
\hline
 &      RMSE &      SD &      CR &      RMSE &      SD &      CR &      RMSE &      SD &      CR &      RMSE &     SD &     CR &  RMSE &      SD &      CR &  RMSE &      SD &      CR\\
\hline
100 &  \textcolor{red}{\textbf{0.106}} &  0.011 &  0.86 &  0.158 &  0.036 &  0.8 &  0.139 &  0.024 &  0.87 &  0.161 &  0.035 &  0.34 &  0.116 &  0.016 &  0.29 &  0.144 &  0.031 &  0.73 \\
1,000 &  \textcolor{red}{\textbf{0.030}} &  0.001 &  0.93 &  0.083 &  0.035 &  0.88 &  0.061 &  0.011 &  0.87 &  0.189 &  0.016 &  0.0 &  0.033 &  0.001 &  0.22 &  0.133 &  0.012 &  0.07 \\
\hline
\end{tabular}
} 
\end{center}
\vspace{-0.3cm}
\begin{center}
\scalebox{0.61}[0.61]{
\begin{tabular}{|l||rrr||rrr|rrr|rrr||rrr|rrr|}
\hline
\multicolumn{19}{|c|}{LinTS policy}\\
\hline
{$T$} &    \multicolumn{3}{c||}{ADR $\dagger*$} &   \multicolumn{3}{c|}{IPW $\dagger$} &   \multicolumn{3}{c|}{AIPW $\dagger$} &   
\multicolumn{3}{c||}{AW-AIPW $\dagger$} &   \multicolumn{3}{c|}{DM $*$} &    \multicolumn{3}{c|}{EIPW $*$}\\
\hline
 &      RMSE &      SD &      CR &      RMSE &      SD &      CR &      RMSE &      SD &      CR &      RMSE &     SD &     CR &  RMSE &      SD &      CR &  RMSE &      SD &      CR\\
\hline
100 &  \textcolor{red}{\textbf{0.087}} &  0.008 &  0.92 &  0.181 &  0.062 &  0.88 &  0.150 &  0.032 &  0.92 &  0.124 &  0.02 &  0.42 &  0.100 &  0.011 &  0.25 &  0.140 &  0.026 &  0.82 \\
1,000 &  \textcolor{red}{\textbf{0.026}} &  0.001 &  0.98 &  0.065 &  0.008 &  0.89 &  0.053 &  0.005 &  0.94 &  0.155 &  0.011 &  0.0 &  0.033 &  0.001 &  0.15 &  0.114 &  0.009 &  0.09 \\
\hline
\end{tabular}
} 
\end{center}
\vspace{-0.5cm}
\end{table*}

\begin{figure}[t]
\begin{center}
 \includegraphics[width=120mm]{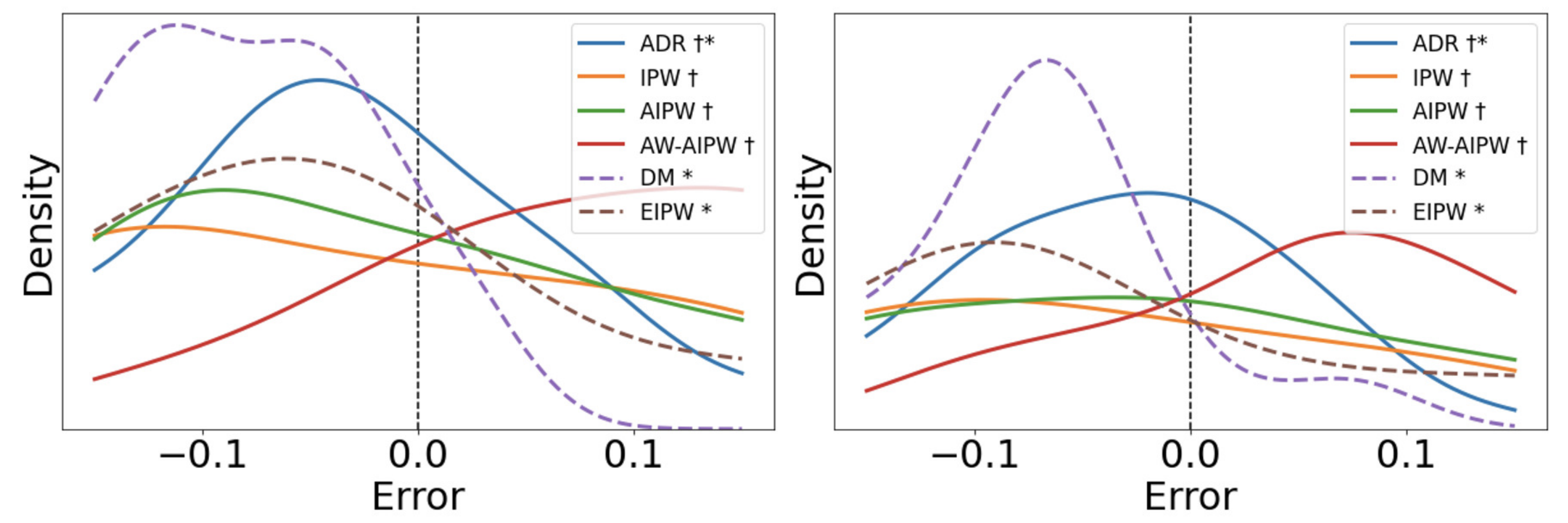}
\end{center}
\vspace{-0.5cm}
\caption{This figure illustrates the error distributions of estimators for OPVE from dependent samples generated with the LinUcB(left) and LinTS(right) with the sample size $100$. We smoothed the error distributions using kernel density estimation. Estimators with asymptotic normality are marked with $\dagger$, and estimators that do not require a true logging policy are marked with $*$.}
\label{fig:bias2}
\begin{center}
 \includegraphics[width=120mm]{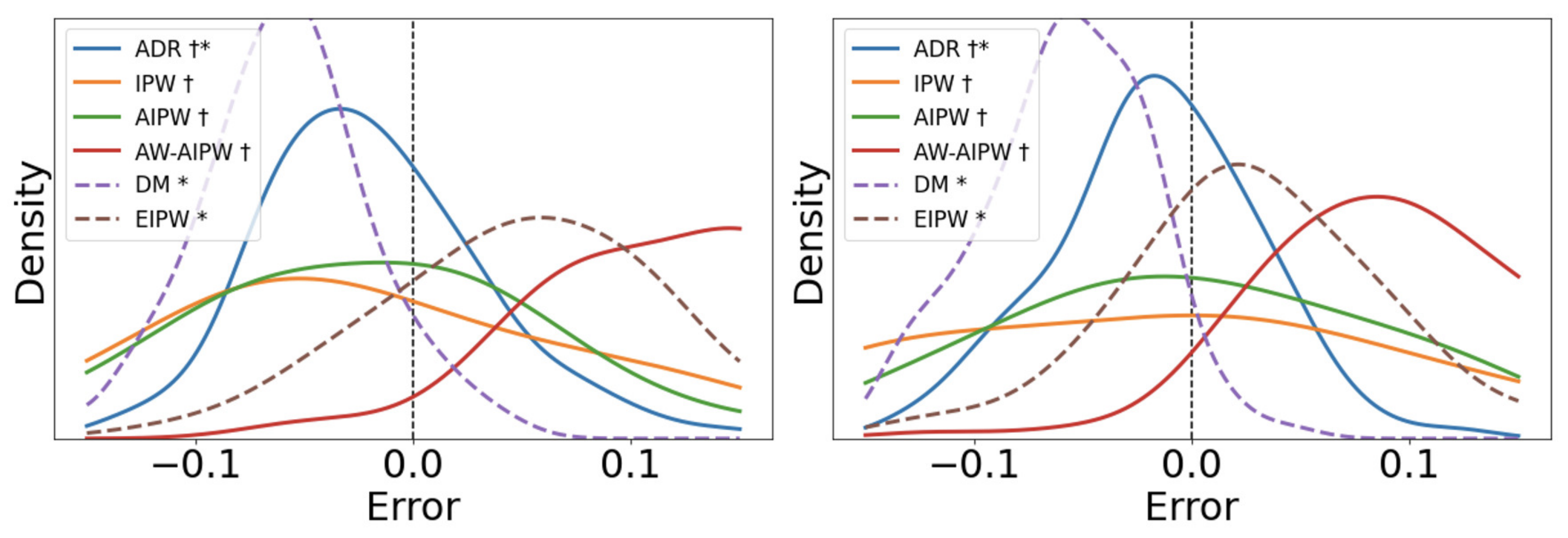}
\end{center}
\vspace{-0.5cm}
\caption{This figure illustrates the error distributions of estimators for OPVE from dependent samples generated with the LinUcB(left) and LinTS(right) with the sample size $250$. We smoothed the error distributions using kernel density estimation. Estimators with asymptotic normality are marked with $\dagger$, and estimators that do not require a true logging policy are marked with $*$.}
\label{fig:bias3}
\begin{center}
 \includegraphics[width=120mm]{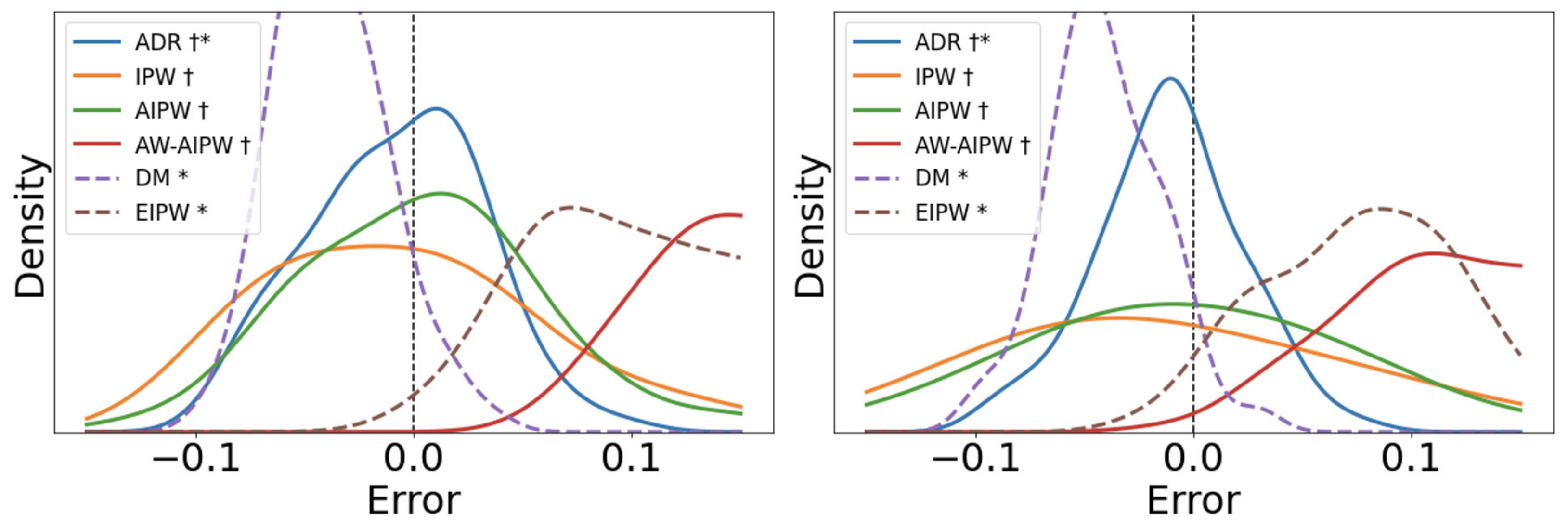}
\end{center}
\vspace{-0.5cm}
\caption{This figure illustrates the error distributions of estimators for OPVE from dependent samples generated with the LinUcB(left) and LinTS(right) with the sample size $500$. We smoothed the error distributions using kernel density estimation. Estimators with asymptotic normality are marked with $\dagger$, and estimators that do not require a true logging policy are marked with $*$.}
\label{fig:bias4}
\begin{center}
 \includegraphics[width=120mm]{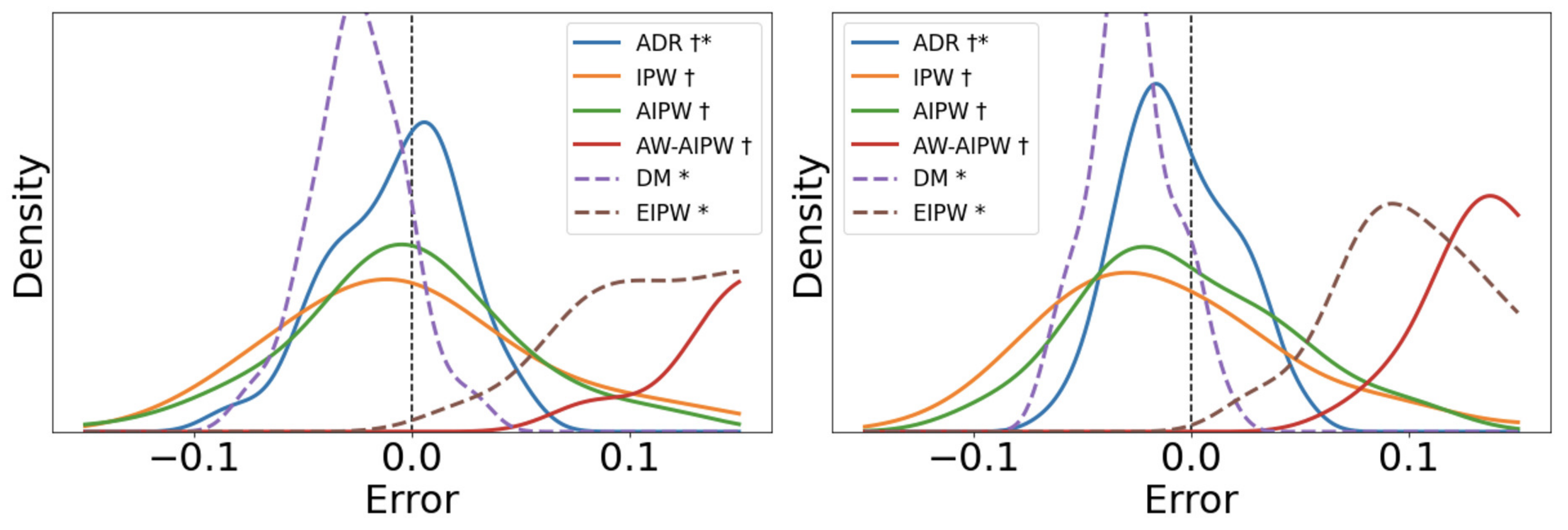}
\end{center}
\vspace{-0.5cm}
\caption{This figure illustrates the error distributions of estimators for OPVE from dependent samples generated with the LinUcB(left) and LinTS(right) with the sample size $1,000$. We smoothed the error distributions using kernel density estimation. Estimators with asymptotic normality are marked with $\dagger$, and estimators that do not require a true logging policy are marked with $*$.}
\label{fig:bias5}
\vspace{-0.5cm}
\end{figure} 

\begin{table*}[t]
\vspace{-0.2cm}
\caption{The results of benchmark datasets with the LinUCB policy and $T=800$. We highlight in red bold the estimator with the lowest RMSE and highlight in under line the estimator with the lowest RMSE among estimators that do not use the true logging policy. Estimators with asymptotic normality are marked with $\dagger$, and estimators that do not require the true logging policy are marked with $*$.} 
\label{tbl:appdx:exp_table1}
\begin{center}
\scalebox{0.80}[0.80]{
\begin{tabular}{|l||rr||rr|rr||rr|rr|}
\hline
{\tt mnist}$\ \ \ \ \ \ \ \ \ \ $ &    \multicolumn{2}{|c||}{ADR\ $\dagger*$} &   \multicolumn{2}{c|}{IPW $\dagger$} &   \multicolumn{2}{c||}{AIPW $\dagger$} &    \multicolumn{2}{c|}{DM $*$} &    \multicolumn{2}{c|}{EIPW $*$} \\
\hline
$\alpha$ &      RMSE &      SD &      RMSE &      SD &      RMSE &      SD &      RMSE &      SD &      RMSE &      SD \\
\hline
0.7 &  \underline{\textcolor{red}{\textbf{0.047}}} &  0.003 &  0.099 &  0.011 &  0.132 &  0.032 &  0.279 &  0.025 &  0.130 &  0.019 \\
0.4 &  \underline{\textcolor{red}{\textbf{0.034}}} &  0.002 &  0.067 &  0.005 &  0.115 &  0.014 &  0.281 &  0.015 &  0.095 &  0.009 \\
0.1 &  \underline{\textcolor{red}{\textbf{0.053}}} &  0.006 &  0.079 &  0.006 &  0.107 &  0.013 &  0.287 &  0.028 &  0.119 &  0.018 \\
\hline
\end{tabular}
}
\end{center}
\vspace{-0.3cm}
\begin{center}
\scalebox{0.80}[0.80]{
\begin{tabular}{|l||rr||rr|rr||rr|rr|}
\hline
{\tt satimage}$\ \ \ \ $ &    \multicolumn{2}{|c||}{ADR\ $\dagger*$} &   \multicolumn{2}{c|}{IPW $\dagger$} &   \multicolumn{2}{c||}{AIPW $\dagger$} &    \multicolumn{2}{c|}{DM $*$} &    \multicolumn{2}{c|}{EIPW $*$} \\
\hline
$\alpha$ &      RMSE &      SD &      RMSE &      SD &      RMSE &      SD &      RMSE &      SD &      RMSE &      SD \\
\hline
0.7 &  \underline{\textcolor{red}{\textbf{0.015}}} &  0.000 &  0.097 &  0.013 &  0.074 &  0.007 &  0.035 &  0.001 &  0.065 &  0.003 \\
0.4 &  \underline{\textcolor{red}{\textbf{0.020}}} &  0.000 &  0.078 &  0.006 &  0.088 &  0.010 &  0.041 &  0.001 &  0.085 &  0.014 \\
0.1 &  \underline{\textcolor{red}{\textbf{0.026}}} &  0.001 &  0.080 &  0.006 &  0.092 &  0.011 &  0.059 &  0.003 &  0.097 &  0.012 \\
\hline
\end{tabular}
}
\end{center}
\vspace{-0.3cm}
\begin{center}
\scalebox{0.80}[0.80]{
\begin{tabular}{|l||rr||rr|rr||rr|rr|}
\hline
{\tt sensorless} &    \multicolumn{2}{|c||}{ADR\ $\dagger*$} &   \multicolumn{2}{c|}{IPW $\dagger$} &   \multicolumn{2}{c||}{AIPW $\dagger$} &    \multicolumn{2}{c|}{DM $*$} &    \multicolumn{2}{c|}{EIPW $*$} \\
\hline
$\alpha$ &      RMSE &      SD &      RMSE &      SD &      RMSE &      SD &      RMSE &      SD &      RMSE &      SD \\
\hline
0.7 &  \underline{\textcolor{red}{\textbf{0.050}}} &  0.002 &  0.253 &  0.049 &  0.101 &  0.016 &  0.130 &  0.008 &  0.121 &  0.016 \\
0.4 &  \underline{\textcolor{red}{\textbf{0.065}}} &  0.006 &  0.270 &  0.039 &  0.104 &  0.019 &  0.150 &  0.013 &  0.096 &  0.012 \\
0.1 &  \underline{\textcolor{red}{\textbf{0.042}}} &  0.001 &  0.250 &  0.043 &  0.075 &  0.006 &  0.127 &  0.008 &  0.102 &  0.010 \\
\hline
\end{tabular}
}
\end{center}
\vspace{-0.3cm}
\begin{center}
\scalebox{0.80}[0.80]{
\begin{tabular}{|l||rr||rr|rr||rr|rr|}
\hline
{\tt connect-4}$\ \ $ &    \multicolumn{2}{|c||}{ADR\ $\dagger*$} &   \multicolumn{2}{c|}{IPW $\dagger$} &   \multicolumn{2}{c||}{AIPW $\dagger$} &    \multicolumn{2}{c|}{DM $*$} &    \multicolumn{2}{c|}{EIPW $*$} \\
\hline
$\alpha$ &      RMSE &      SD &      RMSE &      SD &      RMSE &      SD &      RMSE &      SD &      RMSE &      SD \\
\hline
0.7 &  \underline{\textcolor{red}{\textbf{0.024}}} &  0.001 &  0.121 &  0.019 &  0.042 &  0.002 &  0.045 &  0.001 &  0.140 &  0.007 \\
0.4 &  \underline{\textcolor{red}{\textbf{0.023}}} &  0.001 &  0.125 &  0.014 &  0.033 &  0.001 &  0.044 &  0.002 &  0.090 &  0.007 \\
0.1 &  \underline{\textcolor{red}{\textbf{0.028}}} &  0.001 &  0.107 &  0.012 &  0.042 &  0.002 &  0.068 &  0.004 &  0.031 &  0.001 \\
\hline
\end{tabular}
}
\end{center}
\vspace{-0.65cm}
\end{table*}

\begin{table*}[t]
\vspace{-0.2cm}
\caption{The results of benchmark datasets with the LinUCB policy and $T=1,000$. We highlight in red bold the estimator with the lowest RMSE and highlight in under line the estimator with the lowest RMSE among estimators that do not use the true logging policy. Estimators with asymptotic normality are marked with $\dagger$, and estimators that do not require the true logging policy are marked with $*$.} 
\label{tbl:appdx:exp_table2}
\begin{center}
\scalebox{0.80}[0.80]{
\begin{tabular}{|l||rr||rr|rr||rr|rr|}
\hline
{\tt mnist}$\ \ \ \ \ \ \ \ \ \ $ &    \multicolumn{2}{|c||}{ADR\ $\dagger*$} &   \multicolumn{2}{c|}{IPW $\dagger$} &   \multicolumn{2}{c||}{AIPW $\dagger$} &    \multicolumn{2}{c|}{DM $*$} &    \multicolumn{2}{c|}{EIPW $*$} \\
\hline
$\alpha$ &      RMSE &      SD &      RMSE &      SD &      RMSE &      SD &      RMSE &      SD &      RMSE &      SD \\
\hline
0.7 &  \underline{\textcolor{red}{\textbf{0.046}}} &  0.002 &  0.100 &  0.011 &  0.162 &  0.027 &  0.232 &  0.013 &  0.148 &  0.014 \\
0.4 &  \underline{\textcolor{red}{\textbf{0.028}}} &  0.001 &  0.068 &  0.005 &  0.112 &  0.009 &  0.249 &  0.010 &  0.080 &  0.004 \\
0.1 &  \underline{0.086} &  0.006 &  \textcolor{red}{\textbf{0.078}} &  0.006 &  0.085 &  0.008 &  0.299 &  0.024 &  0.091 &  0.008 \\
\hline
\end{tabular}
}
\end{center}
\vspace{-0.3cm}
\begin{center}
\scalebox{0.80}[0.80]{
\begin{tabular}{|l||rr||rr|rr||rr|rr|}
\hline
{\tt satimage}$\ \ \ \ $ &    \multicolumn{2}{|c||}{ADR\ $\dagger*$} &   \multicolumn{2}{c|}{IPW $\dagger$} &   \multicolumn{2}{c||}{AIPW $\dagger$} &    \multicolumn{2}{c|}{DM $*$} &    \multicolumn{2}{c|}{EIPW $*$} \\
\hline
$\alpha$ &      RMSE &      SD &      RMSE &      SD &      RMSE &      SD &      RMSE &      SD &      RMSE &      SD \\
\hline
0.7 &  \underline{\textcolor{red}{\textbf{0.013}}} &  0.000 &  0.098 &  0.011 &  0.060 &  0.004 &  0.037 &  0.001 &  0.056 &  0.002 \\
0.4 &  \underline{\textcolor{red}{\textbf{0.022}}} &  0.000 &  0.078 &  0.008 &  0.019 &  0.000 &  0.043 &  0.001 &  0.060 &  0.002 \\
0.1 &  \underline{\textcolor{red}{\textbf{0.029}}} &  0.001 &  0.078 &  0.005 &  0.061 &  0.008 &  0.041 &  0.002 &  0.041 &  0.002 \\
\hline
\end{tabular}
}
\end{center}
\vspace{-0.3cm}
\begin{center}
\scalebox{0.80}[0.80]{
\begin{tabular}{|l||rr||rr|rr||rr|rr|}
\hline
{\tt sensorless} &    \multicolumn{2}{|c||}{ADR\ $\dagger*$} &   \multicolumn{2}{c|}{IPW $\dagger$} &   \multicolumn{2}{c||}{AIPW $\dagger$} &    \multicolumn{2}{c|}{DM $*$} &    \multicolumn{2}{c|}{EIPW $*$} \\
\hline
$\alpha$ &      RMSE &      SD &      RMSE &      SD &      RMSE &      SD &      RMSE &      SD &      RMSE &      SD \\
\hline
0.7 &  \underline{\textcolor{red}{\textbf{0.043}}} &  0.001 &  0.270 &  0.049 &  0.096 &  0.014 &  0.109 &  0.014 &  0.068 &  0.007 \\
0.4 &  \underline{\textcolor{red}{\textbf{0.053}}} &  0.004 &  0.287 &  0.035 &  0.070 &  0.006 &  0.112 &  0.008 &  0.104 &  0.013 \\
0.1 &  \underline{\textcolor{red}{\textbf{0.048}}} &  0.002 &  0.260 &  0.031 &  0.072 &  0.006 &  0.154 &  0.015 &  0.088 &  0.012 \\
\hline
\end{tabular}
}
\end{center}
\vspace{-0.3cm}
\begin{center}
\scalebox{0.80}[0.80]{
\begin{tabular}{|l||rr||rr|rr||rr|rr|}
\hline
{\tt connect-4}$\ \ $ &    \multicolumn{2}{|c||}{ADR\ $\dagger*$} &   \multicolumn{2}{c|}{IPW $\dagger$} &   \multicolumn{2}{c||}{AIPW $\dagger$} &    \multicolumn{2}{c|}{DM $*$} &    \multicolumn{2}{c|}{EIPW $*$} \\
\hline
$\alpha$ &      RMSE &      SD &      RMSE &      SD &      RMSE &      SD &      RMSE &      SD &      RMSE &      SD \\
\hline
0.7 &  \underline{\textcolor{red}{\textbf{0.030}}} &  0.001 &  0.135 &  0.024 &  0.042 &  0.002 &  0.032 &  0.001 &  0.173 &  0.006 \\
0.4 &  \underline{\textcolor{red}{\textbf{0.015}}} &  0.000 &  0.135 &  0.017 &  0.037 &  0.001 &  0.038 &  0.001 &  0.085 &  0.002 \\
0.1 &  \underline{\textcolor{red}{\textbf{0.012}}} &  0.000 &  0.117 &  0.013 &  0.030 &  0.001 &  0.051 &  0.002 &  0.023 &  0.001 \\
\hline
\end{tabular}
}
\end{center}
\vspace{-0.65cm}
\end{table*}

\begin{table*}[t]
\vspace{-0.2cm}
\caption{The results of benchmark datasets with the LinUCB policy and $T=1,200$. We highlight in red bold the estimator with the lowest RMSE and highlight in under line the estimator with the lowest RMSE among estimators that do not use the true logging policy. Estimators with asymptotic normality are marked with $\dagger$, and estimators that do not require the true logging policy are marked with $*$.} 
\label{tbl:appdx:exp_table3}
\begin{center}
\scalebox{0.80}[0.80]{
\begin{tabular}{|l||rr||rr|rr||rr|rr|}
\hline
{\tt mnist}$\ \ \ \ \ \ \ \ \ \ $ &    \multicolumn{2}{|c||}{ADR\ $\dagger*$} &   \multicolumn{2}{c|}{IPW $\dagger$} &   \multicolumn{2}{c||}{AIPW $\dagger$} &    \multicolumn{2}{c|}{DM $*$} &    \multicolumn{2}{c|}{EIPW $*$} \\
\hline
$\alpha$ &      RMSE &      SD &      RMSE &      SD &      RMSE &      SD &      RMSE &      SD &      RMSE &      SD \\
\hline
0.7 &  \underline{\textcolor{red}{\textbf{0.063}}} &  0.003 &  0.095 &  0.010 &  0.114 &  0.018 &  0.204 &  0.007 &  0.218 &  0.021 \\
0.4 &  \underline{\textcolor{red}{\textbf{0.031}}} &  0.002 &  0.061 &  0.004 &  0.100 &  0.011 &  0.227 &  0.010 &  0.136 &  0.015 \\
0.1 &  0.073 &  0.005 &  0.076 &  0.006 &  0.093 &  0.013 &  0.245 &  0.013 & \underline{\textcolor{red}{\textbf{0.059}}} &  0.004 \\
\hline
\end{tabular}
}
\end{center}
\vspace{-0.3cm}
\begin{center}
\scalebox{0.80}[0.80]{
\begin{tabular}{|l||rr||rr|rr||rr|rr|}
\hline
{\tt satimage}$\ \ \ \ $ &    \multicolumn{2}{|c||}{ADR\ $\dagger*$} &   \multicolumn{2}{c|}{IPW $\dagger$} &   \multicolumn{2}{c||}{AIPW $\dagger$} &    \multicolumn{2}{c|}{DM $*$} &    \multicolumn{2}{c|}{EIPW $*$} \\
\hline
$\alpha$ &      RMSE &      SD &      RMSE &      SD &      RMSE &      SD &      RMSE &      SD &      RMSE &      SD \\
\hline
0.7 &  \underline{\textcolor{red}{\textbf{0.015}}} &  0.000 &  0.091 &  0.010 &  0.039 &  0.002 &  0.035 &  0.001 &  0.039 &  0.001 \\
0.4 &  \underline{\textcolor{red}{\textbf{0.016}}} &  0.000 &  0.075 &  0.006 &  0.041 &  0.002 &  0.043 &  0.001 &  0.048 &  0.002 \\
0.1 &  \underline{\textcolor{red}{\textbf{0.019}}} &  0.000 &  0.075 &  0.005 &  0.033 &  0.001 &  0.045 &  0.001 &  0.073 &  0.005 \\
\hline
\end{tabular}
}
\end{center}
\vspace{-0.3cm}
\begin{center}
\scalebox{0.80}[0.80]{
\begin{tabular}{|l||rr||rr|rr||rr|rr|}
\hline
{\tt sensorless} &    \multicolumn{2}{|c||}{ADR\ $\dagger*$} &   \multicolumn{2}{c|}{IPW $\dagger$} &   \multicolumn{2}{c||}{AIPW $\dagger$} &    \multicolumn{2}{c|}{DM $*$} &    \multicolumn{2}{c|}{EIPW $*$} \\
\hline
$\alpha$ &      RMSE &      SD &      RMSE &      SD &      RMSE &      SD &      RMSE &      SD &      RMSE &      SD \\
\hline
0.7 &  \underline{\textcolor{red}{\textbf{0.049}}} &  0.003 &  0.272 &  0.059 &  0.102 &  0.018 &  0.092 &  0.008 &  0.078 &  0.010 \\
0.4 &  \underline{\textcolor{red}{\textbf{0.047}}} &  0.004 &  0.286 &  0.037 &  0.071 &  0.007 &  0.096 &  0.006 &  0.082 &  0.009 \\
0.1 &  \underline{\textcolor{red}{\textbf{0.056}}} &  0.004 &  0.268 &  0.047 &  0.061 &  0.006 &  0.116 &  0.008 &  0.067 &  0.005 \\
\hline
\end{tabular}
}
\end{center}
\vspace{-0.3cm}
\begin{center}
\scalebox{0.80}[0.80]{
\begin{tabular}{|l||rr||rr|rr||rr|rr|}
\hline
{\tt connect-4}$\ \ $ &    \multicolumn{2}{|c||}{ADR\ $\dagger*$} &   \multicolumn{2}{c|}{IPW $\dagger$} &   \multicolumn{2}{c||}{AIPW $\dagger$} &    \multicolumn{2}{c|}{DM $*$} &    \multicolumn{2}{c|}{EIPW $*$} \\
\hline
$\alpha$ &      RMSE &      SD &      RMSE &      SD &      RMSE &      SD &      RMSE &      SD &      RMSE &      SD \\
\hline
0.7 &  \underline{\textcolor{red}{\textbf{0.033}}} &  0.001 &  0.138 &  0.027 &  0.046 &  0.002 &  0.037 &  0.002 &  0.184 &  0.013 \\
0.4 &  \underline{\textcolor{red}{\textbf{0.020}}} &  0.000 &  0.130 &  0.014 &  0.052 &  0.003 &  0.038 &  0.001 &  0.114 &  0.004 \\
0.1 &  \underline{\textcolor{red}{\textbf{0.020}}} &  0.000 &  0.118 &  0.013 &  0.037 &  0.002 &  0.034 &  0.001 &  0.048 &  0.003 \\
\hline
\end{tabular}
}
\end{center}
\vspace{-0.65cm}
\end{table*}

\begin{table*}[t]
\vspace{-0.2cm}
\caption{The results of benchmark datasets with the LinTS policy and $T=800$. We highlight in red bold the estimator with the lowest RMSE and highlight in under line the estimator with the lowest RMSE among estimators that do not use the true logging policy. Estimators with asymptotic normality are marked with $\dagger$, and estimators that do not require the true logging policy are marked with $*$.} 
\label{tbl:appdx:exp_table4}
\begin{center}
\scalebox{0.80}[0.80]{
\begin{tabular}{|l||rr||rr|rr||rr|rr|}
\hline
{\tt mnist}$\ \ \ \ \ \ \ \ \ \ $ &    \multicolumn{2}{|c||}{ADR\ $\dagger*$} &   \multicolumn{2}{c|}{IPW $\dagger$} &   \multicolumn{2}{c||}{AIPW $\dagger$} &    \multicolumn{2}{c|}{DM $*$} &    \multicolumn{2}{c|}{EIPW $*$} \\
\hline
$\alpha$ &      RMSE &      SD &      RMSE &      SD &      RMSE &      SD &      RMSE &      SD &      RMSE &      SD \\
\hline
0.7 &  \underline{\textcolor{red}{\textbf{0.064}}} &  0.006 &  0.098 &  0.011 &  0.150 &  0.019 &  0.289 &  0.026 &  0.172 &  0.019 \\
0.4 &  \underline{0.084} &  0.005 &  \textcolor{red}{\textbf{0.060}} &  0.003 &  0.089 &  0.007 &  0.311 &  0.027 &  0.116 &  0.016 \\
0.1 &  \underline{0.117} &  0.011 &  0.078 &  0.006 &  \textcolor{red}{\textbf{0.061}} &  0.004 &  0.331 &  0.030 &  0.065 &  0.004 \\
\hline
\end{tabular}
}
\end{center}
\vspace{-0.3cm}
\begin{center}
\scalebox{0.80}[0.80]{
\begin{tabular}{|l||rr||rr|rr||rr|rr|}
\hline
{\tt satimage}$\ \ \ \ $ &    \multicolumn{2}{|c||}{ADR\ $\dagger*$} &   \multicolumn{2}{c|}{IPW $\dagger$} &   \multicolumn{2}{c||}{AIPW $\dagger$} &    \multicolumn{2}{c|}{DM $*$} &    \multicolumn{2}{c|}{EIPW $*$} \\
\hline
$\alpha$ &      RMSE &      SD &      RMSE &      SD &      RMSE &      SD &      RMSE &      SD &      RMSE &      SD \\
\hline
0.7 &  \underline{\textcolor{red}{\textbf{0.032}}} &  0.001 &  0.102 &  0.011 &  0.094 &  0.007 &  0.043 &  0.002 &  0.095 &  0.016 \\
0.4 &  \underline{\textcolor{red}{\textbf{0.033}}} &  0.001 &  0.081 &  0.009 &  0.073 &  0.005 &  0.048 &  0.002 &  0.098 &  0.015 \\
0.1 &  \underline{\textcolor{red}{\textbf{0.028}}} &  0.001 &  0.075 &  0.004 &  0.074 &  0.005 &  0.043 &  0.001 &  0.048 &  0.003 \\
\hline
\end{tabular}
}
\end{center}
\vspace{-0.3cm}
\begin{center}
\scalebox{0.80}[0.80]{
\begin{tabular}{|l||rr||rr|rr||rr|rr|}
\hline
{\tt sensorless} &    \multicolumn{2}{|c||}{ADR\ $\dagger*$} &   \multicolumn{2}{c|}{IPW $\dagger$} &   \multicolumn{2}{c||}{AIPW $\dagger$} &    \multicolumn{2}{c|}{DM $*$} &    \multicolumn{2}{c|}{EIPW $*$} \\
\hline
$\alpha$ &      RMSE &      SD &      RMSE &      SD &      RMSE &      SD &      RMSE &      SD &      RMSE &      SD \\
\hline
0.7 &  \underline{\textcolor{red}{\textbf{0.055}}} &  0.003 &  0.252 &  0.053 &  0.094 &  0.011 &  0.158 &  0.015 &  0.112 &  0.014 \\
0.4 &  0.084 &  0.006 &  0.267 &  0.033 &  0.078 &  0.007 &  0.177 &  0.024 &  \underline{\textcolor{red}{\textbf{0.059}}} &  0.004 \\
0.1 &  0.086 &  0.009 &  0.247 &  0.041 &  0.104 &  0.017 &  0.164 &  0.009 &  \underline{\textcolor{red}{\textbf{0.079}}} &  0.006 \\
\hline
\end{tabular}
}
\end{center}
\vspace{-0.3cm}
\begin{center}
\scalebox{0.80}[0.80]{
\begin{tabular}{|l||rr||rr|rr||rr|rr|}
\hline
{\tt connect-4}$\ \ $ &    \multicolumn{2}{|c||}{ADR\ $\dagger*$} &   \multicolumn{2}{c|}{IPW $\dagger$} &   \multicolumn{2}{c||}{AIPW $\dagger$} &    \multicolumn{2}{c|}{DM $*$} &    \multicolumn{2}{c|}{EIPW $*$} \\
\hline
$\alpha$ &      RMSE &      SD &      RMSE &      SD &      RMSE &      SD &      RMSE &      SD &      RMSE &      SD \\
\hline
0.7 &  \underline{\textcolor{red}{\textbf{0.021}}} &  0.000 &  0.129 &  0.019 &  0.057 &  0.003 &  0.041 &  0.001 &  0.115 &  0.004 \\
0.4 &  \underline{\textcolor{red}{\textbf{0.025}}} &  0.001 &  0.130 &  0.018 &  0.055 &  0.003 &  0.050 &  0.003 &  0.064 &  0.004 \\
0.1 &  \underline{\textcolor{red}{\textbf{0.018}}} &  0.000 &  0.107 &  0.011 &  0.054 &  0.004 &  0.054 &  0.002 &  0.029 &  0.001 \\
\hline
\end{tabular}
}
\end{center}
\vspace{-0.65cm}
\end{table*}

\begin{table*}[t]
\vspace{-0.2cm}
\caption{The results of benchmark datasets with the LinTS policy and $T=1,000$. We highlight in red bold the estimator with the lowest RMSE and highlight in under line the estimator with the lowest RMSE among estimators that do not use the true logging policy. Estimators with asymptotic normality are marked with $\dagger$, and estimators that do not require the true logging policy are marked with $*$.} 
\label{tbl:appdx:exp_table5}
\begin{center}
\scalebox{0.80}[0.80]{
\begin{tabular}{|l||rr||rr|rr||rr|rr|}
\hline
{\tt mnist}$\ \ \ \ \ \ \ \ \ \ $ &    \multicolumn{2}{|c||}{ADR\ $\dagger*$} &   \multicolumn{2}{c|}{IPW $\dagger$} &   \multicolumn{2}{c||}{AIPW $\dagger$} &    \multicolumn{2}{c|}{DM $*$} &    \multicolumn{2}{c|}{EIPW $*$} \\
\hline
$\alpha$ &      RMSE &      SD &      RMSE &      SD &      RMSE &      SD &      RMSE &      SD &      RMSE &      SD \\
\hline
0.7 &  \underline{\textcolor{red}{\textbf{0.053}}} &  0.006 &  0.097 &  0.010 &  0.140 &  0.023 &  0.248 &  0.008 &  0.184 &  0.020 \\
0.4 &  \underline{0.077} &  0.008 &  \textcolor{red}{\textbf{0.070}} &  0.005 &  0.098 &  0.011 &  0.276 &  0.028 &  0.099 &  0.011 \\
0.1 & 0.099 &  0.011 &  \textcolor{red}{\textbf{0.076}} &  0.006 &  0.091 &  0.008 &  0.296 &  0.026 &   \underline{0.088} &  0.007 \\
\hline
\end{tabular}
}
\end{center}
\vspace{-0.3cm}
\begin{center}
\scalebox{0.80}[0.80]{
\begin{tabular}{|l||rr||rr|rr||rr|rr|}
\hline
{\tt satimage}$\ \ \ \ $ &    \multicolumn{2}{|c||}{ADR\ $\dagger*$} &   \multicolumn{2}{c|}{IPW $\dagger$} &   \multicolumn{2}{c||}{AIPW $\dagger$} &    \multicolumn{2}{c|}{DM $*$} &    \multicolumn{2}{c|}{EIPW $*$} \\
\hline
$\alpha$ &      RMSE &      SD &      RMSE &      SD &      RMSE &      SD &      RMSE &      SD &      RMSE &      SD \\
\hline
0.7 &  \underline{\textcolor{red}{\textbf{0.016}}} &  0.000 &  0.096 &  0.011 &  0.050 &  0.004 &  0.044 &  0.001 &  0.056 &  0.003 \\
0.4 &  \underline{\textcolor{red}{\textbf{0.021}}} &  0.001 &  0.077 &  0.007 &  0.041 &  0.002 &  0.048 &  0.001 &  0.063 &  0.009 \\
0.1 &  \underline{\textcolor{red}{\textbf{0.026}}} &  0.001 &  0.079 &  0.005 &  0.087 &  0.011 &  0.048 &  0.002 &  0.091 &  0.020 \\
\hline
\end{tabular}
}
\end{center}
\vspace{-0.3cm}
\begin{center}
\scalebox{0.80}[0.80]{
\begin{tabular}{|l||rr||rr|rr||rr|rr|}
\hline
{\tt sensorless} &    \multicolumn{2}{|c||}{ADR\ $\dagger*$} &   \multicolumn{2}{c|}{IPW $\dagger$} &   \multicolumn{2}{c||}{AIPW $\dagger$} &    \multicolumn{2}{c|}{DM $*$} &    \multicolumn{2}{c|}{EIPW $*$} \\
\hline
$\alpha$ &      RMSE &      SD &      RMSE &      SD &      RMSE &      SD &      RMSE &      SD &      RMSE &      SD \\
\hline
0.7 &  \underline{\textcolor{red}{\textbf{0.040}}} &  0.003 &  0.273 &  0.057 &  0.083 &  0.009 &  0.114 &  0.008 &  0.087 &  0.008 \\
0.4 &  \underline{\textcolor{red}{\textbf{0.034}}} &  0.001 &  0.291 &  0.035 &  0.091 &  0.010 &  0.099 &  0.006 &  0.070 &  0.007 \\
0.1 &  \underline{\textcolor{red}{\textbf{0.060}}} &  0.002 &  0.268 &  0.042 &  0.075 &  0.008 &  0.146 &  0.011 &  0.069 &  0.006 \\
\hline
\end{tabular}
}
\end{center}
\vspace{-0.3cm}
\begin{center}
\scalebox{0.80}[0.80]{
\begin{tabular}{|l||rr||rr|rr||rr|rr|}
\hline
{\tt connect-4}$\ \ $ &    \multicolumn{2}{|c||}{ADR\ $\dagger*$} &   \multicolumn{2}{c|}{IPW $\dagger$} &   \multicolumn{2}{c||}{AIPW $\dagger$} &    \multicolumn{2}{c|}{DM $*$} &    \multicolumn{2}{c|}{EIPW $*$} \\
\hline
$\alpha$ &      RMSE &      SD &      RMSE &      SD &      RMSE &      SD &      RMSE &      SD &      RMSE &      SD \\
\hline
0.7 &  \underline{\textcolor{red}{\textbf{0.014}}} &  0.000 &  0.132 &  0.020 &  0.059 &  0.003 &  0.045 &  0.001 &  0.110 &  0.006 \\
0.4 &  \underline{\textcolor{red}{\textbf{0.019}}} &  0.000 &  0.136 &  0.017 &  0.028 &  0.001 &  0.046 &  0.002 &  0.066 &  0.004 \\
0.1 &  \underline{\textcolor{red}{\textbf{0.016}}} &  0.000 &  0.114 &  0.012 &  0.023 &  0.001 &  0.050 &  0.002 &  0.019 &  0.000 \\
\hline
\end{tabular}
}
\end{center}
\vspace{-0.65cm}
\end{table*}

\begin{table*}[t]
\vspace{-0.2cm}
\caption{The results of benchmark datasets with the LinTS policy and $T=1,200$. We highlight in red bold the estimator with the lowest RMSE and highlight in under line the estimator with the lowest RMSE among estimators that do not use the true logging policy. Estimators with asymptotic normality are marked with $\dagger$, and estimators that do not require the true logging policy are marked with $*$.} 
\label{tbl:appdx:exp_table6}
\begin{center}
\scalebox{0.80}[0.80]{
\begin{tabular}{|l||rr||rr|rr||rr|rr|}
\hline
{\tt mnist}$\ \ \ \ \ \ \ \ \ \ $ &    \multicolumn{2}{|c||}{ADR\ $\dagger*$} &   \multicolumn{2}{c|}{IPW $\dagger$} &   \multicolumn{2}{c||}{AIPW $\dagger$} &    \multicolumn{2}{c|}{DM $*$} &    \multicolumn{2}{c|}{EIPW $*$} \\
\hline
$\alpha$ &      RMSE &      SD &      RMSE &      SD &      RMSE &      SD &      RMSE &      SD &      RMSE &      SD \\
\hline
0.7 &  \underline{\textcolor{red}{\textbf{0.038}}} &  0.003 &  0.095 &  0.010 &  0.095 &  0.010 &  0.234 &  0.013 &  0.203 &  0.028 \\
0.4 &  \underline{\textcolor{red}{\textbf{0.048}}} &  0.003 &  0.066 &  0.004 &  0.099 &  0.012 &  0.245 &  0.013 &  0.117 &  0.011 \\
0.1 &  0.058 &  0.002 &  0.073 &  0.006 &  \underline{\textcolor{red}{\textbf{0.052}}} &  0.003 &  0.257 &  0.021 &  0.069 &  0.006 \\
\hline
\end{tabular}
}
\end{center}
\vspace{-0.3cm}
\begin{center}
\scalebox{0.80}[0.80]{
\begin{tabular}{|l||rr||rr|rr||rr|rr|}
\hline
{\tt satimage}$\ \ \ \ $ &    \multicolumn{2}{|c||}{ADR\ $\dagger*$} &   \multicolumn{2}{c|}{IPW $\dagger$} &   \multicolumn{2}{c||}{AIPW $\dagger$} &    \multicolumn{2}{c|}{DM $*$} &    \multicolumn{2}{c|}{EIPW $*$} \\
\hline
$\alpha$ &      RMSE &      SD &      RMSE &      SD &      RMSE &      SD &      RMSE &      SD &      RMSE &      SD \\
\hline
0.7 &  \underline{\textcolor{red}{\textbf{0.012}}} &  0.000 &  0.096 &  0.010 &  0.058 &  0.004 &  0.037 &  0.001 &  0.059 &  0.003 \\
0.4 &  \underline{\textcolor{red}{\textbf{0.020}}} &  0.001 &  0.079 &  0.007 &  0.036 &  0.001 &  0.043 &  0.002 &  0.064 &  0.010 \\
0.1 &  \underline{\textcolor{red}{\textbf{0.027}}} &  0.001 &  0.079 &  0.005 &  0.045 &  0.002 &  0.050 &  0.001 &  0.050 &  0.004 \\
\hline
\end{tabular}
}
\end{center}
\vspace{-0.3cm}
\begin{center}
\scalebox{0.80}[0.80]{
\begin{tabular}{|l||rr||rr|rr||rr|rr|}
\hline
{\tt sensorless} &    \multicolumn{2}{|c||}{ADR\ $\dagger*$} &   \multicolumn{2}{c|}{IPW $\dagger$} &   \multicolumn{2}{c||}{AIPW $\dagger$} &    \multicolumn{2}{c|}{DM $*$} &    \multicolumn{2}{c|}{EIPW $*$} \\
\hline
$\alpha$ &      RMSE &      SD &      RMSE &      SD &      RMSE &      SD &      RMSE &      SD &      RMSE &      SD \\
\hline
0.7 &  \underline{\textcolor{red}{\textbf{0.027}}} &  0.001 &  0.257 &  0.043 &  0.063 &  0.004 &  0.102 &  0.005 &  0.053 &  0.004 \\
0.4 &  \underline{\textcolor{red}{\textbf{0.038}}} &  0.002 &  0.277 &  0.036 &  0.059 &  0.003 &  0.106 &  0.006 &  0.087 &  0.010 \\
0.1 &  0.046 &  0.002 &  0.252 &  0.035 &  \textcolor{red}{\textbf{0.034}} &  0.002 &  0.138 &  0.008 &  \underline{0.045} &  0.003 \\
\hline
\end{tabular}
}
\end{center}
\vspace{-0.3cm}
\begin{center}
\scalebox{0.80}[0.80]{
\begin{tabular}{|l||rr||rr|rr||rr|rr|}
\hline
{\tt connect-4}$\ \ $ &    \multicolumn{2}{|c||}{ADR\ $\dagger*$} &   \multicolumn{2}{c|}{IPW $\dagger$} &   \multicolumn{2}{c||}{AIPW $\dagger$} &    \multicolumn{2}{c|}{DM $*$} &    \multicolumn{2}{c|}{EIPW $*$} \\
\hline
$\alpha$ &      RMSE &      SD &      RMSE &      SD &      RMSE &      SD &      RMSE &      SD &      RMSE &      SD \\
\hline
0.7 &  \underline{\textcolor{red}{\textbf{0.014}}} &  0.000 &  0.376 &  0.244 &  0.044 &  0.002 &  0.029 &  0.001 &  0.107 &  0.007 \\
0.4 &  \underline{\textcolor{red}{\textbf{0.021}}} &  0.000 &  0.395 &  0.283 &  0.025 &  0.001 &  0.033 &  0.001 &  0.067 &  0.004 \\
0.1 & \underline{\textcolor{red}{\textbf{0.215}}} &  0.138 &  0.385 &  0.273 &  \textcolor{red}{\textbf{0.215}} &  0.138 &  0.217 &  0.137 & \underline{ \textcolor{red}{\textbf{0.215}}} &  0.138 \\
\hline
\end{tabular}
}
\end{center}
\vspace{-0.65cm}
\end{table*}

\begin{table*}[t]
\vspace{-0.2cm}
\caption{The results of benchmark datasets with the i.i.d samples and $T=800$. We highlight in red bold the estimator with the lowest RMSE and highlight in under line the estimator with the lowest RMSE among estimators that do not use the true logging policy. Estimators with asymptotic normality are marked with $\dagger$, and estimators that do not require the true logging policy are marked with $*$.} 
\label{tbl:appdx:exp_table7}
\begin{center}
\scalebox{0.80}[0.80]{
\begin{tabular}{|l||rr||rr|rr||rr|rr|}
\hline
{\tt mnist}$\ \ \ \ \ \ \ \ \ \ $ &    \multicolumn{2}{|c||}{ADR\ $\dagger*$} &   \multicolumn{2}{c|}{IPW $\dagger$} &   \multicolumn{2}{c||}{AIPW $\dagger$} &    \multicolumn{2}{c|}{DM $\dagger *$} &    \multicolumn{2}{c|}{EIPW $\dagger *$} \\
\hline
$\alpha$ &      RMSE &      SD &      RMSE &      SD &      RMSE &      SD &      RMSE &      SD &      RMSE &      SD \\
\hline
0.7 &  0.195 &  0.090 &  \textcolor{red}{\textbf{0.078}} &  0.061 &  \textcolor{red}{\textbf{0.078}} &  0.061 &  \underline{\textcolor{red}{\textbf{0.078}}} &  0.061 &  2.085 &  9.013 \\
0.4 &  0.192 &  0.088 &  \textcolor{red}{\textbf{0.078}} &  0.061 &  0.079 &  0.061 &  \underline{0.080} &  0.061 &  1.187 &  2.913 \\
0.1 &  \underline{0.105} &  0.061 &  \textcolor{red}{\textbf{0.080}} &  0.061 &  0.083 &  0.061 &  0.108 &  0.061 &  0.298 &  0.189 \\
\hline
\end{tabular}
}
\end{center}
\vspace{-0.3cm}
\begin{center}
\scalebox{0.80}[0.80]{
\begin{tabular}{|l||rr||rr|rr||rr|rr|}
\hline
{\tt satimage}$\ \ \ \ $ &    \multicolumn{2}{|c||}{ADR\ $\dagger*$} &   \multicolumn{2}{c|}{IPW $\dagger$} &   \multicolumn{2}{c||}{AIPW $\dagger$} &    \multicolumn{2}{c|}{DM $\dagger *$} &    \multicolumn{2}{c|}{EIPW $\dagger *$} \\
\hline
$\alpha$ &      RMSE &      SD &      RMSE &      SD &      RMSE &      SD &      RMSE &      SD &      RMSE &      SD \\
\hline
0.7 &  0.025 &  0.001 &  \textcolor{red}{\textbf{0.003}} &  0.00 &  0.008 &  0.000 &  \underline{0.006} &  0.00 &  0.076 &  0.014 \\
0.4 &  0.019 &  0.001 &  \textcolor{red}{\textbf{0.006}} &  0.00 &  0.017 &  0.001 &  \underline{0.012} &  0.00 &  0.074 &  0.012 \\
0.1 &  \underline{\textcolor{red}{\textbf{0.079}}} &  0.060 &  \textcolor{red}{\textbf{0.079}} &  0.06 &  0.083 &  0.060 &  0.081 &  0.06 &  0.098 &  0.061 \\
\hline
\end{tabular}
}
\end{center}
\vspace{-0.3cm}
\begin{center}
\scalebox{0.80}[0.80]{
\begin{tabular}{|l||rr||rr|rr||rr|rr|}
\hline
{\tt sensorless} &    \multicolumn{2}{|c||}{ADR\ $\dagger*$} &   \multicolumn{2}{c|}{IPW $\dagger$} &   \multicolumn{2}{c||}{AIPW $\dagger$} &    \multicolumn{2}{c|}{DM $\dagger *$} &    \multicolumn{2}{c|}{EIPW $\dagger *$} \\
\hline
$\alpha$ &      RMSE &      SD &      RMSE &      SD &      RMSE &      SD &      RMSE &      SD &      RMSE &      SD \\
\hline
0.7 &  0.032 &  0.003 &  \textcolor{red}{\textbf{0.004}} &  0.000 &  0.005 &  0.000 &  \underline{\textcolor{red}{\textbf{0.004}}} &  0.000 &  0.193 &  0.099 \\
0.4 &  0.031 &  0.003 &  \textcolor{red}{\textbf{0.007}} &  0.000 &  0.008 &  0.000 &  \underline{0.015} &  0.001 &  0.154 &  0.064 \\
0.1 &  \underline{0.052} &  0.021 &  \textcolor{red}{\textbf{0.046}} &  0.021 &  0.048 &  0.021 &  0.057 &  0.021 &  0.121 &  0.050 \\
\hline
\end{tabular}
}
\end{center}
\vspace{-0.3cm}
\begin{center}
\scalebox{0.80}[0.80]{
\begin{tabular}{|l||rr||rr|rr||rr|rr|}
\hline
{\tt connect-4}$\ \ $ &    \multicolumn{2}{|c||}{ADR\ $\dagger*$} &   \multicolumn{2}{c|}{IPW $\dagger$} &   \multicolumn{2}{c||}{AIPW $\dagger$} &    \multicolumn{2}{c|}{DM $\dagger *$} &    \multicolumn{2}{c|}{EIPW $\dagger *$} \\
\hline
$\alpha$ &      RMSE &      SD &      RMSE &      SD &      RMSE &      SD &      RMSE &      SD &      RMSE &      SD \\
\hline
0.7 &  0.080 &  0.047 &  \textcolor{red}{\textbf{0.069}} &  0.047 &  \textcolor{red}{\textbf{0.069}} &  0.047 & \underline{ \textcolor{red}{\textbf{0.069}}} &  0.047 &  0.234 &  0.086 \\
0.4 &  0.074 &  0.047 &  \textcolor{red}{\textbf{0.069}} &  0.047 &  0.071 &  0.047 &  \underline{0.071} &  0.047 &  0.141 &  0.052 \\
0.1 &  \underline{\textcolor{red}{\textbf{0.070}}} &  0.047 &  \textcolor{red}{\textbf{0.070}} &  0.047 &  0.073 &  0.047 &  0.074 &  0.047 &  0.076 &  0.047 \\
\hline
\end{tabular}
}
\end{center}
\vspace{-0.65cm}
\end{table*}

\begin{table*}[t]
\vspace{-0.2cm}
\caption{The results of benchmark datasets with the i.i.d samples and $T=1,000$. We highlight in red bold the estimator with the lowest RMSE and highlight in under line the estimator with the lowest RMSE among estimators that do not use the true logging policy. Estimators with asymptotic normality are marked with $\dagger$, and estimators that do not require the true logging policy are marked with $*$.} 
\label{tbl:appdx:exp_table8}
\begin{center}
\scalebox{0.80}[0.80]{
\begin{tabular}{|l||rr||rr|rr||rr|rr|}
\hline
{\tt mnist}$\ \ \ \ \ \ \ \ \ \ $ &    \multicolumn{2}{|c||}{ADR\ $\dagger*$} &   \multicolumn{2}{c|}{IPW $\dagger$} &   \multicolumn{2}{c||}{AIPW $\dagger$} &    \multicolumn{2}{c|}{DM $\dagger *$} &    \multicolumn{2}{c|}{EIPW $\dagger *$} \\
\hline
$\alpha$ &      RMSE &      SD &      RMSE &      SD &      RMSE &      SD &      RMSE &      SD &      RMSE &      SD \\
\hline
0.7 &  0.367 &  0.062 &  0.006 &  0.000 &  0.011 &  0.000 & \underline{ \textcolor{red}{\textbf{0.005}}} &  0.000 &  4.834 &  2.294 \\
0.4 &  0.373 &  0.037 &  \textcolor{red}{\textbf{0.016}} &  0.000 &  0.031 &  0.001 &  \underline{0.038} &  0.001 &  2.672 &  0.905 \\
0.1 &  0.165 &  0.012 &  \textcolor{red}{\textbf{0.029}} &  0.001 &  0.049 &  0.002 &  \underline{0.155} &  0.005 &  0.671 &  0.098 \\
\hline
\end{tabular}
}
\end{center}
\vspace{-0.3cm}
\begin{center}
\scalebox{0.80}[0.80]{
\begin{tabular}{|l||rr||rr|rr||rr|rr|}
\hline
{\tt satimage}$\ \ \ \ $ &    \multicolumn{2}{|c||}{ADR\ $\dagger*$} &   \multicolumn{2}{c|}{IPW $\dagger$} &   \multicolumn{2}{c||}{AIPW $\dagger$} &    \multicolumn{2}{c|}{DM $\dagger *$} &    \multicolumn{2}{c|}{EIPW $\dagger *$} \\
\hline
$\alpha$ &      RMSE &      SD &      RMSE &      SD &      RMSE &      SD &      RMSE &      SD &      RMSE &      SD \\
\hline
0.7 &  0.043 &  0.002 &  0.006 &  0.000 &  0.013 &  0.000 & \underline{ \textcolor{red}{\textbf{0.010}}} &  0.000 &  0.136 &  0.010 \\
0.4 &  0.038 &  0.001 &  \textcolor{red}{\textbf{0.013}} &  0.000 &  0.035 &  0.001 &  \underline{0.018} &  0.000 &  0.148 &  0.012 \\
0.1 &  \underline{0.028} &  0.001 &  \textcolor{red}{\textbf{0.026}} &  0.001 &  0.034 &  0.001 &  0.044 &  0.002 &  0.113 &  0.019 \\
\hline
\end{tabular}
}
\end{center}
\vspace{-0.3cm}
\begin{center}
\scalebox{0.80}[0.80]{
\begin{tabular}{|l||rr||rr|rr||rr|rr|}
\hline
{\tt sensorless} &    \multicolumn{2}{|c||}{ADR\ $\dagger*$} &   \multicolumn{2}{c|}{IPW $\dagger$} &   \multicolumn{2}{c||}{AIPW $\dagger$} &    \multicolumn{2}{c|}{DM $\dagger *$} &    \multicolumn{2}{c|}{EIPW $\dagger *$} \\
\hline
$\alpha$ &      RMSE &      SD &      RMSE &      SD &      RMSE &      SD &      RMSE &      SD &      RMSE &      SD \\
\hline
0.7 &  0.073 &  0.005 &  \textcolor{red}{\textbf{0.005}} &  0.000 &  0.012 &  0.000 &  \underline{0.009} &  0.000 &  0.454 &  0.071 \\
0.4 &  0.077 &  0.005 &  \textcolor{red}{\textbf{0.009}} &  0.000 &  0.031 &  0.001 &  \underline{0.023} &  0.001 &  0.390 &  0.045 \\
0.1 &  \underline{0.066} &  0.003 &  \textcolor{red}{\textbf{0.042}} &  0.003 &  0.047 &  0.004 &  0.083 &  0.007 &  0.257 &  0.066 \\
\hline
\end{tabular}
}
\end{center}
\vspace{-0.3cm}
\begin{center}
\scalebox{0.80}[0.80]{
\begin{tabular}{|l||rr||rr|rr||rr|rr|}
\hline
{\tt connect-4}$\ \ $ &    \multicolumn{2}{|c||}{ADR\ $\dagger*$} &   \multicolumn{2}{c|}{IPW $\dagger$} &   \multicolumn{2}{c||}{AIPW $\dagger$} &    \multicolumn{2}{c|}{DM $\dagger *$} &    \multicolumn{2}{c|}{EIPW $\dagger *$} \\
\hline
$\alpha$ &      RMSE &      SD &      RMSE &      SD &      RMSE &      SD &      RMSE &      SD &      RMSE &      SD \\
\hline
0.7 &  0.057 &  0.002 &  \textcolor{red}{\textbf{0.009}} &  0.000 &  0.014 &  0.000 &  \underline{0.013} &  0.000 &  0.396 &  0.020 \\
0.4 &  0.040 &  0.001 &  \textcolor{red}{\textbf{0.008}} &  0.000 &  0.021 &  0.001 &  \underline{0.019} &  0.000 &  0.218 &  0.010 \\
0.1 &  \underline{0.027} &  0.001 &  \textcolor{red}{\textbf{0.024}} &  0.001 &  0.047 &  0.003 &  0.042 &  0.002 &  0.056 &  0.003 \\
\hline
\end{tabular}
}
\end{center}
\vspace{-0.65cm}
\end{table*}

\begin{table*}[t]
\vspace{-0.2cm}
\caption{The results of benchmark datasets with the i.i.d samples and $T=1,200$. We highlight in red bold the estimator with the lowest RMSE and highlight in under line the estimator with the lowest RMSE among estimators that do not use the true logging policy. Estimators with asymptotic normality are marked with $\dagger$, and estimators that do not require the true logging policy are marked with $*$.} 
\label{tbl:appdx:exp_table9}
\begin{center}
\scalebox{0.80}[0.80]{
\begin{tabular}{|l||rr||rr|rr||rr|rr|}
\hline
{\tt mnist}$\ \ \ \ \ \ \ \ \ \ $ &    \multicolumn{2}{|c||}{ADR\ $\dagger*$} &   \multicolumn{2}{c|}{IPW $\dagger$} &   \multicolumn{2}{c||}{AIPW $\dagger$} &    \multicolumn{2}{c|}{DM $\dagger *$} &    \multicolumn{2}{c|}{EIPW $\dagger *$} \\
\hline
$\alpha$ &      RMSE &      SD &      RMSE &      SD &      RMSE &      SD &      RMSE &      SD &      RMSE &      SD \\
\hline
0.7 &  0.310 &  0.036 &  0.006 &  0.000 &  0.013 &  0.000 & \underline{ \textcolor{red}{\textbf{0.005}}} &  0.000 &  4.709 &  1.037 \\
0.4 &  0.346 &  0.033 &  0.012 &  0.000 &  \textcolor{red}{\textbf{0.020}} &  0.001 &  \underline{0.040} &  0.001 &  2.735 &  0.461 \\
0.1 &  \underline{0.143} &  0.013 &  \textcolor{red}{\textbf{0.036}} &  0.001 &  0.076 &  0.007 &  0.161 &  0.006 &  0.608 &  0.137 \\
\hline
\end{tabular}
}
\end{center}
\vspace{-0.3cm}
\begin{center}
\scalebox{0.80}[0.80]{
\begin{tabular}{|l||rr||rr|rr||rr|rr|}
\hline
{\tt satimage}$\ \ \ \ $ &    \multicolumn{2}{|c||}{ADR\ $\dagger*$} &   \multicolumn{2}{c|}{IPW $\dagger$} &   \multicolumn{2}{c||}{AIPW $\dagger$} &    \multicolumn{2}{c|}{DM $\dagger *$} &    \multicolumn{2}{c|}{EIPW $\dagger *$} \\
\hline
$\alpha$ &      RMSE &      SD &      RMSE &      SD &      RMSE &      SD &      RMSE &      SD &      RMSE &      SD \\
\hline
0.7 &  0.038 &  0.001 &  \textcolor{red}{\textbf{0.005}} &  0.0 &  0.015 &  0.000 &  \underline{0.012} &  0.000 &  0.114 &  0.004 \\
0.4 &  0.029 &  0.001 &  \textcolor{red}{\textbf{0.011}} &  0.0 &  0.019 &  0.000 &  \underline{0.016} &  0.000 &  0.159 &  0.011 \\
0.1 &  \underline{0.021} &  0.001 &  \textcolor{red}{\textbf{0.014}} &  0.0 &  0.048 &  0.002 &  0.044 &  0.001 &  0.092 &  0.007 \\
\hline
\end{tabular}
}
\end{center}
\vspace{-0.3cm}
\begin{center}
\scalebox{0.80}[0.80]{
\begin{tabular}{|l||rr||rr|rr||rr|rr|}
\hline
{\tt sensorless} &    \multicolumn{2}{|c||}{ADR\ $\dagger*$} &   \multicolumn{2}{c|}{IPW $\dagger$} &   \multicolumn{2}{c||}{AIPW $\dagger$} &    \multicolumn{2}{c|}{DM $\dagger *$} &    \multicolumn{2}{c|}{EIPW $\dagger *$} \\
\hline
$\alpha$ &      RMSE &      SD &      RMSE &      SD &      RMSE &      SD &      RMSE &      SD &      RMSE &      SD \\
\hline
0.7 &  0.096 &  0.007 &  \textcolor{red}{\textbf{0.007}} &  0.000 &  0.012 &  0.000 &  \underline{0.008} &  0.000 &  0.481 &  0.083 \\
0.4 &  0.097 &  0.007 &  \textcolor{red}{\textbf{0.014}} &  0.000 &  0.027 &  0.001 &  \underline{0.025} &  0.000 &  0.407 &  0.058 \\
0.1 &  \underline{0.054} &  0.002 &  \textcolor{red}{\textbf{0.023}} &  0.001 &  0.038 &  0.002 &  0.065 &  0.002 &  0.208 &  0.025 \\
\hline
\end{tabular}
}
\end{center}
\vspace{-0.3cm}
\begin{center}
\scalebox{0.80}[0.80]{
\begin{tabular}{|l||rr||rr|rr||rr|rr|}
\hline
{\tt connect-4}$\ \ $ &    \multicolumn{2}{|c||}{ADR\ $\dagger*$} &   \multicolumn{2}{c|}{IPW $\dagger$} &   \multicolumn{2}{c||}{AIPW $\dagger$} &    \multicolumn{2}{c|}{DM $\dagger *$} &    \multicolumn{2}{c|}{EIPW $\dagger *$} \\
\hline
$\alpha$ &      RMSE &      SD &      RMSE &      SD &      RMSE &      SD &      RMSE &      SD &      RMSE &      SD \\
\hline
0.7 &  0.060 &  0.002 &  \textcolor{red}{\textbf{0.005}} &  0.0 &  0.012 &  0.000 &  \underline{0.009} &  0.000 &  0.378 &  0.013 \\
0.4 &  0.043 &  0.002 &  \textcolor{red}{\textbf{0.010}} &  0.0 &  0.014 &  0.000 & \underline{0.023} &  0.000 &  0.211 &  0.012 \\
0.1 &  \underline{\textcolor{red}{\textbf{0.017}}} &  0.000 &  \textcolor{red}{\textbf{0.017}} &  0.0 &  0.037 &  0.002 &  0.035 &  0.001 &  0.057 &  0.003 \\
\hline
\end{tabular}
}
\end{center}
\vspace{-0.65cm}
\end{table*}

\end{document}